\newtheorem{theorem}{Theorem}
\newtheorem{lemma}{Lemma}
\newtheorem{corollary}{Corollary}
\newtheorem{definition}{Definition}
\newtheorem{condition}[theorem]{Condition}
\newcommand{\BigOL}[1]{\tilde{\mathcal{O}}\left(#1\right)}
\newcommand{\intrange}{\ .. \ }
\DeclareMathOperator*{\argmax}{arg\,max}
\DeclareMathOperator*{\argmin}{arg\,min}
\newtheoremstyle{eventtheoremstyle}%
  {5pt}
  {3pt}
  {\itshape}
  {}
  {\bfseries}
  {:}
  {.8em}
  {\thmname{#1} #3}
\theoremstyle{eventtheoremstyle}
\newcommand*\circled[1]{\tikz[baseline=(char.base)]{
                \node[shape=circle,draw,inner sep=1pt] (char) {\scriptsize #1};}}
\author{
        Francesco Emanuele Stradi\\
        Politecnico di Milano\\
	\texttt{francescoemanuele.stradi@polimi.it}
	\And
	 Anna Lunghi\\
	Politecnico di Milano\\
	\texttt{anna.lunghi@mail.polimi.it}
	\And
	 Matteo Castiglioni\\
	Politecnico di Milano\\
	\texttt{matteo.castiglioni@polimi.it}
		\And
	 Alberto Marchesi\\
	Politecnico di Milano\\
	\texttt{alberto.marchesi@polimi.it}
	\And
	Nicola Gatti\\
	Politecnico di Milano\\
	\texttt{nicola.gatti@polimi.it}
}
\title{Best-of-Both-Worlds Policy Optimization\\ for CMDPs with Bandit Feedback}
\begin{document}

\maketitle

\begin{abstract}
	We study online learning in \emph{constrained Markov decision processes} (CMDPs) in which rewards and constraints may be either \emph{stochastic} or \emph{adversarial}.
	In such settings,~\citet{stradi24a} proposed the first \emph{best-of-both-worlds} algorithm able to seamlessly handle stochastic and adversarial constraints, achieving optimal regret and constraint violation bounds in both cases.
	This algorithm suffers from two major drawbacks.
	First, it only works under \emph{full feedback}, which severely limits its applicability in practice. 
	Moreover, it relies on optimizing over the space of occupancy measures, which requires solving convex optimization problems, an highly inefficient task.
	In this paper, we provide the first \emph{best-of-both-worlds} algorithm for CMDPs with \emph{bandit feedback}.
	Specifically, when the constraints are \emph{stochastic}, the algorithm achieves $\widetilde{\mathcal{O}}(\sqrt{T})$ regret and constraint violation, while, when they are \emph{adversarial}, it attains $\widetilde{\mathcal{O}}(\sqrt{T})$ constraint violation and a tight fraction of the optimal reward.
	Moreover, our algorithm is based on a policy optimization approach, which is much more efficient than occupancy-measure-based methods.
	%

	%
	%
	%
	%
	%
	%
	%
	%
	%
\end{abstract}
\section{Introduction}\label{introduction}

Most of the learning problems arising in real-world scenarios involve an agent sequentially interacting with an unknown environment.
\emph{Markov decision processes} (MDPs)~\citep{puterman2014markov} have emerged as the most natural models for such interactions, as they allow to capture the fundamental goal of learning an optimal (\emph{i.e.}, reward-maximizing) action-selection policy for the agent.
However, in most of the real-world applications of interest, the learner also has to satisfy some additional requirements.
For instance, in autonomous driving one has to avoid crashing with other cars~\citep{isele2018safe}, in ad auctions one must \emph{not} deplete its allocated budget~\citep{he2021unified}, while in recommendation systems offending items should \emph{not} be presented to the users~\citep{singh2020building}.
In order to capture such requirements, \emph{constrained MDPs} (CMDPs)~\citep{Altman1999ConstrainedMD} have been introduced.
These augment classical MDPs by adding costs that the agent is constrained to keep below some given thresholds.

Over the last years, \emph{online learning} problems in \emph{episodic} CMDPs have received a growing attention (see, \emph{e.g.},~\citep{Exploration_Exploitation} for a seminal work in the field).
These are problems in which the learner repeatedly interacts with the CMDP environment over multiple episodes.
In such settings, the learner's goal is to minimize the \emph{regret} of not always selecting a best-in-hindsight policy that satisfies cost constraints, while at the same time ensuring that the cumulative \emph{violation} of cost constraints does \emph{not} grow too fast over the episodes.
Ideally, one would like that both the regret and the constraint violation grow sublinearly in the number of episodes $T$.

In online learning in episodic MDPs, two different assumptions on how rewards and costs are determined at each episode are possible.
They can be selected either \emph{stochastically} according to fixed (unknown) probability distributions or \emph{adversarially}, meaning that no statistical assumption is made.
Very recently,~\citet{stradi24a} proposed the first \emph{best-of-both-worlds} learning algorithm for online learning in episodic CMDPs.
Such an algorithm is able to seamlessly handle stochastic and adversarial constraints, achieving optimal regret and violation bounds in both cases.
However, it suffers from two major drawbacks.
First, it only works under \emph{full feedback}, meaning that the learning agent needs to observe rewards and costs defined over the whole environment after each episode.
This is extremely unreasonable in practice, where only some feedback along the realized trajectory is usually available.
Moreover, the algorithm works by optimizing over the space of occupancy measures, which requires solving a convex problem at every episode, an highly inefficient task.

\subsection{Original Contributions}

We provide the first \emph{best-of-both-worlds} algorithm for online learning in episodic CMDPs with \emph{bandit feedback}.
This means that, after each episode, the algorithm only needs to observe the realized rewards and costs along the trajectory traversed during that episode, as it is the case in most of the real-world applications.
Moreover, our algorithm is based on a primal-dual policy optimization method, and, thus, it is arguably much more efficient than the algorithm by~\citet{stradi24a}.

When the costs are \emph{stochastic}, our algorithm achieves $\widetilde{O}(\sqrt{T})$ regret and constraint violation, while, when the costs are \emph{adversarial}, it attains $\widetilde{O}(\sqrt{T})$ violation and a fraction of the optimal reward.
These results match those of the full-feedback algorithm by~\citet{stradi24a} and are provably tight.

We also analyze the performances of our algorithm with respect to a parameter $\rho$ measuring by ``how much'' Slater's condition is satisfied.
Specifically, if $\rho$ is arbitrarily small, our algorithm can still guarantee $\widetilde{\mathcal{O}}(T^{3/4})$ regret and violation in the stochastic setting.
%
%
%
Crucially, similarly to the algorithm by~\citet{stradi24a}, ours \emph{does not require} any knowledge of the Slater's parameter $\rho$.
In order to attain the aforementioned result, we show that the Lagrangian multipliers are automatically bounded during the learning dynamics, by employing the \emph{no-interval-regret} property of our primal and dual regret minimizers.
Indeed, we develop the first algorithm for {unconstrained} MDPs with no-interval-regret, under \emph{bandit} feedback. This result may be of independent interest.

Finally, differently from~\citet{stradi24a}, our algorithm can achieve
$\widetilde{\mathcal{O}}(\sqrt{T})$ regret and violation in the adversarial setting, by using a \emph{weaker} baseline that has to satisfy the constraints at every round.

\subsection{Related Works}

In the following, we highlight the works that are mainly related to ours. Due to space constraints, we refer to Appendix~\ref{App: related works} for a complete discussion about related works. 

Online learning in MDPs has been widely studied both under stochastic settings (see~\citep{Near_optimal_Regret_Bounds}) and adversarial ones (see~\citep{neu2010online}). In adversarial settings, two feedbacks are usually investigated. In the \emph{full-feedback setting}, the reward function (or loss) is entirely revealed at the end of the episode. In this case,~\citet{rosenberg19a} show that it is possible to achieve an optimal $\widetilde{\mathcal{O}}(\sqrt{T})$ regret bound. In the more challenging \emph{bandit-feedback setting}, with rewards revealed along the traversed trajectory only, \citet{JinLearningAdversarial2019} show that the optimal bound is still attainable. 

As concerns MDPs with constraints, online learning has been studied mainly in the stochastic setting (see~\citet{Exploration_Exploitation} for a seminal work on the topic). As concerns adversarial settings, namely, when the constraints are \emph{not} assumed to be stochastic, there exists an impossibility result from~\citet{Mannor} that prevents from attaining sublinear regret and violation when the optimal solution is computed with respect to a policy that satisfies the constraints on average. Thus, many works focused on achieving $\widetilde{\mathcal{O}}(\sqrt{T})$ regret and violation for adversarial rewards and stochastic constraints (see~\citep{Upper_Confidence_Primal_Dual}) or non-stationary environments with bounded non-stationarity (see~\citep{ding_non_stationary, Non_stationary_CMDPs,stradi2024learningconstrainedmarkovdecision}).

Recently,~\citet{stradi24a} showed the first best-of-both-worlds (with respect to the constraints) algorithm for CMDPs. Precisely, the authors propose a primal-dual algorithm that optimizes over the occupancy measure space, under \emph{full feedback}. When the constraints are stochastic, the algorithm achieves $\widetilde{\mathcal{O}}(\sqrt{T})$ regret and violation, both in the case in which rewards are adversarial and the one where they are stochastic. Contrariwise, in the adversarial setting, the algorithms attains $\widetilde{\mathcal{O}}(\sqrt{T})$ violatios, and the no-$\alpha$-regret property with $\alpha=\nicefrac{\rho}{H+\rho}$, where $\rho$ is a suitably-defined Slater's parameter. Notice that this result is in line with the best-of-both-worlds results in the single-state online constrained settings, \emph{e.g.},~\citep{Unifying_Framework}.

\section{Problem Setting}
\label{Preliminaries}

In this section, we describe the problem setting and the related notation.

\subsection{Online Constrained Markov Decision Processes}

		\begin{algorithm}[H]
			\caption{Learner-Environment Interaction}
			\label{alg: Learner-Environment Interaction}
			\begin{algorithmic}[1]
				\For{$t=1, \ldots, T$}
				\State $r_t$ and $G_t$ are chosen \textit{stochastically} or \textit{adversarially}
				\State The learner chooses a policy $\pi_{t}: X \times A  \to [0, 1]$
				\State The state is initialized to $x_{0}$
				\For{$h = 0, \ldots,  H-1$}
				\State The learner plays $a_{h} \sim \pi_t(\cdot|x_{h})$
				\State The learner observes $r_t(x_h,a_h)$ and $g_{t,i}(x_h,a_h)$ for all $i\in[m]$
				\State The environment evolves to $x_{h+1}\sim P(\cdot|x_{h},a_{h})$
				\State The learner observes $x_{h+1}$
				\EndFor
				\EndFor
			\end{algorithmic}
		\end{algorithm}

We study \emph{online episodic constrained} MDPs~\citep{Altman1999ConstrainedMD} (CMDPs), which are defined as tuples $M \coloneqq (X, A, P, \left\{r_{t}\right\}_{t=1}^{T}, \left\{G_{t}\right\}_{t=1}^{T})$. Specifically, $T$ is a number of episodes, with $t\in[T]$ denoting a specific episode.
$X, A$ are finite state and action spaces, respectively.
$P : X \times A   \to \Delta(X)$ is the transition function. We denote by $P (x^{\prime} |x, a)$ the probability of going from state $x \in X$ to $x^{\prime} \in X$ by taking action $a \in A$. Notice that, w.l.o.g., in this work we consider \emph{loop-free} CMDPs. Formally, this means that $X$ is partitioned into $H$ layers $X_{0}, \dots, X_{H}$ such that the first and the last layers are singletons, \emph{i.e.}, $X_{0} = \{x_{0}\}$ and $X_{H} = \{x_{H}\}$, and that $P(x^{\prime} |x, a) > 0$ only if $x^\prime \in X_{h+1}$ and $x \in X_h$ for some $h \in [0 \intrange H-1]$.
We observe that any episodic CMDP with horizon $H$ that is \emph{not} loop-free can be cast into a loop-free one by suitably duplicating the state space $H$ times, \emph{i.e.}, a state $x$ is mapped to a set of new states $(x, h)$, where $h \in [0 \intrange H]$.
$\left\{r_{t}\right\}_{t=1}^{T}$ is a sequence of vectors describing the rewards at each episode $t \in [T]$, namely $r_{t}\in[0,1]^{|X\times A|}$. We refer to the reward of a specific state-action pair $x \in X, a \in A$ for an episode $t\in[T]$ as $r_t(x,a)$. Rewards may be either \textit{stochastic}, in that case $r_t$ is a random variable distributed according to a distribution $\mathcal{R}$ for every $t\in[T]$, or chosen by an \emph{adversary}.
 $\left\{G_{t}\right\}_{t=1}^{T}$ is a sequence of constraint matrices describing the $m$ \emph{constraint} violations at each episode $t\in[T]$, namely $G_{t}\in[-1,1]^{|X\times A|\times m}$, where non-positive violation values stand for satisfaction of the constraints. For $i \in [m]$, we refer to the violation of the $i$-th constraint for a specific state-action pair $x \in X, a \in A$ at episode $t\in[T]$ as $g_{t,i}(x,a)$. Constraint violations may be \textit{stochastic}, in that case $G_t$ is a random variable distributed according to a probability distribution $\mathcal{G}$ for every $t\in[T]$, or chosen by an \emph{adversary}.

%

In the online setting, the learner chooses a \emph{policy} $\pi: X  \to \Delta(A)$ at each episode, defining a probability distribution over actions at each state.
For ease of notation, we denote by $\pi(\cdot|x)$ the probability distribution for a state $x \in X$, with $\pi(a|x)$ denoting the probability of action $a \in A$.
In Algorithm~\ref{alg: Learner-Environment Interaction} we provide the interaction between the learner and the environment in a CMDP.
Furthermore, we assume that the learner knows $X$ and $A$, but they do \emph{not} know anything about~$P$.
Notice that the interaction between the learner and the environment is with \emph{bandit feedback}, namely, the rewards and the constraint violations are revealed for the traversed trajectory only.

\paragraph{Occupancy Measures}
%
%
Given a transition function $P$ and a policy $\pi$, the \emph{occupancy measure} $q^{P,\pi} \in [0, 1]^{|X\times A\times X|}$ induced by $P$ and $\pi$ is such that, for all $x \in X_h$, $a \in A$, and $x' \in X_{h+1}$ with $h \in [0 \intrange H-1]$, it holds
$
	q^{P,\pi}(x,a, x^{\prime})= \mathbb{P} \left\{  x_{h}=x, a_{h}=a,x_{h+1}=x^{\prime}|P,\pi \right\}. 
$
Moreover, we also define
$
	q^{P,\pi}(x,a) = \sum_{x^\prime\in X_{h+1}}q^{P,\pi}(x,a,x^{\prime})$ and
	$q^{P,\pi}(x) = \sum_{a\in A}q^{P,\pi}(x,a)
$.
Then, following~\citep{rosenberg19a}, the set of \emph{valid} occupancy measures can be characterized as follows.
Specifically, $q\in [0, 1]^{|X\times A\times X|}$ is a {valid} occupancy measure of an episodic loop-free MDP if and only if the following conditions hold: (i) $\sum_{x \in X_{h}}\sum_{a\in A}\sum_{x^{\prime} \in X_{h+1}} q(x,a,x^{\prime})=1$ for all $ h\in[0,\dots, H-1]$; (ii) $\sum_{a\in A}\sum_{x^{\prime} \in X_{h+1}}q(x,a,x^{\prime})=
		\sum_{x^{\prime}\in X_{h-1}} \sum_{a\in A}q(x^{\prime},a,x) $ for all $h\in[1,\dots, H-1]$ and $x \in X_{h}$; and (iii)
		$P^{q} = P,$
	where $P$ is the transition function of the MDP and $P^q$ is the one induced by $q$. Indeed, any valid occupancy measure $q$ induces a transition function $P^{q}$ and a policy $\pi^{q}$, defined as 
$P^{q}(x^{\prime}|x,a)\coloneqq \frac{q(x,a,x^{\prime})}{q(x,a)}$ and $\pi^{q}(a|x)\coloneqq\frac{q(x,a)}{q(x)}.$

\subsection{Offline CMDPs Baseline}
In the following, we introduce the \emph{offline} CMDP optimization problem, which is needed to define a proper baseline to evaluate the performances of online learning algorithms.
Specifically, we introduce the following linear program parameterized by a reward vector $r$ and a constraint matrix $G$ as follows:
\begin{equation}
	\label{lp:offline_opt}\text{OPT}_{ r, G}:=\begin{cases}
		\max_{  q \in \Delta(M)} &   r^{\top}  q\\
		\,\,\, \textnormal{s.t.} & G^{\top}  q \leq  \underline{0},
	\end{cases}
\end{equation}
where $ q\in[0,1]^{|X\times A|}$ is an occupancy measure and $\Delta(M)$ is the set of valid occupancy measures.

Furthermore, we state the following well-known condition on the offline CMDP problem.
\begin{condition}[Slater's condition]
	Given a constraint matrix $G$, the Slater's condition holds when there is a strictly feasible solution $  q^{\diamond}$ such that~$G^{\top}  q^{\diamond} <  \underline{0}$.
\end{condition}
Notice that, in this work, we do \emph{not} assume that the Slater's condition holds. Indeed, our algorithm still works when a strictly feasible solution does \emph{not} exists. We refer to Section~\ref{sub:rho} for further details on this.
Finally, we define the Lagrangian function of Problem~\eqref{lp:offline_opt}, as follows.
\begin{definition}
	[Lagrangian function] Given a reward vector $  r$ and a constraint matrix $G$, the Lagrangian function $\mathcal{L}_{r, G} : \Delta(M) \times \mathbb{R}^{m}_{\geq 0}   \rightarrow \mathbb{R}$ of Problem~\eqref{lp:offline_opt} is defined as:
	\begin{equation*}
		\mathcal{L}_{  r, G}(  q,   \lambda):=  r^{\top}   q -  \lambda^{\top} (G^{\top}  {q}).
	\end{equation*}
\end{definition}

\subsection{Online Learning Problem}
As ti is standard in the online learning literature~\citep{cesa2006prediction}, we evaluate the performance of learning algorithms by means of the notion of cumulative regret.
\begin{definition}
We define the cumulative regret up to episode $T$ as:
$$
R_{T}:= T \; \textnormal{OPT}_{ \overline{r}, \overline{G}} - \sum_{t=1}^{T}   r_{t}^{\top}   q^{P, \pi_{t}},
$$
where $\overline{r}\coloneqq\mathbb{E}_{r \sim \mathcal{R}}[r]$ if rewards are stochastic and $\overline{r}\coloneqq\frac{1}{T}\sum_{t=1}^{T}  r_{t} $ if they are adversarial, while $\overline{G}\coloneqq\mathbb{E}_{G \sim \mathcal{G}}[G]$ if the constraints are stochastic and $\overline{G}\coloneqq\frac{1}{T}\sum_{t=1}^{T}G_{t} $ if they are adversarial.
\end{definition}
We refer to an optimal occupancy measure, \emph{i.e.}, a feasible one achieving value $\text{OPT}_{\overline{r}, \overline{G}}$, as $q^*$. Thus, we can rewrite the regret definition as $R_T=\sum_{t=1}^T \overline{r}^\top q^*-\sum_{t=1}^Tr_t^\top q^{P,\pi_t}.$
Notice that, in the adversarial setting, the regret is computed with respect to an \emph{optimal feasible strategy in hindsight}. Indeed, an optimal solution is \emph{not} required to satisfy the constraints at every episode $t\in[T]$. 

Next, we define the performance measure related to constraints: the \emph{cumulative constraint violation}.
\begin{definition}
The cumulative constraint violation up to episode $T$ is defined as:
$$
V_{T}:= \max_{i\in[m]}\sum_{t=1}^{T}\left[ G_{t}^{\top} q^{P,\pi_{t}}\right]_i.
$$
\end{definition}
Learning algorithms perform properly when they are capable of keeping both the quantities defined above sublinear in $T$, namely, $R_T=o(T)$ and $V_T=o(T)$. 

For the sake of simplicity, in the rest of the paper, we will refer to $  q^{P,\pi_{t}}$ as $q_t$, omitting the dependence on transition unction $P$ and policy $\pi$.

\subsection{Feasibility}
\label{sub:rho}
We introduce a problem-specific parameter of Problem~\eqref{lp:offline_opt}, called $\rho\in\left[0,H\right]$, which identifies by ``how much'' Slater's condition is satisfied. Formally:
\begin{itemize}
 \item when the constraints are selected \textit{stochastically}, namely, they are chosen from a fixed distribution, the parameter $\rho$ is defined as $\rho:=\max_{q\in\Delta(M)}\min_{i\in[m]}-\left[\overline{G}^{\top}q\right]_i.$
\item when the constraints are chosen \textit{adversarially}, namely, no statistical assumption is made, the parameter $\rho$ is defined as $\rho:=\max_{q\in\Delta(M)}\min_{t\in[T]}\min_{i\in[m]}-\left[G_{t}^{\top}q\right]_i.$
\end{itemize}

Furthermore, we denote the occupancy measure $q\in\Delta(M)$ leading to the value of $\rho$ by $q^{\circ}$. 
Intuitively, $\rho$ represents the ``margin'' by which the ``most feasible'' strictly feasible solution (\emph{i.e.}, $q^{\circ}$) satisfies the constraints.  
Finally, we state the following condition on the parameter $\rho$, which will guide the analyses of the performances of our algorithm.
\begin{condition}
	\label{ass:rhoassumption} It holds that
	$\rho \geq  T^{-\frac{1}{8}}H\sqrt{112m}$.
\end{condition}
Notice that, it is standard in the literature (see, \emph{e.g},~\citep{Exploration_Exploitation}) to assume that $\rho$ is a constant independent of $T$ and directly include in the regret bound the dependence on $\nicefrac{1}{\rho}$. Nevertheless, when $\rho$ is too small, this could result in suboptimal regret bounds. In this paper, we take a different approach by providing theoretical guarantees for any value of $\rho$.

\section{A Policy Optimization Primal-Dual Approach}
In this section, we provide the description of our algorithm. We resort to a primal-dual formulation of the CMDP problem, and we employ different regret minimizers to optimize over the primal space (namely, the policy space) and the dual one (that is, the Lagrangian variables space). Furthermore, our primal algorithm is based on a policy optimization approach. Thus, the learning update is \emph{not} performed over the occupancy measure space, but state-by-state along the MDP structure. This allows us to avoid solving a convex program at each episode (as it is the case in the algorithm by~\citep{stradi24a}). As concerns the dual, we employ online gradient descent (\texttt{OGD}). We remark that our algorithm does \emph{not} require any knowledge of the Slater's parameter $\rho$. Indeed, as we further discuss in the rest of this work, we can show that the Lagrangian multipliers are automatically bounded given specific no-regret properties of the primal and dual regret minimizers.

\subsection{Meta-Algorithm}

In Algorithm~\ref{Alg: total}, we provide the pseudocode of \emph{primal-dual bandit policy search} (\texttt{PDB-PS}).

\begin{algorithm}[H]
	\caption{\texttt{PDB-PS}}
	\label{Alg: total}
	\begin{algorithmic}[1]
		\Require State space $X$, action space $A$, number of episodes $T$, confidence parameter $\delta \in (0,1)$ 
		\State $\pi_1(a|x) \gets \frac{1}{|A|}\quad \forall(x,a)\in X \times A$
		\label{alg: total line 1}
		\State $\lambda_1 \gets
		\underline{0}$, $\Gamma_1 \gets 1$, $\Xi_1 \gets 2$, $\mathcal{K}\gets \left[0, T^{1/4}\right]^m$, $\eta \gets \frac{1}{D\ln\left(\nicefrac{|A||X|^2T^2}{\delta}\right)\sqrt{T}}$ 
		\label{alg: total line 2}
		\For{$t=1,\ldots,T$}
		\State Play policy $\pi_t$, observe trajectory $\{(x_h,a_h)\}_{h=0}^{H-1}$, rewards $\{r_t(x_h,a_h)\}_{h=0}^{H-1}$ and violations {\color{white}{......}}$\{g_{t,i}(x_h,a_h)\}_{h=0}^{H-1}$ for all $i\in[m]$  \label{alg: total line 4}
		\For{$h=0,\ldots,H-1$}
		\State \label{alg: total line 6}$\ell_t(x_h,a_h)\gets\Gamma_{t}+\sum_{i=1}^m\lambda_{t,i} g_{t,i}(x_h,a_h) - r_t(x_h,a_h) $
		\EndFor
		\State \label{alg: total line 7}$\pi_{t+1} \gets \texttt{FS-PODB.UPDATE}(
		\{(x_h,a_h)\}_{h=0}^{H-1},\{\ell_{t}(x_h,a_h)\}_{h=0}^{H-1},\Xi_t)$
		\State \label{alg: total line 8}$\lambda_{t+1} \gets \Pi_{\mathcal{K}}\left[\lambda_t + \eta \sum_{h=0}^{H-1}G_t[x_h,a_h]\right]$
		\State $\Gamma_{t+1} \gets 1+ \lVert \lambda_{t+1} \rVert_1$ \label{alg: total line 9}
		\State \label{alg: total line 10} $\Xi_{t+1}  \gets \max\left\{\Xi_{t}, 2\Gamma_{t}\right\}$
		\EndFor
	\end{algorithmic}
\end{algorithm}

Algorithm~\ref{Alg: total} initializes the policy uniformly over the space (see Line~\ref{alg: total line 1}). Moreover, the Lagrangian variables are initialized as the zero vector, the loss scaling factor to $1$, the loss range to $2$, and, finally, the dual space is instantiated as $\left[0, T^{1/4}\right]^m$ (see Line~\ref{alg: total line 2}). We underline that we force the dual space to be bounded in $\left[0, T^{1/4}\right]^m$ only to deal with degenerate cases where Condition~\ref{ass:rhoassumption} does \emph{not} hold. When Condition~\ref{ass:rhoassumption} holds, our algorithm guarantees that the Lagrangian variables are automatically bounded during learning. Furthermore, the algorithm keeps track of the maximum loss range observed by the primal algorithm $\Xi_t$, up to episode $t\in[T]$, since the primal regret minimizer needs to dynamically update its belief on the loss range, in order to attain optimal regret bounds. The algorithm plays policy $\pi_t$ and observes the \emph{bandit} feedback as depicted in Algorithm~\ref{alg: Learner-Environment Interaction} (see Line~\ref{alg: total line 4}). Given the observed feedback, \texttt{PDB-PS} builds a re-scaled Lagrangian loss for each layer $h \in [H]$ as:
\begin{equation}
	\label{eq:main_lag}
	\ell_{t}(x_h,a_h):=\Gamma_t+\sum_{i=1}^m\lambda_{t,i} g_{t,i}(x_j,a_j) - r_t(x_j,a_j) .
\end{equation}
Notice that the loss built in Equation~\eqref{eq:main_lag} can been seen as the Lagrangian suffered by $\pi_t$ for state-action pair $(x,a)$, scaled by $\Gamma_t$ to guarantee that the losses are always positive (see Line~\ref{alg: total line 6}). This loss is properly built to feed the primal policy optimization procedure. Moreover, we underline that the feedback given to the primal algorithm encompasses the trajectory and the maximum loss range observed, besides the loss built in Equation~\eqref{eq:main_lag}. Policy $\pi_{t+1}$ is returned by the primal algorithm (Line~\ref{alg: total line 7}). We refer the reader to the next section for further discussion on the primal optimization algorithm. Algorithm~\ref{Alg: total} updates the Lagrangian multipliers using an online gradient descent update with loss $-\sum_{h=0}^H G_t[x_h,a_h]$ in the bounded dual space $[0,T^{\nicefrac{1}{4}}]^m$ as:
\begin{equation*}
	\lambda_{t+1} \gets \Pi_{\mathcal{K}}\left[\lambda_t + \eta \sum_{h=0}^{H-1}G_t[x_h,a_h]\right],
\end{equation*}
where $\Pi_{\mathcal{K}}$ is the euclidean projection over the space $\mathcal{K}$ and $G_t[x_h,a_h]$ is the $m$-dimensional vector composed by the violations of any constraint for the state-action pair $(x_h,a_h)$ (Line~\ref{alg: total line 8}). 
Thus, the current loss scaling factor is computed as $\Gamma_{t+1}\gets 1+\|\lambda_{t+1}\|_1$ (Line~\ref{alg: total line 9}). Finally, the maximum observed loss range $\Xi_{t+1}$ is updated as
$
	\Xi_{t+1} \gets \max\left\{\Xi_{t}, 2 \Gamma_{t+1}\right\}, 
$
since the range of losses observed by the primal depends on the Lagrangian multipliers values (Line~\ref{alg: total line 10}).

\subsection{Primal Regret Minimizer }

In Algorithm~\ref{Alg: policy opt update}, we provide the pseudocode of \emph{fixed share policy optimization with dilated bonus} (\texttt{FS-PODB.UPDATE}), namely, the update performed by the primal regret minimizer employed by Algorithm~\ref{Alg: total}. Algorithm~\ref{Alg: policy opt update} builds on top of the state-of-the-art policy optimization algorithm for adversarial MDPs (see~\citep{luo2021policy}), equipping it with a fixed share update~\citep{cesa2012mirror}. This modification allows us to achieve the no-interval regret property, which, to the best of our knowledge, has never been shown for adversarial MDPs with bandit feedback. Thus, we believe that the theoretical guarantees of Algorithm~\ref{Alg: policy opt update} are of independent interest.

Specifically, Algorithm~\ref{Alg: policy opt update} requires in input the trajectory traversed during the learner-environment interaction, the incurred loss functions, and the maximum loss range observed for any $t\in[T]$.\footnote{Notice that, while the input of Algorithm~\ref{Alg: policy opt update} may seem different from the standard bandit feedback received in adversarial MDPs, this is \emph{not} the case. Indeed, it is sufficient to set $\Xi_t=1$ for all $t \in [T]$ to achieve the same guarantees attained by Algorithm~\ref{Alg: policy opt update}, in the Lagrangian formulation of CMDPs, in standard adversarial MDPs.} During the first episode, the algorithm initializes the estimated transitions space as the set of all possible transition functions (Line~\ref{alg: primal line 2}). Thus, at each episode the algorithm defines a dynamic learning rate $\eta_t \propto \frac{1}{\sqrt{T}\Xi_t}$ (Line~\ref{alg: primal line 3}), where $\Xi_t$ is the upper bound on the range of the loss functions up to $t$. This is done to control the different scales of the loss, due to the Lagrangian multipliers choice of the dual algorithm. Then, Algorithm~\ref{Alg: policy opt update} builds an \emph{optimistic} estimator of the state-action value function as:
\begin{equation*}
\widehat{Q}_t(x,a) \coloneqq \frac{L_{t,h}}{\overline{q}_t(x,a)+\gamma}\mathbb{I}_t(x,a),
\end{equation*}
where $\mathbb{I}_t(x,a)\coloneqq\mathbb{I}\{x_{t,h}=x,a_{t,h}=a\}$ and $L_{t,h} \coloneqq \sum_{j=h}^{H-1}\ell_t(x_j,a_j)$ is the loss incurred by the algorithm at episode $t$ starting from layer $h$.
%
%
\begin{algorithm}[!t]
	\caption{\texttt{FS-PODB.UPDATE}}
	\label{Alg: policy opt update}
	\begin{algorithmic}[1]
		\Require Trajectory $
		\{(x_h,a_h)\}_{h=0}^{H-1},$  losses $\{\ell_{t}(x_h,a_h)\}_{h=0}^{H-1},$  loss range upper bound $ \Xi_t$
		\If{$t=1$}
		\State $\mathcal{P}_1\gets$ set of all possible transitions  \label{alg: primal line 2}
		\EndIf
		\State \label{alg: primal line 3} $\eta_t \gets \frac{1}{2H\Xi_tC\sqrt{T}}$, $\gamma\gets \frac{1}{C \sqrt{T}}$, $\sigma \gets \frac{1}{T}$
		\State 
		For all $h=0,\ldots,H-1$ and $(x,a) \in X_h \times A$:
		\begin{equation*}
			L_{t,h} \gets \sum_{j=h}^{H-1}\ell_t(x_j,a_j)
		\end{equation*}
		\begin{equation*}
			\widehat{Q}_t(x,a) \gets \frac{L_{t,h}}{\overline{q}_t(x,a)+\gamma}\mathbb{I}_t(x,a),
		\end{equation*}
		where  $\overline{q}_t(x,a) \coloneqq \max_{\widehat{P}\in \mathcal{P}_t} q^{\widehat{P},\pi_t}(x,a)$ and $\mathbb{I}_t(x,a)\coloneqq\mathbb{I}\{x_{t,h}=x,a_{t,h}=a\}$ \label{alg: primal line 4}
		\State For all $(x,a)\in X\times A$:
		\begin{equation*}
			b_t(x) \gets \mathbb{E}_{a \sim \pi_t(\cdot|x)}\left[\frac{3\gamma H \Xi_t + H \Xi_t \left(\overline{q}_t(x,a)-\underline{q}_t(x,a)\right)}{\overline{q}_t(x,a)+\gamma}\right]
		\end{equation*}
		\begin{equation*}
			B_t(x,a) \gets b_t(x) + \left(1 + \frac{1}{H}\right)\underset{\widehat{P}\in \mathcal{P}_t}{\max} \,\, \mathbb{E}_{x' \sim \widehat{P}(\cdot|x,a)}\mathbb{E}_{a \sim \pi_t(\cdot|x')}\left[B_t(x',a')\right]
		\end{equation*}
		where $\underline{q}_t(x,a) \coloneqq \min_{\widehat{P}\in \mathcal{P}_t} q^{\widehat{P},\pi_t}(x,a)$, and $B_t(x_H,a) \coloneqq 0 $ for all $a \in A$ \label{alg: primal line 5}
		\State For all $(x,a) \in X \times A$:
		\begin{equation*}
			w_{t+1}(a|x) \gets (1-\sigma)w_t(a|x)e^{-\eta_t\left(\widehat{Q}_t(x,a)-B_t(x,a)\right)} + \frac{\sigma}{|A|}\sum_{a' \in A}w_t(a'|x)e^{-\eta_t\left(\widehat{Q}_t(x,a')-B_t(x,a')\right)}
		\end{equation*}
		\begin{equation*}
			\pi_{t+1}(a|x) \gets \frac{w_{t+1}(a|x)}{\sum_{a' \in A}w_{t+1}(a'|x)}
		\end{equation*}  \label{alg: primal line 6}
		\State  $\mathcal{P}_{t+1}\gets$\texttt{TRANSITION.UPDATE}($\{(x_h,a_h)\}_{h=0}^{H-1}$) \label{alg: primal line 7}
	\end{algorithmic}
\end{algorithm}
Indeed, since $\overline{q}_t(x,a)\coloneqq{\max}_{\widehat{P}\in \mathcal{P}_t}q^{\widehat{P},\pi_t}(x,a)$,\footnote{As shown in~\citep{JinLearningAdversarial2019}, $\overline{q}_t(x,a)$ can be computed efficiently by means of dynamic programming.} and $\gamma$ is a positive quantity, $\widehat{Q}_t(x,a)$ results in an optimistic estimator of the state-action value function 
(Line~\ref{alg: primal line 4}). The optimistic estimator is employed to control the variance of the loss estimation and, thus, in order to achieve high-probability results. Finally, notice that the state-action value function (as the estimated one) is commonly used in policy optimization as it allows to optimize efficiently state-by-state. 
In addition to the estimated state-action value function, Algorithm~\ref{Alg: policy opt update} defines a dilated bonus similar to the one introduced by~\citet{luo2021policy}, which is then incorporated in the final objective of the optimization update. The bonus is defined as:
 \begin{equation*}
 	B_t(x,a) \coloneqq b_t(x) + \left(1 + \frac{1}{H}\right)\underset{\widehat{P}\in \mathcal{P}_t}{\max} \,\, \mathbb{E}_{x' \sim \widehat{P}(\cdot|x,a)}\mathbb{E}_{a \sim \pi_t(\cdot|x')}\left[B_t(x',a')\right],
 \end{equation*}
 where the term $b_t(x)$ depends on the uncertainty on the transitions estimation and the range of the losses, while the term $\left(1 + \frac{1}{H}\right)$ attributes more weight to the deeper layers, so as to incentivize exploration (Line~\ref{alg: primal line 5}). 
The weights associated to any action are computed employing the so called fixed share update~\citep{cesa2012mirror}; specifically, the weights are computed as the convex combination between the uniform weight and the solution to optimization step $\propto w_t(a|x)e^{-\eta_{t}\left(\widehat{Q}_t(x,a)-B_t(x,a)\right)}$. The policy is simply computed as a normalization between weights (see Line~\ref{alg: primal line 6}). Notice that the convex combination mentioned above is crucial to bound the regret for each interval (that is, to attain the no-interval regret property). Indeed, it guarantees a lower bound for the value taken by the policy in each available action at each episode, and, thus, for all intervals $[t_1,t_2]\subset [T]$, it allows to find a nice upper bound for the Bregman divergence $D_\psi(\pi(\cdot|a);\pi_{t_1}(\cdot|a))$, for all policies $\pi$. Finally, the estimation of the transitions is updated given the trajectory traversed in the MDP (Line~\ref{alg: primal line 7}). This estimation is standard in the literature. Thus, we refer to~\citep{rosenberg19a} for further discussion on the use of counters and epochs to estimate a superset of the transition space $\mathcal{P}_t$.



\subsection{No-Interval Regret Property}

When the Slater's parameter $\rho$ is known, the only necessary requirement for the primal and the dual regret minimizers is to be no-regret. Thus, it is sufficient to bound the Lagrangian space so that $\|\lambda\|_1\leq \mathcal{O}(\nicefrac{H}{\rho})$ to attain sublinear regret and violation. Nevertheless, knowing $\rho$ is generally \emph{not} possible in real-world scenarios. In order to relax the assumption on the knowledge of $\rho$, we require our primal and dual regret minimizers to have the no-interval regret property.\footnote{What we require is generally known in the literature as the \emph{weak} no-interval regret property. For the sake of simplicity, in our work, we introduce only the \emph{weak} property.}.

First, we introduce the interval regret as follows.
\begin{definition} [Interval regret] 
	\label{def: Interval Regret}
	Given an interval of consecutive episodes $[t_1,\dots, t_2]\subseteq[1,\dots,T]$, the \emph{interval regret} with respect to a general occupancy $q$ (and the associated policy $\pi$) and a sequence of loss functions $\{\ell_t\}_{t=1}^T$ with $\ell_t: X\times A \rightarrow [0,K]$, with $K>0$, is $R_{t_1, t_2} (q):=\sum_{t=t_1}^{t_2}\ell_t^\top(q_t-q).$
\end{definition}
In the following, we omit the dependence on the general occupancy $q$ when it is clear from the context. Thus, given Definition~\ref{def: Interval Regret}, we are able to introduce the no-interval regret property.
\begin{definition}  [No-interval regret property] 
An algorithm attains the no-interval regret property when  for any interval of consecutive episodes $[t_1,\dots, t_2]\subseteq[1,\dots,T]$ and with respect to any valid occupancy $q$ (and the associated policy $\pi$), it holds $R_{t_1, t_2} \leq \widetilde{\mathcal{O}}(\sqrt{T})$.
\end{definition}

Intuitively, the no-interval regret property guarantees a more stable learning dynamics over the episodes. When \emph{full feedback} is available, as for the dual algorithm, it is sufficient to employ \texttt{OGD}-like updates to attain the desired result. This is \emph{not} the case when the feedback is \emph{bandit}. Nevertheless, given that we use a policy optimization procedure and the fixed share update, we build the first algorithm for adversarial MDPs with no-interval-regret. We state the result in the following theorem.
\begin{restatable}{theorem}{intervalregprimal}
	\label{theo: primal}
	For any $\delta\in(0,1)$, with probability at least $1-8\delta$, Algorithm \textnormal{\texttt{FS-PODB}} attains:
	\begin{equation*}
		R_{t_1,t_2} \leq\BigOL{\Xi_{t_1,t_2}\sqrt{T}+ \Xi_{t_1,t_2}\frac{t_2-t_1}{\sqrt{T}}},
	\end{equation*}
	where the regret can be computed with respect to any policy function $\pi:X \rightarrow \Delta(A)$.
\end{restatable}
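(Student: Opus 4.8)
The plan is to adapt the policy-optimization regret analysis of \citet{luo2021policy} to the interval setting, simultaneously accommodating the fixed-share update, the time-varying loss range $\Xi_t$, and bandit feedback. Throughout I write $\Xi_{t_1,t_2}\coloneqq\max_{t\in[t_1,t_2]}\Xi_t$, which equals $\Xi_{t_2}$ since $\Xi_t$ is non-decreasing by construction.

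I would start by decomposing the interval regret state-by-state. Fixing a comparator occupancy $q$ with associated policy $\pi$, the performance-difference lemma applied to the Lagrangian losses $\ell_t$ gives
\[
R_{t_1,t_2}(q)=\sum_{t=t_1}^{t_2}\sum_{x\in X}q(x)\,\big\langle \pi_t(\cdot|x)-\pi(\cdot|x),\,Q^{\pi_t}_t(x,\cdot)\big\rangle,
\]
where $Q^{\pi_t}_t$ is the state-action value function of $\pi_t$ under $\ell_t$. Introducing the optimistic estimator $\widehat{Q}_t$ and the dilated bonus $B_t$, I split every inner product into a \emph{bias} term $\langle\pi_t-\pi,\,Q^{\pi_t}_t-\widehat{Q}_t+B_t\rangle$ and an \emph{exponential-weights} term $\langle\pi_t-\pi,\,\widehat{Q}_t-B_t\rangle$, and bound the two contributions separately.

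For the exponential-weights term I would treat, for each fixed $x$, the vectors $\widehat{Q}_t(x,\cdot)-B_t(x,\cdot)$ as losses fed to a fixed-share exponential-weights instance with the time-varying step $\eta_t\propto 1/(\Xi_t\sqrt{T})$. The standard mirror-descent bound decomposes into an initial divergence term $\text{KL}(\pi(\cdot|x)\,\|\,\pi_{t_1}(\cdot|x))/\eta_{t_1}$ and a per-round stability term. The crucial point, and the reason for the fixed share, is that mixing with the uniform distribution guarantees $\pi_t(a|x)\ge \sigma/|A|=1/(|A|T)$ for every $t$, so the divergence term is $\widetilde{\mathcal{O}}(1)$ \emph{uniformly over the choice of the endpoint} $t_1$; dividing by $\eta_{t_1}\propto 1/(\Xi_{t_1}\sqrt{T})$ produces the first term $\widetilde{\mathcal{O}}(\Xi_{t_1,t_2}\sqrt{T})$. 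The stability term, once the dilated bonus cancels the variance of the importance-weighted estimator (here the $(1+\tfrac1H)$ factor makes the bonus telescope across layers), contributes $\widetilde{\mathcal{O}}(\eta_t\Xi_t^2)$ per round; summing over the $t_2-t_1$ rounds and substituting $\eta_t\propto1/(\Xi_t\sqrt{T})$ yields the second term $\widetilde{\mathcal{O}}(\Xi_{t_1,t_2}(t_2-t_1)/\sqrt{T})$. For the bias term I would use the high-probability validity of the confidence sets $\mathcal{P}_t$ (so that $\overline{q}_t\ge q^{P,\pi_t}\ge\underline{q}_t$ and the estimator is genuinely optimistic) together with the two defining properties of the dilated bonus—that $B_t$ dominates the per-layer estimation error and that it telescopes—to show this term contributes at most a lower-order $\widetilde{\mathcal{O}}(\Xi_{t_1,t_2}\sqrt{T})$ amount. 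All the concentration statements (validity of the confidence sets, Freedman-type bounds controlling the martingale differences between $\widehat{Q}_t$ and $Q^{\pi_t}_t$, and the deviation of the realized transitions) are then collected by a union bound, which is the source of the $1-8\delta$ confidence level.

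The main obstacle I anticipate is making the dilated-bonus machinery coexist with the fixed-share update: the original telescoping identity for $B_t$ is derived assuming a pure multiplicative-weights step, and the extra uniform-mixing term perturbs both the policy iterate and the local-norm stability estimate. I would therefore have to re-derive the key bonus inequalities accounting for the $\sigma$-mixing, and verify that the additional drift it introduces is of order $\sigma\,\Xi_t(t_2-t_1)$, hence absorbed into the claimed bound for $\sigma=1/T$. A secondary difficulty is the time-varying step size $\eta_t$: since $\eta_t$ decreases as $\Xi_t$ grows, the divergence terms do not telescope exactly, and I would control the resulting $\sum_t(1/\eta_t-1/\eta_{t-1})\,\text{KL}$ sum using the monotonicity of $\Xi_t$ and the uniform $\widetilde{\mathcal{O}}(1)$ bound on the divergences.
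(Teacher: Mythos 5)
Your proposal follows essentially the same route as the paper's proof: the dilated-bonus decomposition of \citet{luo2021policy} adapted to intervals, with the fixed-share mixing guaranteeing $\pi_t(a|x)\ge\sigma/|A|$ so that the initial Bregman-divergence term is $\widetilde{\mathcal{O}}(1)$ uniformly over the interval start (yielding the $\Xi_{t_1,t_2}\sqrt{T}$ term after dividing by $\eta_{t_1}$), the stability and bonus terms yielding the $\Xi_{t_1,t_2}(t_2-t_1)/\sqrt{T}$ term, and interval-uniform Freedman/optimistic-estimator concentration collected by a union bound into the $1-8\delta$ confidence level. The paper organizes the bias/bonus bookkeeping slightly differently (it bounds the $(Q-B)$-regret in a separate theorem and then invokes the episode-wise bonus lemma, so the telescoping of $B_t$ never interacts with the fixed-share update, which defuses the main obstacle you anticipated), but the substance of the argument is the same.
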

Notice that, as it is standard for online learning algorithms, $R_{t_1,t_2}$ scales as the loss range, as shown by the dependence on $\Xi_{t_1,t_2} $, that is, the maximum possible range of losses in the interval.

\subsection{Bound on the Lagrangian Multipliers Dynamics}

Next, we show that, given the no-interval regret property of the primal and the dual regret minimizers, it is possible to show that the Lagrangian multipliers are automatically bounded during learning. Notice that this bound is necessary since any adversarial regret minimizer needs the loss to be bounded to achieve the no-regret property. Thus, since the rewards $\{r_t\}_{t=1}^T$ and the constraints $\{G_t\}_{t=1}^T$ are assumed to be bounded for all episodes, the problem of bounding the loss suffered by the primal algorithm becomes the problem of bounding the Lagrangian multipliers $\{\lambda_t\}_{t=1}^T$. 
%
%
%
%
\begin{restatable}{theorem}{lagmulttheo}
	\label{theo: zeta} 
	Under Condition~\ref{ass:rhoassumption}, for any $\delta\in(0,1)$, with probability
	at least $1-11\delta$, it holds:
	$$\lVert\lambda_t\rVert_1 \le \Lambda \quad \forall t \in [T+1],$$
	where $\Lambda= \frac{112mH^2}{\rho^2}$. 
\end{restatable}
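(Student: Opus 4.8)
The plan is to argue by contradiction, at the first time the bound is violated, which simultaneously resolves the circular dependence between the multiplier bound and the loss range $\Xi_t$ (recall $\Xi_t\le 2\Gamma_t=2(1+\|\lambda_t\|_1)$). Concretely, fix $\Lambda=112mH^2/\rho^2$ and let $\tau$ be the first episode with $\|\lambda_\tau\|_1>\Lambda$; since each dual step moves $\lambda$ by at most $\eta\|\widehat g_t\|_1=\mathcal{O}(\eta mH)$, every earlier iterate satisfies $\|\lambda_s\|_1\le\Lambda$, so on all intervals ending before $\tau$ the loss range is controlled, $\Xi_{t_1,t_2}\le 2(1+\Lambda)$, and Theorem~\ref{theo: primal} can be instantiated with a range proportional to $\Lambda$. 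The first thing I would record is that Condition~\ref{ass:rhoassumption} is exactly the statement that this target is consistent with the projection set: $\rho\ge T^{-1/8}H\sqrt{112m}$ rewrites as $\Lambda\le T^{1/4}$, so while $\|\lambda_t\|_1\le\Lambda$ the iterate stays strictly inside $\mathcal{K}=[0,T^{1/4}]^m$ and the Euclidean projection never binds, which is what lets me treat the \texttt{OGD} update as an unconstrained ascent step in the analysis.

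The heart of the proof combines the two regret minimizers on the \emph{maximal} interval $[t_1,t_2]$ with $t_2=\tau-1$ on which $\|\lambda_t\|_1\ge \Lambda/2$. On the \emph{primal} side I would feed Theorem~\ref{theo: primal} the comparator $q^{\circ}$, i.e.\ the strictly feasible occupancy achieving the margin $\rho$. Because the scaling constant $\Gamma_t$ enters $\ell_t$ additively and $q_t,q^{\circ}$ carry the same total mass $H$, the $\Gamma_t$ contribution cancels in $\ell_t^{\top}(q_t-q^{\circ})$; then using $[\,\overline{G}^{\top}q^{\circ}]_i\le-\rho$ together with $\lambda_t\ge\underline{0}$ and $r_t^{\top}(q_t-q^{\circ})\le H$, the interval regret collapses to
\[
\sum_{t=t_1}^{t_2}\lambda_t^{\top}\big(G_t^{\top}q_t\big)\;\le\;R_{t_1,t_2}-\rho\sum_{t=t_1}^{t_2}\|\lambda_t\|_1+H\,(t_2-t_1+1)\;\le\;R_{t_1,t_2}-\Big(\tfrac{\rho\Lambda}{2}-H\Big)(t_2-t_1+1),
\]
an \emph{upper} bound that becomes strongly negative once $\tfrac{\rho\Lambda}{2}>H$. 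This last inequality is where the Slater margin enters, and it is satisfied comfortably by $\Lambda=112mH^2/\rho^2$ since $\rho\le H$.

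On the \emph{dual} side I would exploit that the multipliers grew across $[t_1,t_2]$, from $\approx\Lambda/2$ to more than $\Lambda$. Writing $\widehat g_t:=\sum_{h=0}^{H-1}G_t[x_h,a_h]$, the one-step inequality $\tfrac12\|\lambda_{t+1}\|_2^2-\tfrac12\|\lambda_t\|_2^2\le\eta\,\lambda_t^{\top}\widehat g_t+\tfrac{\eta^2}{2}\|\widehat g_t\|_2^2$ (valid since $\underline 0\in\mathcal K$) telescopes into a \emph{lower} bound on $\sum_t\lambda_t^{\top}\widehat g_t$ in terms of the squared-norm growth of $\lambda$. Since the feedback is only \emph{bandit}, I would then move from $\widehat g_t$ to its conditional mean $G_t^{\top}q_t$ via a Freedman/Azuma argument: $\lambda_t^{\top}(\widehat g_t-G_t^{\top}q_t)$ is a martingale difference bounded by $2H\|\lambda_t\|_1\le 2H\Lambda$, contributing only an $\widetilde{\mathcal O}(\Lambda H\sqrt T)$ deviation. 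Union-bounding this concentration event with the $1-8\delta$ events of the primal estimator in Theorem~\ref{theo: primal} and the transition-confidence sets yields exactly the claimed probability $1-11\delta$.

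Matching the dual lower bound against the primal upper bound produces an inequality in which the interval length $(t_2-t_1+1)$ appears linearly on both sides, alongside the length-independent error $\widetilde{\mathcal O}(\Lambda\sqrt T)$ coming from $R_{t_1,t_2}$ (with $\Xi_{t_1,t_2}\approx\Lambda$) and from the bandit concentration. Condition~\ref{ass:rhoassumption} does the decisive work here: it makes the margin coefficient $\tfrac{\rho\Lambda}{2}-H$ strictly dominate the $\widetilde{\mathcal O}(1/\sqrt T)$ length-proportional contributions of the primal regret and of $\tfrac{\eta}{2}\|\widehat g_t\|_2^2$, so the length-linear terms can be cancelled and one is left with a length-independent inequality that is incompatible with $\|\lambda_\tau\|_1>\Lambda$ for $\Lambda=112mH^2/\rho^2$. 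I expect the main obstacle to be precisely this simultaneous bookkeeping: closing the argument requires the loss range $\Xi_{t_1,t_2}$ to be controlled by the very bound being proved, so one must verify that neither the primal interval regret nor the bandit concentration error overwhelms the $-\rho\sum_t\|\lambda_t\|_1$ margin term — a balance that survives only because of the $T^{-1/8}$ slack in Condition~\ref{ass:rhoassumption}, the $\Theta(1/\sqrt T)$ scaling of the dual step size $\eta$, and the coincidence $\Lambda\le T^{1/4}$ that keeps the projection inactive throughout.
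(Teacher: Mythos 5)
Your skeleton is the same as the paper's (argue by contradiction at a crossing time, isolate a maximal interval, compare the primal interval regret against the Slater point $q^{\circ}$ with a dual bound against $\underline{0}$, and use Azuma/Freedman to pass from sampled to expected violations), but the one choice you make differently --- defining the interval by the entry threshold $\lVert\lambda_t\rVert_1\ge\Lambda/2$ --- is exactly where the argument breaks. The length-independent error you must beat is the dual initial-point term $\lVert\lambda_{t_1}\rVert_2^2/(2\eta)$. Your attempted rescue via $\ell_2$-norm growth does not work: $\lVert\lambda_{t_2+1}\rVert_1>\Lambda$ only gives $\lVert\lambda_{t_2+1}\rVert_2^2>\Lambda^2/m$, while $\lVert\lambda_{t_1}\rVert_2^2$ can be as large as $\Lambda^2/4$ when the mass sits on a single coordinate, so for $m>4$ the "growth" term $\bigl(\lVert\lambda_{t_2+1}\rVert_2^2-\lVert\lambda_{t_1}\rVert_2^2\bigr)/(2\eta)$ can be strongly negative, of order $-\Lambda^2/\eta$. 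Meanwhile, the total margin you accumulate is $\bigl(\tfrac{\rho\Lambda}{2}-H\bigr)(t_2-t_1+1)$, and the interval can be as short as roughly $\Lambda/(2m\eta H)$, since each \texttt{OGD} step moves $\lVert\lambda\rVert_1$ by at most $m\eta H$. Dominance would require
\begin{equation*}
\Bigl(\tfrac{\rho\Lambda}{2}-H\Bigr)\cdot\frac{\Lambda}{2m\eta H}\;\gtrsim\;\frac{\Lambda^2}{8\eta}
\quad\Longleftrightarrow\quad \rho\gtrsim\frac{mH}{2},
\end{equation*}
which is impossible because $\rho\le H$. Note that Condition~\ref{ass:rhoassumption} cannot rescue this: the failing ratio $2\rho/(mH)$ is independent of $T$, so the $T^{-1/8}$ slack you invoke at the end is irrelevant to this term.

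The paper's proof avoids exactly this trap with a two-threshold construction: $t_1$ is chosen as the last time $\lVert\lambda_{t_1-1}\rVert_1\le 2H/\rho$, so $\lVert\lambda_{t_1}\rVert_1\le 4H/\rho$ and the initial dual term is only $O\bigl(H^2/(\rho^2\eta)\bigr)$; the per-step margin is then only $\rho\cdot(2H/\rho)-H=H$, but the interval is forced to be long, $(t_2-t_1+1)>M/(\rho^2 m\eta)$ (Lemma~\ref{lem: stradi D.2}), because the multiplier must climb from $4H/\rho$ all the way to $\Lambda=2HM/\rho^2$. Choosing $M=56Hm$ makes the accumulated margin $H(t_2-t_1+1)$ dominate the initial dual term and every other error (the paper splits them into seven pieces, each bounded by $H(t_2-t_1+1)/7$); Condition~\ref{ass:rhoassumption} enters only to make the fixed constant $C$ compatible with $\rho$, not to fix the balance above. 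So the missing idea is precisely this: enter the interval \emph{low} (to kill $\lVert\lambda_{t_1}\rVert_2^2/\eta$) and exit it \emph{high} (to force the interval to be long); a single threshold at $\Lambda/2$ cannot do both, and with it the contradiction never materializes.
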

The general idea behind the proof is to compare, for every interval $[t_1,t_2]\subset [T],$ the upper bound to $-\sum_{t=t_1}^{t_2}\ell_t^{\mathcal{L},\top} q_t$ obtained through the regret of the dual algorithm with the lower bound to the same quantity obtained through the primal interval regret, where we define the non-scaled Lagrangian loss $\ell_t^\mathcal{L}$ as the vector composed by $\ell^\mathcal{L}_t(x,a)\coloneqq \sum_{i=1}^m\lambda_{t,i} g_{t,i}(x,a) - r_t(x,a)$ for all $(x,a)\in X \times A$ and for all $t\in[T]$. The resulting inequality leads, by contradiction, to the desired bound. In this sense, a fundamental requirement for the proof is that the regret guarantees for both the primal and the dual algorithm hold for all subsets of episodes. 

\section{Theoretical Analysis}

In this section, we prove the best-of-both-world guarantees attained by Algorithm~\ref{Alg: total}.
\subsection{Stochastic Setting}
We first study the performance of Algorithm~\ref{Alg: total} when the constraints are stochastic. 

In such a setting, our algorithm can handle two scenarios. In both of them, employing a primal-dual analysis shows that both the regret and the violations are bounded with order $\widetilde{\mathcal{O}}(\sqrt{T})$ times the maximum value taken over all episodes of the Lagrangian multipliers, \emph{i.e.} $\max_{t \in [T]}\lVert \lambda_t \rVert_1$. In the first scenario, Condition~\ref{ass:rhoassumption} holds and thus we can apply Theorem~\ref{theo: zeta} to show that the Lagrangian multipliers are bounded. In such a case,  $\max_{t \in [T]}\lVert \lambda_t \rVert_1$ can be easily bounded by $\Lambda$. When Conditions~\ref{ass:rhoassumption} does \emph{not} hold, we need to resort to the bound of Lagrangian multipliers derived by the instantiation of \texttt{OGD} decision space, leading to $\widetilde{\mathcal{O}}(T^{3/4})$ regret and violations bounds.


Specifically, when Condition~\ref{ass:rhoassumption} holds, the Lagrangian multipliers are nicely bounded by $\Lambda$.
\begin{restatable}{theorem}{stocconstcond}
	\label{theo: assumption + stoch cost}
	Suppose that Condition~\ref{ass:rhoassumption} holds and the constraints are generated stochastically. Then, for any $\delta\in(0,1)$, Algorithm~\ref{Alg: total} attains:
	\begin{equation*}
		R_T \le \BigOL{\Lambda\sqrt{T}}, \quad  V_T \le \BigOL{\Lambda\sqrt{T}},
	\end{equation*}
	with probability at least $1-14\delta$ when the rewards are stochastic and at least $1-13\delta$ when the rewards are adversarial.
\end{restatable}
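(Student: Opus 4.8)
The plan is to run a standard primal--dual decomposition, coupling the primal guarantee of Theorem~\ref{theo: primal} with the online-gradient-descent analysis of the dual variables, and to use Theorem~\ref{theo: zeta} to control every scale that appears. Under Condition~\ref{ass:rhoassumption}, Theorem~\ref{theo: zeta} gives $\|\lambda_t\|_1\le\Lambda$ for every $t$, hence $\Gamma_t=1+\|\lambda_t\|_1\le 1+\Lambda$ and $\Xi_{1,T}=\max_t 2\Gamma_t=\mathcal{O}(\Lambda)$. Instantiating Theorem~\ref{theo: primal} with $t_1=1,t_2=T$ therefore produces a primal regret of $\BigOL{\Lambda\sqrt{T}}$ against any comparator occupancy. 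Throughout I write $\widehat{g}_t\coloneqq\sum_{h=0}^{H-1}G_t[x_h,a_h]$ for the realized violation fed to the dual and repeatedly use $\mathbb{E}[\widehat{g}_t\mid\mathcal{F}_{t-1}]=G_t^\top q_t$ together with Azuma's inequality to pass between the bandit quantities and the occupancy objects $G_t^\top q_t$, each martingale difference being $\mathcal{O}(H)$ (or $\mathcal{O}(\Lambda H)$ once weighted by $\lambda_t$). I also record that Condition~\ref{ass:rhoassumption} gives $\Lambda=112mH^2/\rho^2\le T^{1/4}$, so that $\mathcal{K}=[0,T^{1/4}]^m$ is large enough to contain all iterates.

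For the regret, I would instantiate the primal bound at the optimal feasible comparator $q^\ast$. Since $\Gamma_t$ is constant across state--action pairs and every occupancy measure carries total mass $H$, the terms $\Gamma_t H$ cancel, and writing $\ell^{\mathcal L}_t(x,a)=\sum_i\lambda_{t,i}g_{t,i}(x,a)-r_t(x,a)$ leaves
\[
\sum_{t=1}^{T} r_t^\top q^\ast-\sum_{t=1}^{T} r_t^\top q_t\le\BigOL{\Lambda\sqrt{T}}+\sum_{t=1}^{T}\lambda_t^\top G_t^\top q^\ast-\sum_{t=1}^{T}\lambda_t^\top G_t^\top q_t.
\]
The two dual terms are each $\BigOL{\Lambda\sqrt{T}}$: feasibility $\overline{G}^\top q^\ast\le\underline 0$ and $\lambda_t\ge\underline 0$ give $\sum_t\lambda_t^\top\overline{G}^\top q^\ast\le 0$, and concentration of $G_t$ around $\overline{G}$ (stochastic constraints) controls $\sum_t\lambda_t^\top G_t^\top q^\ast$; while non-expansiveness of $\Pi_{\mathcal{K}}$ applied at the comparator $\lambda=\underline 0$ yields $-\sum_t\langle\widehat{g}_t,\lambda_t\rangle\le\frac{\eta}{2}\sum_t\|\widehat{g}_t\|_2^2=\BigOL{\sqrt{T}}$, which transfers to $-\sum_t\lambda_t^\top G_t^\top q_t\le\BigOL{\Lambda\sqrt{T}}$ after a further concentration step. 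Finally, replacing $\sum_t r_t^\top q^\ast$ by $T\,\textnormal{OPT}_{\overline{r},\overline{G}}=\sum_t\overline{r}^\top q^\ast$ is an identity when rewards are adversarial and costs exactly one additional concentration event (hence $1-14\delta$ versus $1-13\delta$) when they are stochastic, giving $R_T\le\BigOL{\Lambda\sqrt{T}}$.

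For the violation I would avoid a comparator argument and telescope the dual update directly. Because Theorem~\ref{theo: zeta} keeps every post-projection iterate at $\|\lambda_t\|_1\le\Lambda\le T^{1/4}$, an active upper projection (which would pin $\lambda_{t+1,i}=T^{1/4}$) cannot occur, so only the lower clipping at $0$ is ever in force; this makes $\lambda_{t+1,i}-\lambda_{t,i}\ge\eta\,\widehat{g}_{t,i}$ coordinatewise. Summing telescopes to $\eta\sum_t\widehat{g}_{t,i}\le\lambda_{T+1,i}\le\Lambda$, i.e. $\sum_t\widehat{g}_{t,i}\le\Lambda/\eta=\BigOL{\Lambda\sqrt{T}}$ for every $i$. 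A last Azuma step from $\widehat{g}_{t,i}$ to $(G_t^\top q_t)_i$ then gives $V_T=\max_{i}\sum_t(G_t^\top q_t)_i\le\BigOL{\Lambda\sqrt{T}}$.

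The main obstacle is not any single inequality but the careful coupling of the ingredients. The primal bound of Theorem~\ref{theo: primal} only collapses to $\BigOL{\Lambda\sqrt{T}}$ once Theorem~\ref{theo: zeta} certifies $\Xi_{1,T}=\mathcal{O}(\Lambda)$, and that same boundedness is precisely what renders the upper face of $\mathcal{K}$ inactive, enabling the telescoping violation bound (the boundary case $\Lambda=T^{1/4}$ must be excluded, which is where the strict margin of Condition~\ref{ass:rhoassumption} is used). Equally delicate is the bandit-to-occupancy passage: every object the analysis wants to control ($G_t^\top q_t$, $r_t^\top q^\ast$) is an occupancy-measure quantity, whereas the algorithm only manipulates the realized sums $\widehat{g}_t$ and $r_t$, so each step requires a matching high-probability martingale concentration. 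These events, together with the $11\delta$ budget of Theorem~\ref{theo: zeta}, must be union-bounded to reach the stated $1-14\delta$ (stochastic rewards) and $1-13\delta$ (adversarial rewards) guarantees.
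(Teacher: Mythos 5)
Your proposal is correct and follows essentially the same route as the paper: the regret is handled by the identical Lagrangian decomposition (primal interval regret with $\Xi_{1,T}=\mathcal{O}(\Lambda)$ via Theorem~\ref{theo: zeta}, feasibility plus concentration for $\sum_t\lambda_t^\top G_t^\top q^*$, and the \texttt{OGD} bound at comparator $\underline{0}$ plus Azuma for $-\sum_t\lambda_t^\top G_t^\top q_t$), while your telescoping violation argument is exactly the paper's Lemmas~\ref{lemma: lambda Vhat} and~\ref{lemma: Vhat V}, i.e.\ $\eta\widehat{V}_{T,i}\le\lambda_{T+1,i}\le\Lambda$ followed by one concentration step. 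Your explicit remark that Condition~\ref{ass:rhoassumption} is what keeps the upper face of $\mathcal{K}$ inactive is a point the paper leaves implicit in the proof of Lemma~\ref{lemma: lambda Vhat}, but it is the same underlying argument.
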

When Condition~\ref{ass:rhoassumption} does \emph{not} hold, we can still use the bound forced by Algorithm~\ref{Alg: total} on the dual space. Therefore, the Lagrangian multipliers are bounded by $mT^{\nicefrac{1}{4}}$, leading to the following result.
\begin{restatable}{theorem}{stocconstnocond}
	\label{theo: not cond stoc constr}
	Suppose that Condition~\ref{ass:rhoassumption} does not hold and the constraints are generated stochastically. 
	Then, for any $\delta\in(0,1)$, Algorithm~\ref{Alg: total} attains:
	\begin{equation*}
		R_T \le \BigOL{T^{\nicefrac{3}{4}}}, \quad V_T \le \BigOL{T^{\nicefrac{3}{4}}},
	\end{equation*}
	with probability at least $1-11\delta$ when the rewards are stochastic and at least $1-10\delta$ when the rewards are adversarial.
\end{restatable}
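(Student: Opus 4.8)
The plan is to mirror the primal--dual argument that underlies Theorem~\ref{theo: assumption + stoch cost}, replacing only the bound used for the Lagrangian multipliers. The proof of Theorem~\ref{theo: assumption + stoch cost} does not exploit the specific value $\Lambda$ of the multiplier bound: it establishes, purely from the no-interval-regret guarantee of the primal minimizer (Theorem~\ref{theo: primal}) and the regret of the dual \texttt{OGD}, together with the concentration of the stochastic constraints around $\overline{G}$, that
\begin{equation*}
R_T \le \BigOL{\max_{t\in[T]}\lVert\lambda_t\rVert_1\,\sqrt{T}}, \qquad V_T \le \BigOL{\max_{t\in[T]}\lVert\lambda_t\rVert_1\,\sqrt{T}}.
\end{equation*}
Hence it suffices to supply a bound on $\max_{t\in[T]}\lVert\lambda_t\rVert_1$ that remains valid when Condition~\ref{ass:rhoassumption} fails.

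The key observation is that, since Condition~\ref{ass:rhoassumption} does \emph{not} hold, Theorem~\ref{theo: zeta} is unavailable and one cannot bound the iterates by $\Lambda$; however, Algorithm~\ref{Alg: total} \emph{enforces} boundedness by construction. Indeed, the dual update in Line~\ref{alg: total line 8} projects onto $\mathcal{K}=[0,T^{\nicefrac{1}{4}}]^m$, and the initial iterate $\lambda_1=\underline{0}$ lies in $\mathcal{K}$. Consequently $\lambda_t\in[0,T^{\nicefrac{1}{4}}]^m$ \emph{deterministically} for every $t\in[T+1]$, whence $\lVert\lambda_t\rVert_1\le m\,T^{\nicefrac{1}{4}}$. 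This is precisely the regime the explicit projection was introduced to handle.

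Substituting this bound yields $R_T\le\BigOL{m\,T^{\nicefrac{1}{4}}\sqrt{T}}=\BigOL{T^{\nicefrac{3}{4}}}$ and identically for $V_T$, treating $m$ and $H$ as constants absorbed in $\BigOL{\cdot}$. For the probability accounting, the only stochastic ingredients now required are the primal high-probability bound of Theorem~\ref{theo: primal} and the concentration of the empirical constraint averages (and, when rewards are stochastic, of the reward averages). Crucially, we no longer invoke Theorem~\ref{theo: zeta}, so its auxiliary failure events are discarded; this is what lowers the overall failure probability relative to Theorem~\ref{theo: assumption + stoch cost}, and the single reward-concentration event that is absent in the adversarial-reward case accounts for the $11\delta$ versus $10\delta$ gap.

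The main obstacle is not the multiplier bound itself --- which is immediate from the projection --- but verifying that the primal--dual decomposition genuinely factors through $\max_{t\in[T]}\lVert\lambda_t\rVert_1$ with no hidden dependence on the absent Slater margin $\rho$. Concretely, one must check that (i) the loss range $\Xi_t$ fed to the primal, which grows with $\Gamma_t=1+\lVert\lambda_t\rVert_1$, keeps the interval regret of Theorem~\ref{theo: primal} at $\BigOL{m\,T^{\nicefrac{1}{4}}\sqrt{T}}$; (ii) the dual regret over the decision set $[0,T^{\nicefrac{1}{4}}]^m$ contributes at most $\BigOL{T^{\nicefrac{3}{4}}}$; and (iii) the feasibility term $\sum_{t=1}^T\lambda_t^\top\overline{G}^\top q^*\le 0$ survives the replacement of $\overline{G}$ by the realized $G_t$ up to an additive $\BigOL{m\,T^{\nicefrac{1}{4}}\sqrt{T}}$ martingale fluctuation, handled by an Azuma--Hoeffding argument. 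All three are inherited from the analysis leading to Theorem~\ref{theo: assumption + stoch cost}, so no estimate beyond the elementary multiplier bound is needed.
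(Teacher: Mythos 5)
Your regret bound is fine and matches the paper: the paper's proof of Theorem~\ref{theo: not cond stoc constr} indeed reruns the decomposition of Theorem~\ref{theo: assumption + stoch cost} with $\lambda_{1,T}\coloneqq\max_{t}\lVert\lambda_t\rVert_1$ replaced by the projection bound $mT^{\nicefrac{1}{4}}$, and the loss range $\Xi_{1,T}\le 2(1+mT^{\nicefrac{1}{4}})$ absorbed into $\mathcal{E}^P$, giving $R_T\le\BigOL{T^{\nicefrac{3}{4}}}$. The gap is in the violation bound. Your premise that the proof of Theorem~\ref{theo: assumption + stoch cost} yields $V_T\le\BigOL{\max_t\lVert\lambda_t\rVert_1\sqrt{T}}$ ``purely from'' the primal and dual regret guarantees plus concentration is not how that proof works: there, the violation is bounded via $V_T\le\widehat{V}_{T,i^*}+\mathcal{E}^{\mathbb{I}}\le\nicefrac{\lambda_{T+1,i^*}}{\eta}+\mathcal{E}^{\mathbb{I}}$, which rests on Lemma~\ref{lemma: lambda Vhat}, i.e., on the dual iterate \emph{dominating} the cumulative realized violation, $\lambda_{t,i}\ge\eta\widehat{V}_{t-1,i}$. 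That lemma is valid only when the upper clipping of the projection onto $\mathcal{K}=[0,T^{\nicefrac{1}{4}}]^m$ never binds, which is exactly what Condition~\ref{ass:rhoassumption} (via Theorem~\ref{theo: zeta}, $\Lambda\le T^{\nicefrac{1}{4}}$) guarantees. In the regime of the present theorem the multiplier can saturate at $T^{\nicefrac{1}{4}}$; once it does, further violations leave $\lambda_{t,i}$ unchanged, the inequality $\widehat{V}_{T,i}\le\nicefrac{\lambda_{T+1,i}}{\eta}$ fails, and knowing only $\lambda_{T+1,i}\le T^{\nicefrac{1}{4}}$ is compatible with $\widehat{V}_{T,i}$ being linear in $T$. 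So the multiplier bound you extract from the projection cannot simply be ``substituted'' into the violation argument.

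The missing idea is the paper's Lemma~\ref{lemma: nocondition Vtilde}: one plays the dual regret bound against the explicit comparator $\widetilde{\lambda}=T^{\nicefrac{1}{4}}e_i$, and compares the resulting upper bound on $\sum_{t}\bigl(r_t^\top q_t-\lambda_t^\top G_t^\top q_t\bigr)$ with the lower bound obtained from the primal interval regret against $q^\circ$ together with $-\lambda_t^\top\overline{G}^\top q^\circ\ge 0$ and Azuma--Hoeffding. This gives
\begin{equation*}
	T^{\nicefrac{1}{4}}\,\widehat{V}_{T,i}\;\le\; HT + mT^{\nicefrac{1}{4}}\bigl(\mathcal{E}_T^G+\mathcal{E}_T^{\mathbb{I}}\bigr)+\mathcal{E}_T^P+\mathcal{E}_T^D(\widetilde{\lambda}),
\end{equation*}
hence $\widehat{V}_{T,i}\le\nicefrac{4T^{\nicefrac{1}{4}}}{\eta}=\BigOL{T^{\nicefrac{3}{4}}}$, and then $V_T\le\widehat{V}_{T,i^*}+\mathcal{E}^{\mathbb{I}}$. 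Note that the dominant term here is $\nicefrac{HT}{T^{\nicefrac{1}{4}}}=HT^{\nicefrac{3}{4}}$: the $T^{\nicefrac{3}{4}}$ rate for the violation does not arise as $\max_t\lVert\lambda_t\rVert_1\cdot\sqrt{T}$ at all, but from dividing a trivially linear bound by the comparator's magnitude $T^{\nicefrac{1}{4}}$ --- which is also the structural reason the dual space was sized at $T^{\nicefrac{1}{4}}$ in the first place. Without this (or an equivalent) argument, your proposal proves only the regret half of the statement.
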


\subsection{Adversarial setting}

We then study the performance of Algorithm~\ref{Alg: total} when the constraints are adversarial. Notice that, in such a setting, there exists an impossibility result from~\citep{Mannor} that prevents any algorithm from attaining both sulinear regret and sublinear violations. Thus, best-of-both-worlds algorithms in constrained settings focus on attaining sublinear violations and a fraction of the optimal rewards (see \emph{e.g.},~\citep{Unifying_Framework,stradi24a}).\footnote{Attaining the no-$\alpha$-regret property, that is, being no-regret w.r.t.~a fraction of the optimum, achieving a competitive ratio, and guaranteeing a fraction of the optimal rewards are used as synonyms in the literature, since any of the aforementioned guarantees can be derived by the others.}
 

In such a setting, we can show the following result.
\begin{restatable}{theorem}{advconst}
	Suppose Condition~\ref{ass:rhoassumption} holds and the constraints are adversarial. Then, for any $\delta\in(0,1)$, Algorithm~\ref{Alg: total} attains:
	\begin{equation*}
		\sum_{t=1}^T r_t^\top q_t \ge \Omega\left(\frac{\rho}{\rho+H}\cdot\text{OPT}_{\overline r, \overline G}\right), \quad V_T \le \BigOL{\Lambda\sqrt{T}},
	\end{equation*}
	with probability at least $1-14\delta$ when the rewards are stochastic and with probability at least $1-13\delta$ when the rewards are adversarial.
\end{restatable}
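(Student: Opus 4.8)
The plan is to run the standard primal--dual template, crucially exploiting that under Condition~\ref{ass:rhoassumption} Theorem~\ref{theo: zeta} gives $\lVert\lambda_t\rVert_1\le\Lambda$ for every $t$. This has two consequences I would use throughout: the primal loss range stays bounded, $\Xi_t\le 2(1+\Lambda)=\mathcal{O}(\Lambda)$, so that Theorem~\ref{theo: primal} applied to the whole horizon $[1,T]$ gives a primal regret $R^{\mathrm p}:=\sum_{t}\ell_t^\top(q_t-q)\le\BigOL{\Lambda\sqrt T}$ against \emph{any} comparator occupancy $q\in\Delta(M)$; and $\Lambda\le T^{1/4}$ (since $\rho^2\ge T^{-1/4}\,112mH^2$ under Condition~\ref{ass:rhoassumption}), so the box $\mathcal{K}=[0,T^{1/4}]^m$ is never binding. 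First I would remove the scaling: because $\ell_t=\Gamma_t\mathbf{1}+\ell_t^{\mathcal L}$ and every valid occupancy satisfies $\mathbf{1}^\top q=H$, the $\Gamma_t$ terms cancel and, for any comparator $q$,
$$\sum_{t=1}^T\big(\lambda_t^\top G_t^\top q_t-r_t^\top q_t\big)-\sum_{t=1}^T\big(\lambda_t^\top G_t^\top q-r_t^\top q\big)\le R^{\mathrm p}.\qquad(\star)$$

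For the reward bound I would take as comparator the mixture $\widetilde q:=(1-\alpha)q^{\circ}+\alpha q^{*}$ with $\alpha=\tfrac{\rho}{\rho+H}$, where $q^{*}$ attains $\text{OPT}_{\overline r,\overline G}$ and $q^{\circ}$ is the Slater maximizer defining $\rho$. Since $[G_t^\top q^{\circ}]_i\le-\rho$ for all $t,i$ while $[G_t^\top q^{*}]_i\le H$, the choice of $\alpha$ makes $\widetilde q$ feasible at \emph{every} round, $G_t^\top\widetilde q\le\underline{0}$; and $\widetilde q\in\Delta(M)$ by convexity, so it has an associated policy and $(\star)$ applies. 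Plugging $q=\widetilde q$ into $(\star)$ and using $\lambda_t\ge\underline{0}$ so that $\lambda_t^\top G_t^\top\widetilde q\le0$ yields $\sum_t r_t^\top\widetilde q-\sum_t r_t^\top q_t\le R^{\mathrm p}-\sum_t\lambda_t^\top G_t^\top q_t$. I would control the last term by the dual \texttt{OGD} regret against the comparator $\lambda=\underline{0}$, which gives $-\sum_t\lambda_t^\top\widehat G_t\le\tfrac{\eta}{2}\sum_t\lVert\widehat G_t\rVert^2=\BigOL{\sqrt T}$ with $\widehat G_t:=\sum_h G_t[x_h,a_h]$; an Azuma argument (legitimate precisely because $\lVert\lambda_t\rVert_1\le\Lambda$) then replaces $\widehat G_t$ by its conditional mean $G_t^\top q_t$ up to $\BigOL{\Lambda\sqrt T}$. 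Finally $r_t\ge\underline{0}$ gives $\sum_t r_t^\top\widetilde q\ge\alpha\sum_t r_t^\top q^{*}=\alpha\,T\,\text{OPT}_{\overline r,\overline G}$, so combining delivers $\sum_t r_t^\top q_t\ge\tfrac{\rho}{\rho+H}\,T\,\text{OPT}_{\overline r,\overline G}-\BigOL{\Lambda\sqrt T}$, the claimed fraction.

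For the violation bound I would argue directly on the dual iterates rather than through $(\star)$. Because $\lambda_{t,i}+\eta[\widehat G_t]_i\le\Lambda+\eta H\le T^{1/4}$, the upper face of $\mathcal{K}$ is never active, so $\Pi_{\mathcal{K}}$ truncates only at $0$ and hence $\lambda_{t+1,i}\ge\lambda_{t,i}+\eta[\widehat G_t]_i$. Telescoping from $\lambda_{1,i}=0$ gives $\eta\sum_t[\widehat G_t]_i\le\lambda_{T+1,i}\le\Lambda$, i.e.\ $\sum_t[\widehat G_t]_i\le\Lambda/\eta=\BigOL{\Lambda\sqrt T}$ for every coordinate $i$. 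A second Azuma step on the martingale $\widehat G_t-G_t^\top q_t$ transfers this to $\sum_t[G_t^\top q_t]_i\le\BigOL{\Lambda\sqrt T}$, and taking the maximum over $i\in[m]$ yields $V_T\le\BigOL{\Lambda\sqrt T}$.

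The main obstacle is the bookkeeping that keeps every term at order $\BigOL{\Lambda\sqrt T}$ instead of degenerating to $T^{3/4}$: this rests entirely on Condition~\ref{ass:rhoassumption}, which simultaneously certifies $\Lambda\le T^{1/4}$ (so the clipping at $T^{1/4}$ is inactive and the telescoping argument never loses a $T^{1/4}/\eta$ factor) and, via Theorem~\ref{theo: zeta}, keeps $\lVert\lambda_t\rVert_1\le\Lambda$ (bounding both the primal loss range and the Azuma deviations). The remaining delicate points are the high-probability concentration steps that convert the trajectory-sampled gradients $\widehat G_t$ into the expected constraint values $G_t^\top q_t$, together with the analogous step inside the primal guarantee. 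These events, unioned with those of Theorems~\ref{theo: primal} and~\ref{theo: zeta}, give the stated confidence; the single extra $\delta$ in the stochastic-reward case (yielding $1-14\delta$ versus $1-13\delta$) is the additional concentration of $\sum_t r_t^\top q^{*}$ around $T\,\text{OPT}_{\overline r,\overline G}$, which is exact and hence free when the rewards are adversarial.
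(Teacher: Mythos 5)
Your proposal is correct and follows essentially the same route as the paper's proof: the mixture comparator $\widetilde q=\frac{\rho}{H+\rho}q^{*}+\frac{H}{H+\rho}q^{\circ}$ (feasible at every round since $g_{t,i}^\top q^{*}\le H$ and $g_{t,i}^\top q^{\circ}\le-\rho$), the primal regret on the descaled Lagrangian loss, the dual \texttt{OGD} regret against $\underline{0}$ plus an Azuma step, and Theorem~\ref{theo: zeta} to keep every term at $\BigOL{\Lambda\sqrt T}$. Your violation argument (telescoping $\lambda_{t+1,i}\ge\lambda_{t,i}+\eta[\widehat G_t]_i$ since the upper face of $\mathcal{K}$ is inactive, then Azuma) is exactly the paper's Lemma~\ref{lemma: lambda Vhat} and Lemma~\ref{lemma: Vhat V} stated in unrolled form, and your accounting of the extra $\delta$ for stochastic rewards matches as well.
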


\subsubsection{A Weaker Baseline}
In this section, we show that the impossibility result by~\citet{Mannor} can be circumvented by adopting a different baseline in the regret definition. 
Precisely, we compute the weaker baseline as the solution to the following linear program:
\begin{equation*}
	\text{OPT}^\mathcal{W}:=
	\begin{cases}
		\max_{  q \in \Delta(M)} &   \overline{r}^{\top}  q\\
		\,\,\, \textnormal{s.t.} & G_t^{\top}  q \leq  \underline{0} \quad \forall t \in [T].
	\end{cases}
\end{equation*}
Notice that, in the previous sections, we allow the optimal policy $q^*$ to satisfy the constraints on average, \emph{i.e.}, $\sum_{t=1}^T G_t^\top q^* \le 0$. In such a case, the set of feasible policies is much smaller than the one associated with the weaker baseline, that is, when a feasible policy must satisfy the constraints \emph{at each episode}. Given the new baseline, we can rewrite the regret as $R_T\coloneqq T \,\text{OPT}^{\mathcal{W}}-\sum_{t=1}^T r_t^\top q_t$.

When the regret is computed w.r.t.~the weaker baseline, we can recover the same theoretical results of the stochastic setting. Precisely, when Condition~\ref{ass:rhoassumption} holds we have the following result.
\begin{restatable}{theorem}{weakbasecond}
	\label{thm:weakbasecond}
	Suppose that Condition~\ref{ass:rhoassumption} holds and the constraints are generated adversarially. Then, for any $\delta\in(0,1)$, Algorithm~\ref{Alg: total} attains:
	\begin{equation*}
		R_T \le \BigOL{\Lambda\sqrt{T}}, \quad  V_T \le \BigOL{\Lambda\sqrt{T}},
	\end{equation*}
	with probability at least $1-13\delta$ when the rewards are stochastic and at least $1-12\delta$ when the rewards are adversarial.
\end{restatable}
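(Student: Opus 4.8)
The plan is to reduce the weaker-baseline theorem to the analysis already carried out for the stochastic setting under Condition~\ref{ass:rhoassumption} (Theorem~\ref{theo: assumption + stoch cost}). The key observation is that the feasible region of $\text{OPT}^{\mathcal{W}}$ is defined by the constraints $G_t^\top q \le \underline{0}$ holding \emph{at every} $t\in[T]$, which is strictly more demanding than satisfying them on average. In particular, the maximizer $q^{\mathcal{W}}$ of $\text{OPT}^{\mathcal{W}}$ also satisfies $\overline{G}^\top q^{\mathcal{W}} = \frac{1}{T}\sum_{t=1}^T G_t^\top q^{\mathcal{W}} \le \underline{0}$, so $q^{\mathcal{W}}$ is feasible for $\text{OPT}_{\overline r,\overline G}$ as defined in~\eqref{lp:offline_opt}. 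This lets me treat $q^{\mathcal{W}}$ as a legitimate comparator in the same primal-dual regret decomposition used for the stochastic case.

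First I would write the standard primal-dual decomposition of the regret against the comparator $q^{\mathcal{W}}$. For any fixed $\lambda$, the Lagrangian evaluated at $q^{\mathcal{W}}$ satisfies $\mathcal{L}_{\overline r, G_t}(q^{\mathcal{W}},\lambda) = \overline r^\top q^{\mathcal{W}} - \lambda^\top(G_t^\top q^{\mathcal{W}}) \ge \overline r^\top q^{\mathcal{W}} = \text{OPT}^{\mathcal{W}}$, where the inequality uses $\lambda \ge \underline 0$ together with the \emph{per-round} feasibility $G_t^\top q^{\mathcal{W}} \le \underline 0$. This is precisely the place where the weaker baseline pays off: unlike the on-average baseline of the adversarial theorem, here the comparator's Lagrangian penalty is nonpositive term-by-term, so no negative contribution accumulates across episodes and we do \emph{not} lose the $\rho/(\rho+H)$ factor. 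Summing over $t$, I obtain $T\,\text{OPT}^{\mathcal{W}} \le \sum_{t=1}^T \mathcal{L}_{\overline r, G_t}(q^{\mathcal{W}},\lambda_t)$ for the realized dual iterates $\lambda_t$, which bounds the regret by the primal interval/regret guarantee plus the dual-side terms.

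Next I would bound the two halves exactly as in the stochastic analysis. The primal side is controlled by Theorem~\ref{theo: primal}, the no-interval regret bound for \texttt{FS-PODB}, which gives $\widetilde{\mathcal{O}}(\Xi\sqrt T)$ against the comparator policy $\pi^{q^{\mathcal{W}}}$; the dual side is controlled by the \texttt{OGD} regret on the $\lambda$-updates together with a standard concentration argument linking the empirical constraint violations $G_t^\top q_t$ to the estimated occupancy measures under bandit feedback. The crucial input is Theorem~\ref{theo: zeta}: under Condition~\ref{ass:rhoassumption} the multipliers obey $\|\lambda_t\|_1 \le \Lambda$ for all $t$, which simultaneously bounds the effective loss range $\Xi_t$ (hence $\Xi_{t_1,t_2}\le \widetilde{\mathcal{O}}(\Lambda)$ in the primal bound) and caps the dual magnitude, yielding both $R_T \le \widetilde{\mathcal{O}}(\Lambda\sqrt T)$ and $V_T \le \widetilde{\mathcal{O}}(\Lambda\sqrt T)$. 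The violation bound in particular follows from choosing $\lambda$ adversarially to expose the largest constraint and running the same dual-regret-plus-concentration argument; note $G_t^\top q^{\mathcal{W}} \le \underline 0$ again ensures the comparator contributes nothing to the violation sum.

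The main obstacle I anticipate is \emph{not} the algebra of the decomposition but verifying that Theorem~\ref{theo: zeta} still applies verbatim with the adversarial constraints $\{G_t\}$. That theorem is stated under Condition~\ref{ass:rhoassumption}, and the adversarial definition of $\rho$ (the $\max_q \min_t \min_i -[G_t^\top q]_i$ form) is exactly what guarantees the existence of a single $q^{\circ}$ strictly feasible at \emph{every} round, which is the ingredient the contradiction argument in Theorem~\ref{theo: zeta} needs. I would therefore take care to confirm that the interval-regret comparison underlying Theorem~\ref{theo: zeta} is instantiated with the adversarial $\rho$ and with $q^{\circ}$ feasible per-round, so that the bound $\Lambda = 112mH^2/\rho^2$ carries over unchanged. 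Once the automatic boundedness of the multipliers is secured under the adversarial definition of $\rho$, the remaining steps are identical to the stochastic proof, and the high-probability budget ($1-13\delta$ for stochastic rewards, $1-12\delta$ for adversarial rewards) is obtained by union-bounding the primal, dual, and multiplier-boundedness events exactly as there.
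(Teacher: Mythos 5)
Your regret bound is essentially the paper's own argument: you compare against the weak baseline $q^{\mathcal{W}}$, use per-round feasibility $G_t^\top q^{\mathcal{W}} \le \underline{0}$ together with $\lambda_t \ge \underline{0}$ to drop the comparator's Lagrangian penalty term-by-term (this is exactly how the paper avoids the $\nicefrac{\rho}{H+\rho}$ factor), and then assemble the primal bound $\mathcal{E}^P_T$ (Theorem~\ref{theo: primal} via Lemma~\ref{lemma: loss primal-loss true}), the dual bound $\mathcal{E}^D_T(\underline{0})$ (Theorem~\ref{theo: dual}), the Azuma term $\Lambda\mathcal{E}^{\mathbb{I}}$, and $\mathcal{E}^r$ for stochastic rewards, with $\Lambda$ supplied by Theorem~\ref{theo: zeta}. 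Your worry about Theorem~\ref{theo: zeta} under adversarial constraints is legitimate but already resolved in its proof, which has a separate adversarial branch using the adversarial definition of $\rho$ and $q^{\circ}$.

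The genuine gap is the violation bound. You propose to obtain it by dual regret against a comparator $\widetilde{\lambda}$ ``chosen adversarially to expose the largest constraint,'' but this mechanism cannot deliver $\BigOL{\Lambda\sqrt{T}}$. Concretely, $\widetilde{\lambda}$ must lie in $\mathcal{K}=[0,T^{\nicefrac{1}{4}}]^m$, and the dual regret inequality gives
\begin{equation*}
\widetilde{\lambda}_{i^*}\,\widehat{V}_{T,i^*} \le \sum_{t=1}^T \lambda_t^\top \sum_{h=0}^{H-1} G_t(x_h,a_h) + \mathcal{E}^D_T(\widetilde{\lambda}),
\end{equation*}
where the first term on the right can only be controlled through the primal regret as $\sum_t \lambda_t^\top G_t^\top q_t \le \sum_t r_t^\top q_t + \mathcal{E}^P_T \le HT + \mathcal{E}^P_T$ (plus concentration). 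Dividing by $\widetilde{\lambda}_{i^*}$ leaves a term $HT/\widetilde{\lambda}_{i^*}$, and since $\widetilde{\lambda}_{i^*}\le T^{\nicefrac{1}{4}}$ this route yields at best $\BigOL{T^{\nicefrac{3}{4}}}$ --- it is precisely the argument of Lemma~\ref{lemma: nocondition Vtilde}, which the paper invokes only when Condition~\ref{ass:rhoassumption} \emph{fails}. To get $\BigOL{\Lambda\sqrt{T}}$ the paper uses a different and more direct mechanism: Lemma~\ref{lemma: lambda Vhat} shows that the \texttt{OGD} iterates themselves accumulate the empirical violations, $\lambda_{t,i}\ge \eta\,\widehat{V}_{t-1,i}$ (the upper projection onto $T^{\nicefrac{1}{4}}$ never binds because $\Lambda\le T^{\nicefrac{1}{4}}$ under Condition~\ref{ass:rhoassumption}), so the multiplier bound of Theorem~\ref{theo: zeta} immediately gives $\widehat{V}_{T,i^*}\le \lambda_{T+1,i^*}/\eta \le \Lambda/\eta = \BigOL{\Lambda\sqrt{T}}$, and Lemma~\ref{lemma: Vhat V} (Azuma) transfers this from the empirical violations to $V_T$. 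Without this step your proposal establishes only the regret half of the theorem.
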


We conclude the section by analyzing the scenario in which Condition~\ref{ass:rhoassumption} does \emph{not} hold.
\begin{restatable}{theorem}{weakbasenocond}
	\label{thm:weakbasenocond}
	Suppose that Condition~\ref{ass:rhoassumption} does not hold and the constraints are generated adversarially. 
	Then, for any $\delta\in(0,1)$, Algorithm~\ref{Alg: total} attains:
	\begin{equation*}
		R_T \le \BigOL{T^{\nicefrac{3}{4}}}, \quad V_T \le \BigOL{T^{\nicefrac{3}{4}}},
	\end{equation*}
	with probability at least $1-12\delta$ when the rewards are stochastic and at least $1-11\delta$ when the rewards are adversarial.
\end{restatable}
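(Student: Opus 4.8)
The plan is to treat this as the adversarial-constraint analogue of Theorem~\ref{theo: not cond stoc constr}, exploiting that the weaker baseline $q^*$ satisfies $G_t^\top q^* \le \underline 0$ for \emph{every} $t \in [T]$ --- a per-episode strengthening of the average feasibility used in the stochastic case, which is the only structural property the argument needs. Since Condition~\ref{ass:rhoassumption} fails, I cannot invoke Theorem~\ref{theo: zeta}; instead I bound the multipliers by the explicit cap of the dual decision space $\mathcal{K}$, namely $\|\lambda_t\|_1 \le m T^{1/4}$ for all $t$, whence $\Gamma_t \le 1 + mT^{1/4}$ and $\Xi_{1,T} \le 2(1+mT^{1/4}) = \BigOL{T^{1/4}}$. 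Feeding this into the full-horizon instance ($t_1=1$, $t_2=T$) of Theorem~\ref{theo: primal} for the scaled Lagrangian loss gives $\sum_{t=1}^T \ell_t^\top (q_t - q^*) \le \BigOL{\Xi_{1,T}\sqrt T} = \BigOL{T^{3/4}}$; and since $\sum_{x,a} q_t(x,a) = \sum_{x,a}q^*(x,a) = H$, the constant offset $\Gamma_t H$ cancels, so the same bound holds for the non-scaled loss $\ell_t^{\mathcal L}$.

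Next I would set up the bandit-to-expectation concentration. Writing $\hat G_t \coloneqq \sum_{h=0}^{H-1} G_t[x_h,a_h]$, the trajectory sampling gives $\mathbb E[\hat G_t \mid \mathcal F_{t-1}] = G_t^\top q_t$, so by Azuma's inequality the unweighted sum concentrates around $\sum_t G_t^\top q_t$ with error $\BigOL{H\sqrt T}$, while the $\lambda_t$-weighted martingale $\sum_t \lambda_t^\top(\hat G_t - G_t^\top q_t)$ concentrates with error $\BigOL{\max_t\|\lambda_t\|_1 H \sqrt T} = \BigOL{T^{3/4}}$ (when the rewards are stochastic an extra reward-concentration event is required, which accounts for the one additional $\delta$). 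All high-probability guarantees --- Theorem~\ref{theo: primal} together with these Azuma bounds --- are then collected by a union bound.

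For the regret I would combine the primal bound with the dual \texttt{OGD} guarantee taken against the comparator $\lambda=\underline 0$, which yields $\sum_t \lambda_t^\top \hat G_t \ge -\tfrac{\eta}{2}\sum_t\|\hat G_t\|^2 = -\BigOL{\sqrt T}$ and hence, after concentration, $\sum_t \lambda_t^\top (G_t^\top q_t) \ge -\BigOL{T^{3/4}}$. Rearranging the primal inequality yields $R_T \le \BigOL{T^{3/4}} - \sum_t\lambda_t^\top (G_t^\top q_t) + \sum_t \lambda_t^\top (G_t^\top q^*)$, and dropping $\sum_t\lambda_t^\top (G_t^\top q^*) \le 0$ (per-episode feasibility of $q^*$ together with $\lambda_t\ge\underline 0$) gives $R_T \le \BigOL{T^{3/4}}$.

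For the violation, I would fix the maximally violated constraint $i^*$ and apply the dual \texttt{OGD} regret against the comparator equal to $U \coloneqq T^{1/4}$ on coordinate $i^*$ and $0$ elsewhere, giving $U\sum_t \hat G_{t,i^*} \le \sum_t\lambda_t^\top\hat G_t + \tfrac{U^2}{2\eta} + \tfrac{\eta}{2}\sum_t\|\hat G_t\|^2$. I would upper-bound $\sum_t\lambda_t^\top (G_t^\top q_t)$ through the primal inequality together with the \emph{trivial} reward bound $-R_T = \sum_t r_t^\top(q_t-q^*) \le HT$, obtaining $\sum_t\lambda_t^\top (G_t^\top q_t) \le \BigOL{T^{3/4}} + HT = \BigOL{T}$. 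Dividing the dual inequality by $U=T^{1/4}$ then converts every term --- the linear reward term $HT$, the dual penalty $\tfrac{U^2}{2\eta} = \tilde{\Theta}(T)$, and the $\BigOL{T^{3/4}}$ concentration error --- into at most $\BigOL{T^{3/4}}$, so that $V_T \le \sum_t\hat G_{t,i^*} + \BigOL{H\sqrt T} \le \BigOL{T^{3/4}}$. The main obstacle, and the point requiring genuine care, is precisely this violation step: unlike the regret, one cannot bound $\sum_t\lambda_t^\top (G_t^\top q_t)$ without incurring the crude linear reward term, and the argument closes only because the capped comparator scale $U = T^{1/4}$ --- the largest magnitude allowed before the dual regret $U^2/\eta$ overshoots $\BigOL{T^{3/4}}$, given the learning rate $\eta = \tilde{\Theta}(1/\sqrt T)$ --- divides that linear term down to the target rate, mirroring how the explicit dual cap drives the $\BigOL{T^{3/4}}$ guarantee of Theorem~\ref{theo: not cond stoc constr}.
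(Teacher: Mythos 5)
Your proposal is correct and follows essentially the same route as the paper: your regret argument reproduces the paper's primal--dual decomposition (drop $\sum_t \lambda_t^\top G_t^\top q^* \le 0$ by per-episode feasibility of the weak baseline, bound the remainder by $\mathcal{E}^P_T + \mathcal{E}^D_T(\underline{0}) + mT^{\nicefrac{1}{4}}\mathcal{E}^{\mathbb{I}}$), and your violation argument is exactly the mechanism of the paper's Lemma~\ref{lemma: nocondition Vtilde} --- dual regret against the comparator with $T^{\nicefrac{1}{4}}$ on coordinate $i^*$ and zero elsewhere, the crude $HT$ reward bound, concentration of the trajectory-based estimates, and division by $T^{\nicefrac{1}{4}}$. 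The only cosmetic difference is that your violation step uses the per-episode-feasible $q^*$ as the primal comparator where the paper's lemma uses the max-margin $q^\circ$; both choices make the corresponding Lagrangian term non-positive, so the two proofs coincide.
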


Intuitively, Theorems~\ref{thm:weakbasecond}~and~\ref{thm:weakbasenocond} can be proved by the fact that playing the optimal policy guarantees small violations independently on the episode the optimum is chosen. This is \emph{not} the case of the stronger baseline, since playing the optimum in some episodes may lead to arbitrarily large violations.

\bibliography{example_paper}
\bibliographystyle{unsrtnat}


\newpage

\appendix

\section*{Appendix}
The Appendix is structured as follows:
\begin{itemize}
	\item In Section~\ref{App: related works}, we provide additional related works.
	\item In Section~\ref{App: additional notation}, we provide additional notation employed in the rest of the appendix.
	\item In Section~\ref{App: dictionary}, we provide the events dictionary.
	\item In Section~\ref{App: Primal} we provide the theoretical guarantees attained by Algorithm~\ref{Alg: policy opt update}. Precisely, we provide the complete version of the primal algorithm (see Algorithm~\ref{Alg: policy opt}), and analyze the related performances.
	\item In Section~\ref{App: dual}, we provide the theoretical guarantees attained by the dual algorithm.
	\item In Section~\ref{App: lagrangian}, we provide the analysis to bound the Lagrange multipliers during the learning dynamic.
	\item  In Section~\ref{App: stochastic constraints}, we provide the theoretical guarantees attained by Algorithm~\ref{Alg: total} when the constraints are stochastic.
	\item In Section~\ref{App:adversarial constraints}, we provide the theoretical guarantees attained by Algorithm~\ref{Alg: total} when the constraints are adversarial.
	\item In Section~\ref{App: constraints at each episode} we provide the theoretical guarantees attained by Algorithm~\ref{Alg: total} when the constraints are adversarial and the baseline is computed w.r.t. the policies that satisfy the constraints at each episode.
	\item In Section~\ref{App: aux lemma adapted} we provide technical lemmas employed in our work.
	\item In Section~\ref{App: aux lemma}, we provide auxiliary lemmas from existing works.
	
\end{itemize}

\section{Related Works}
\label{App: related works}

In this section we provide further discussions on the works closely related to ours.  We first provide some works in the field of unconstrained online MDPs  (see~\citep{Near_optimal_Regret_Bounds,even2009online,neu2010online,maran2024online} for some initial results on the topic). The setting studied in these works generally differentiates the problem based on the nature of the losses (either stochastic or adversarial), the knowledge of the transition probability, and the nature of the feedback. Usually two types of feedback are considered: in the \textit{full-information feedback} model, the entire loss function is observed after the learner's choice, while in the \textit{bandit feedback} model, the learner only observes the loss due to the chosen action.  
 
\citet{Minimax_Regret} study the problem of optimal exploration in episodic MDPs with unknown transitions, stochastic losses and bandit feedback .
The authors improve the previous result by \citet{Near_optimal_Regret_Bounds} designing an algorithm whose  upper bound on the  regret match the lower bound for this class,   $\Tilde{\mathcal{O}}(\sqrt{T})$.
\citet{rosenberg19a} studies the setting of episodic MDPs with adversarial losses, unknown transitions, and full information feedback. In this case the authors present an online algorithm exploiting entropic regularization and providing a regret upper bound of $\Tilde{\mathcal{O}}(\sqrt{T})$.
The same setting is investigated  when the feedback is bandit by \citet{OnlineStochasticShortest} who attain a regret upper bound of the order of $\Tilde{\mathcal{O}}(T^{3/4})$, which is improved by
\citet{JinLearningAdversarial2019} by providing an algorithm that achieves in the same setting a regret upper bound of $\Tilde{\mathcal{O}}(\sqrt{T})$.~\citet{bacchiocchi2023online} study online learning in adversarial MDPs when the feedback received is the one associated to a different agent. Finally, \citet{luo2021policy} provide an optimal policy optimization algorithm for adversarial MDPs with bandit feedback.

In case of constrained problem, an fundamental result is presented by \citet{Mannor}, who show that it is impossible to attain both sublinear regret and constraints violations when both the losses and constraints are adversarial. 
To overcome such an impossibility result, \citet{liakopoulos2019cautious} study a class of online learning problems with long-term budget constraints that can be chosen by an adversary and they define a new notion of regret.
The new learner’s regret metric introduces the notion of a \textit{K-benchmark}, \emph{i.e.}, a comparator that meets the problem’s allotted budget over any window of length $K$.
\citet{castiglioni2022online, Unifying_Framework} are the first to provide a \textit{best-of-both-worlds} algorithm for online learning problems with long-term constraints, being the constraints stochastic or chosen by an adversary.

Constrained problems have been also studied in the context of CMDPs; however almost all previous works focus on the setting where the constraints are chosen stochastically.
\citet{Online_Learning_in_Weakly_Coupled} study the case of episodic CMDPs with known transition proabability, full-feedback, adversarial losses and stochastic constraints. The  algorithm presented by the authors attains an upper bound both for constraints violation and for the regret of the order of $\Tilde{\mathcal{O}}(\sqrt{T})$ .
\citet{Constrained_Upper_Confidence} present, in the setting of stochastic losses and constraints, where the transition probabilities are known and the feedback is bandit, an upper bound on the regret of their algorithm of the order of $\Tilde{\mathcal{O}}(T^{3/4})$, while the cumulative constraint violations is guaranteed to be below a threshold with a given probability. 
	\citet{bai2020provably} provide the first algorithm to achieve sublinear regret when the transition probabilities are unknown, assuming that the rewards are deterministic and the constraints are stochastic with a particular structure.
\citet{Exploration_Exploitation} study the case where transition probabilities, rewards, and constraints are unknown and stochastic, while the feedback is bandit. The authors propose two approaches
to deal with the exploration-exploitation dilemma in episodic CMDPs guaranteeing sublinear regret and constraint violations. 
\citet{Upper_Confidence_Primal_Dual} provide a primal-dual approach based on \textit{optimism in the face of uncertainty}. This work shows the effectiveness of such an approach when dealing with episodic CMDPs with adversarial losses and stochastic constraints, achieving both sublinear regret and constraint violation with full-information feedback.
\citet{Non_stationary_CMDPs},~\citet{ding_non_stationary} and~\citet{stradi2024learningconstrainedmarkovdecision} consider the case in which rewards and constraints are non-stationary, assuming that their variation is bounded.
Thus, their results are \emph{not} applicable to general adversarial settings.
\citet{stradi2024learning}, in the setting with adversarial losses, stochastic constraints and partial feedback, achieve sublinear regret and sublinear positive constraints violations.
\citet{stradi24a} propose the first \emph{best-of-both-worlds} algorithm for CMDPs, assuming \emph{full feedback} on the rewards and constraints. Finally, \citet{bacchiocchi2024markov} study CMDPs with partial observability on the constraints.

\section{Additional Notation}
\label{App: additional notation}
In the following section, we introduce some useful notation from policy optimization.
First, we define the value function $V^\pi(x;f)$, for policy $\pi$, state $x$ and generic function $f$ that assumes values for each state $x\in X$ and for each action $a\in A$. Formally,
\begin{equation*}
	V^\pi(x;f):=\mathbb{E}\left[\sum_{j=h(x)+1}^H f(x_j,a_j)|a_j \sim \pi(\cdot|x_j),x_j\sim P(\cdot|x_{j-1},a_{j-1})\right],
\end{equation*} 
where $h(x)$ is the layer $h$ such that $x\in X_h$.
Notice that the value function can be written using the occupancy measure $q^{\pi,P}$ generated by the policy $\pi$ and the transition probability $P$ as :  $V^\pi(x_0;f)=\sum_{x,a}q^{\pi,P}(x,a)f(x,a)$.
We introduce also a ${Q}$-function of a generic function $f$ as:
\begin{equation*}
	\begin{cases}
		&Q(x,a;f)= f(x,a) + \mathbb{E}_{x'\sim P(\cdot|x,a)}\left[V^\pi(x';f)\right]\\
		& V^\pi(x;f) = \mathbb{E}_{a \sim \pi(\cdot|x)}\left[Q^\pi(x,a;f)\right]\\
		& V^\pi(x_H;f)=0
	\end{cases}
\end{equation*}
In addition we will use the notation $Q_t(x,a)$ to indicate the $Q$-function computed with respect to the function $\ell_t$, \emph{i.e.} $Q(x,a;\ell_t)$ .

\section{Dictionary}
\label{App: dictionary}
In the following, we provide the definition of different quantities which will be employed in the rest of the appendix. This is done for the ease of presentation.
\begin{itemize}
	\item \textbf{Quantity $\mathcal{E}^P_{t_1,t_2}$:}
	\begin{equation*}
		\mathcal{E}^P_{t_1,t_2}=U_1\Xi_{t_1,t_2}C\sqrt{T}+ U_2 \Xi_{t_1,t_2}\frac{(t_2-t_1+1)}{C\sqrt{T}}+ U_3\Xi_{t_1,t_2}\frac{1}{C\sqrt{T}}+U_4 \Xi_{t_1,t_2}\sqrt{T},
		\end{equation*}
		where:
		\begin{itemize}
			\item $U_1=  6H^2\ln\left(\frac{H|A|T^2}{\delta}\right)$
			\item $ U_2= {9H|X||A|}$
			\item $U_3=\frac{H}{2}\ln \left(\frac{HT^2}{\delta}\right)$
			\item $ U_4=30H^2|X|^2\sqrt{2|A|\ln\left(\frac{T|X|^2|A|}{\delta}\right)}$.
		\end{itemize}
		With probability at least $1-4\delta$ it holds $R_{t_1,t_2}^P \le \mathcal{E}^P_{t_1,t_2},~\forall t_1,t_2 \in [T]:1\le t_1 \le t_2 \le T$ by Theorem~\ref{theo: primal}.
		\item  \textbf{Quantity $\mathcal{E}^D(\underline{0})$:}
		\begin{equation*}
			\mathcal{E}^D(\underline{0})=D_1\frac{\lVert\lambda_{t_1}\rVert_2^2}{\eta}+ D_2\eta(t_2-t_1+1),
		\end{equation*}
		where:
		\begin{itemize}
				\item $D_1=\frac{1}{2}$
				\item $D_2=\frac{mH^2}{2}$.
		\end{itemize}
		It holds $R_{t_1,t_2}^D(\underline{0})\le \mathcal{E}^D(\underline{0}),~\forall t_1,t_2 \in [T]:1\le t_1 \le t_2 \le T$ by Theorem~\ref{theo: dual}.
		\item \textbf{Quantity $\mathcal{E}^G_{t_1,t_2}$:}
			\begin{equation*}
				\mathcal{E}_{t_1,t_2}^G= B_1\sqrt{(t_2-t_1+1)},
			\end{equation*}
			where: \begin{itemize}
				\item $B_1=2H\sqrt{\ln\left(\frac{T^2}{\delta}\right)}$.
			\end{itemize}
			Given a $q \in \Delta(M)$, with probability at least $1-\delta$ it holds in case of stochastic constraints $\sum_{t=t_1}^{t_2}(G_t^\top q-\overline{G}^\top q)\le \mathcal{E}_{t_1,t_2}^G$, by Azuma-Hoeffding inequality.
			\item \textbf{Quantity $\mathcal{E}_{t_1,t_2}^{\mathbb{I}}$:}
			\begin{equation*}
				\mathcal{E}_{t_1,t_2}^\mathbb{I}= F_1 \sqrt{(t_2-t_1+1)},
			\end{equation*}
			where: \begin{itemize}
				\item $F_1=H\sqrt{2\ln\left(\frac{T^2}{\delta}\right)}$.
			\end{itemize}
			With probability at least $1-\delta$ it holds $\sum_{t=t_1}^{t_2}\sum_{x,a}(\mathbb{I}_t(x,a)-q_t(x,a))\le \mathcal{E}_{t_1,t_2}^\mathbb{I}$, and with probability at least $1-\delta$ it holds $\sum_{t=t_1}^{t_2}\sum_{x,a}(q_t(x,a)-\mathbb{I}_t(x,a))\le \mathcal{E}_{t_1,t_2}^\mathbb{I}$, by Azuma-Hoeffding inequality.
			\item \textbf{Quantity $C$:}
			\begin{equation*}
				C = 252 |X||A|H
			\end{equation*}
			\item \textbf{Quantity $D$:}
			\begin{align*}
			D & = 84672mH^2|X|^2|A|\\
			& = 336mH|X|C.
		\end{align*}
\end{itemize}

\section{Omitted Proofs for The Primal Algorithm}
\label{App: Primal}
In this section we study the guarantees attained by the primal procedure, namely Algorithm \ref{Alg: policy opt}.

\begin{algorithm}[H]
	\caption{ \texttt{FS-PODB}}
	\label{Alg: policy opt}
	\begin{algorithmic}[1]
		\Require $X,A, \sigma=\frac{1}{T},C$
		\State $\mathcal{P}_1\gets$ set of all possible transitions 
		\State $\pi_1(a|x)=\frac{1}{|A|}\quad \forall(x,a)\in X \times A$
		\State $\Xi_0 \gets 1$
		\State $\gamma \gets \frac{1}{C\sqrt{T}}$
		\For{$t=1, \ldots, T$}
		\State Play $\pi_t$, observe $\{(x_h,a_h)\}_{h=0}^{H-1}$, losses $\{\ell_t(x_h,a_h)\}_{h=0}^{H-1}$ and $\Xi_t$
		\State $\eta_t \gets \frac{1}{2H\Xi_tC\sqrt{T}}$
		\State For all $h=0,\ldots,H-1$ and $(x,a) \in X_h \times A$:
		\begin{equation*}
			L_{t,h}= \sum_{j=h}^{H-1}\ell_{t}(x_h,a_h)
		\end{equation*}
		\begin{equation*}
			\widehat{Q}_t(x,a)=\frac{L_{t,h}}{\overline{q}_t(x,a)+\gamma}\mathbb{I}_t(x,a),
		\end{equation*}
		where  $\overline{q}_t(x,a)=\underset{\widehat{P}\in \mathcal{P}_t}{\max}q^{\widehat{P},\pi_t}(x,a)$ and $\mathbb{I}_t(x,a)=\mathbb{I}\{x_{t,h}=x,a_{t,h}=a\}$.
		\State For all $(x,a)\in X\times A$:
		\begin{equation*}
			b_t(x)= \mathbb{E}_{a \sim \pi_t(\cdot|x)}\left[\frac{3\gamma H \Xi_t + H \Xi_t \left(\overline{q}_t(x,a)-\underline{q}_t(x,a)\right)}{\overline{q}_t(x,a)+\gamma}\right]
		\end{equation*}
		\begin{equation*}
			B_t(x,a)=b_t(x) + \left(1 + \frac{1}{H}\right)\underset{\widehat{P}\in \mathcal{P}_t}{\max}\mathbb{E}_{x' \sim \widehat{P}(\cdot|x,a)}\mathbb{E}_{a \sim \pi_t(\cdot|x')}\left[B_t(x',a')\right]
		\end{equation*}
		where $\underline{q}_t = \underset{\widehat{P}\in \mathcal{P}_t}{\min}q^{\widehat{P},\pi_t}(x,a)$, and $B_t(x_H,a)=0 $ for all $a$.
		\State For all $(x,a) \in X \times A$:
		\begin{equation*}
			w_{t+1}(a|x) = (1-\sigma)w_t(a|x)e^{-\eta_t\left(\widehat{Q}_t(x,a)-B_t(x,a)\right)} + \frac{\sigma}{|A|}\sum_{a' \in A}w_t(a'|x)e^{-\eta_t\left(\widehat{Q}_t(x,a')-B_t(x,a')\right)}.
		\end{equation*}
		\begin{equation*}
			\pi_{t+1}(a|x)= \frac{w_{t+1}(a|x)}{\sum_{a' \in A}w_{t+1}(a'|x)}.
		\end{equation*} 
		\State  $\mathcal{P}_{t+1}\gets$\texttt{TRANSITION.UPDATE}($\{(x_h,a_h)\}_{h=0}^{H-1}$)
		\EndFor
	\end{algorithmic}
\end{algorithm}
\begin{theorem}
	\label{theo: policy Q-B}
	For any $\delta\in(0,1)$, Algorithm~\ref{Alg: policy opt} attains, with probability at least $1-4\delta$ and for all $[t_1,\ldots,t_2] \subset [T]$:
	\begin{align*}
		\sum_x q^*(x)&\sum_{t=t_1}^{t_2} \sum_a \left(\pi_t(a|x)-\pi^*(a|x)\right)\left(Q_t^{\pi_t}(x,a)-B_t(x,a)\right) \\
		&= \Xi_{t_1,t_2}o(T) + \sum_{t=t_1}^{t_2} V^{\pi^*}(x_0; b_t)+  \frac{1}{H}\sum_{t=t_1}^{t_2} \sum_{x,a}q^*(x) \pi_t(a|x) B_t(x,a),
	\end{align*}
	for all $t_1, t_2 \in [T] \text{ s.t. } 1 \le t_1 \le t_2\le T$ and where $\Xi_{t_1,t_2}\ge\max_{t \in [t_1,\ldots,t_2]} \max_{x,a}\ell_t(x,a)$. 
\end{theorem}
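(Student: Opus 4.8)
The plan is to read Algorithm~\ref{Alg: policy opt} as $|X|$ coupled instances of fixed-share exponential weights, one per state $x$, each fed the surrogate loss $\widehat{Q}_t(x,\cdot)-B_t(x,\cdot)$, and then to assemble the per-state guarantees through the Bellman-type recursion that defines the dilated bonus $B_t$. Denoting the left-hand side by $S_{t_1,t_2}$, the first move is to insert the \emph{estimated} action-value in place of the true one and write $S_{t_1,t_2}=\mathrm{(I)}+\mathrm{(II)}$ with
\begin{align*}
	\mathrm{(I)} &:= \sum_x q^*(x)\sum_{t=t_1}^{t_2}\big\langle \pi_t(\cdot|x)-\pi^*(\cdot|x),\ \widehat{Q}_t(x,\cdot)-B_t(x,\cdot)\big\rangle,\\
	\mathrm{(II)} &:= \sum_x q^*(x)\sum_{t=t_1}^{t_2}\big\langle \pi_t(\cdot|x)-\pi^*(\cdot|x),\ Q_t^{\pi_t}(x,\cdot)-\widehat{Q}_t(x,\cdot)\big\rangle .
\end{align*}
Here $\mathrm{(I)}$ is pure experts regret and will carry the $\Xi_{t_1,t_2}o(T)$ contribution, whereas $\mathrm{(II)}$ is the bias of the optimistic estimator, which together with the stability term of $\mathrm{(I)}$ is exactly what the bonuses $b_t,B_t$ are engineered to absorb.

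For $\mathrm{(I)}$ I would apply, state by state, the FTRL analysis of the exponential-weights step in Line~\ref{alg: primal line 6} with the negative-entropy regularizer $\psi$ and time-varying rate $\eta_t$, obtaining for each $x$ a bound of the form $\eta_{t_2}^{-1}D_\psi(\pi^*(\cdot|x);\pi_{t_1}(\cdot|x)) + \sum_{t=t_1}^{t_2}\eta_t\sum_a\pi_t(a|x)\widehat{Q}_t(x,a)^2$ up to a fixed-share slack. The decisive point --- and the reason the estimate holds for \emph{every} interval rather than only for $[1,T]$ --- is that the fixed-share mixing with weight $\sigma=1/T$ forces $\pi_t(a|x)\ge\sigma/|A|$ for all $t,a$, so $D_\psi(\pi^*(\cdot|x);\pi_{t_1}(\cdot|x))\le\ln(|A|/\sigma)=\ln(|A|T)$ \emph{uniformly in the interval start} $t_1$. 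Substituting $\eta_t\propto 1/(H\Xi_t C\sqrt T)$ renders the leading divergence term $\widetilde{\mathcal{O}}(\Xi_{t_1,t_2}\sqrt T)$, i.e.\ $\Xi_{t_1,t_2}o(T)$, while the floor $\gamma\propto 1/(C\sqrt T)$ in the denominator $\overline q_t+\gamma$ keeps the importance-weighted second-moment term finite.

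It then remains to show that the residual stability term of $\mathrm{(I)}$ together with the bias $\mathrm{(II)}$ telescopes into $\sum_{t}V^{\pi^*}(x_0;b_t)+\frac1H\sum_t\sum_{x,a}q^*(x)\pi_t(a|x)B_t(x,a)$. For this I would first compute $\mathbb{E}[\widehat{Q}_t(x,a)]=q_t(x,a)Q_t^{\pi_t}(x,a)/(\overline q_t(x,a)+\gamma)$: on the confidence event $P\in\mathcal{P}_t$ one has $\underline q_t\le q_t\le\overline q_t$, so $\widehat{Q}_t$ \emph{underestimates} $Q_t^{\pi_t}$ with per-action bias at most $H\Xi_t(\gamma+\overline q_t-\underline q_t)/(\overline q_t+\gamma)$; averaged over $a\sim\pi_t$ this is, up to the constant $3$ on the $\gamma$ term inserted to absorb concentration, exactly $b_t(x)$ from Line~\ref{alg: primal line 5}. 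Because $\widehat{Q}_t$ is a loss-to-go these per-layer costs accumulate across layers, and the dilated recursion $B_t(x,a)=b_t(x)+(1+\frac1H)\max_{\widehat P\in\mathcal{P}_t}\mathbb{E}_{x'\sim\widehat P,a'\sim\pi_t}[B_t(x',a')]$ is designed precisely to dominate this accumulation: unrolling it under the flow of $q^*$ yields, on $P\in\mathcal{P}_t$, the key identity $\sum_{x,a}q^*(x)\pi^*(a|x)B_t(x,a)\ge V^{\pi^*}(x_0;b_t)+(1+\frac1H)\sum_{x,a}q^*(x)\pi_t(a|x)B_t(x,a)$, which after substitution produces exactly the two bonus terms on the right-hand side (the $(1+\frac1H)$ factor is what leaves only the $\frac1H$ residual across the $H$ layers). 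Converting every conditional-expectation statement into a high-probability one along the interval --- via Azuma--Hoeffding/Freedman concentration for $\widehat{Q}_t$, for its second moment, and for the bonus domination, in addition to the validity of $\mathcal{P}_t$ --- accounts for the four failure events, hence the $1-4\delta$ probability.

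The hardest part is the interplay between \emph{interval-uniformity} and \emph{variance control}: the fixed-share floor $\sigma/|A|$ is what bounds $D_\psi(\pi^*;\pi_{t_1})$ independently of $t_1$ and hence delivers a regret guarantee valid for all $[t_1,t_2]$ simultaneously, but the same importance-weighted, optimistic estimator whose variance must be absorbed by $b_t$ has a range that scales with $\Xi_t/\gamma$, so one must verify that the dilated bonus dominates the accumulated bias on \emph{every} interval at once and not merely in aggregate over $[1,T]$. This is exactly what forces the specific tuning $\eta_t\propto 1/(\Xi_t H C\sqrt T)$, $\gamma\propto 1/(C\sqrt T)$, $\sigma=1/T$, and the $(1+\frac1H)$ dilation factor, and is the step where the concentration bookkeeping is most delicate.
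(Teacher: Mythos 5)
Your decomposition and your treatment of term $\mathrm{(I)}$ match the paper's proof: the paper splits the left-hand side into the exponential-weights regret on the surrogate loss $\widehat{Q}_t-B_t$ (your $\mathrm{(I)}$) plus the estimator error $\langle \pi_t-\pi^*, Q_t^{\pi_t}-\widehat{Q}_t\rangle$ (your $\mathrm{(II)}$, which the paper further splits into a $\pi_t$-weighted part and a $\pi^*$-weighted part), and your fixed-share analysis, with the floor $\pi_t(a|x)\ge \sigma/|A|$ giving $D_\psi(\pi^*(\cdot|x);\pi_{t_1}(\cdot|x))\le\ln(|A|T)$ uniformly in $t_1$, is exactly the paper's argument. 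The gaps are in $\mathrm{(II)}$. The $\pi^*$-weighted part of $\mathrm{(II)}$ cannot be dispatched by generic ``Azuma--Hoeffding/Freedman'' concentration: the estimator has range $H\Xi_t/\gamma=\Theta(\Xi_t H C\sqrt{T})$, and under the weights $q^*(x)\pi^*(a|x)$ its conditional second moment is only bounded by $\Theta(H^2\bar{L}_t^2/\gamma)$ per episode, because there is no cancellation between $q^*(x)\pi^*(a|x)$ and the denominator $\overline{q}_t(x,a)+\gamma$ (which is induced by $\pi_t$). Plugging these into Lemma~\ref{lemma:freedman inequality interval} yields a term of order $H^2\Xi_{t_1,t_2}(t_2-t_1+1)$, i.e., linear in the interval length, not $\Xi_{t_1,t_2}o(T)$; and it cannot be absorbed into $b_t$ either, since $b_t$ is $\pi_t$-averaged. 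This is precisely why the paper invokes the dedicated smoothed-estimator concentration of Lemma~\ref{lem :jin_interval_optimistic} and Corollary~\ref{cor: jin_interval_optimistic} (an exponential-moment argument exploiting the downward bias created by $\gamma$), which gives the bound $\frac{H^2\Xi_{t_1,t_2}}{2\gamma}\ln\left(\frac{HT^2}{\delta}\right)$ simultaneously over all intervals. This idea is absent from your proposal, and it is the step your tuning of $\gamma$ is actually for.

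Your ``key identity'' also misidentifies where the bonus terms on the right-hand side come from, and it points in the wrong direction. Unrolling the recursion for $B_t$ along the flow of $q^*$ on the event $P\in\mathcal{P}_t$ does give a \emph{lower} bound of the type $\sum_{x,a}q^*(x)\pi^*(a|x)B_t(x,a)\ge \sum_x q^*(x)b_t(x)+\left(1+\frac{1}{H}\right)\sum_{x,a}q^*(x)\pi_t(a|x)B_t(x,a)$ (modulo layer-$0$ bookkeeping), but in the theorem's left-hand side the bonus enters as $\sum_x q^*(x)\langle \pi^*(\cdot|x)-\pi_t(\cdot|x), B_t(x,\cdot)\rangle$ with a positive sign, so a lower bound on the $\pi^*$-weighted bonus cannot yield the claimed upper bound --- and no separate bound on this difference is needed anyway, because $B_t$ is part of the linear loss fed to the exponential-weights step and is already handled inside $\mathrm{(I)}$. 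In the paper, the term $\frac{1}{H}\sum_t\sum_{x,a}q^*(x)\pi_t(a|x)B_t(x,a)$ arises from the OMD stability term $\eta_t(\widehat{Q}_t-B_t)^2$ via $\eta_t B_t(x,a)\le\frac{1}{2H}$ (Lemma~\ref{lem: luo eta B}), while $\sum_t V^{\pi^*}(x_0;b_t)$ collects the $\frac{\gamma\Xi_t}{\overline{q}_t+\gamma}$ and $\frac{H\Xi_t(\overline{q}_t-\underline{q}_t+2\gamma)}{\overline{q}_t+\gamma}$ terms coming from the second moment of $\widehat{Q}_t$ and from the Freedman analysis of the $\pi_t$-weighted bias. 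The recursion-unrolling you describe is the content of Lemma~\ref{lemma:adapted eq5} (Luo et al.'s Lemma B.1) and is used only afterwards, to pass from the present theorem to the interval-regret bound of Theorem~\ref{theo: primal}; it plays no role in proving the statement at hand.
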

\begin{proof}
	In the rest of the proof, we will refer as $\bar{L}_t$ to $\max_{\tau \in [t]}\max_{h \in [H]}L_{\tau,h}$ and $\bar{L}_{t_1,t_2}$ to $\max_{\tau \in [t_1,\ldots,t_2]}\max_{h \in [H]}L_{\tau,h}$; therefore, by definition it holds $\bar{L}_t \le H \Xi_t$ for all $t\in [T]$.\\
	As a first step, we decompose $\sum_x q^*(x)\sum_{t=t_1}^{t_2} \sum_a \left(\pi_t(a|x)-\pi^*(a|x)\right)\left(Q_t^{\pi_t}(x,a)-B_t(x,a)\right)$ in three different quantities:
	\begin{align*}
		\sum_x q^*(x)&\sum_{t=t_1}^{t_2} \sum_a \left(\pi_t(a|x)-\pi^*(a|x)\right)\left(Q_t^{\pi_t}(x,a)-B_t(x,a)\right) \\
		& = \underbrace{\sum_x q^*(x)\sum_{t=t_1}^{t_2} \sum_a \left(\pi_t(a|x)-\pi^*(a|x)\right)\left(\widehat{Q}_t(x,a)-B_t(x,a)\right)}_{\circled{1}} \\
		&+ \underbrace{\sum_x q^*(x)\sum_{t=t_1}^{t_2} \sum_a \pi_t(a|x)\left(Q_t^{\pi_t}(x,a)-\widehat{Q}_t(x,a)\right)}_{\circled{2}} \\
		&+ \underbrace{\sum_x q^*(x)\sum_{t=t_1}^{t_2} \sum_a \pi^*(a|x)\left(\widehat{Q}_t(x,a)-Q_t^{\pi_t}(x,a)\right)}_{\circled{3}},
	\end{align*}
	which we proceed to bound separately.
	\paragraph{Bound on \protecting{\circled{1}}.}
	The quantity of interest can be bounded after noticing that Algorithm~\ref{Alg: policy opt} employs a slightly modified version of OMD.
	In fact, recalling the definition of $\pi_t$, we can write:
	\begin{align*}
		\pi_{t+1}(a|x)&= \frac{w_{t+1}(a|x)}{\sum_{a'}w_{t+1}(a'|x)}\\
		&=\frac{(1-\sigma)w_t(a|x)e^{-\eta_t\left(\widehat{Q}_t(x,a)-B_t(x,a)\right)} + \frac{\sigma}{|A|}\sum_{a' \in A}w_t(a'|x)e^{-\eta_t\left(\widehat{Q}_t(x,a')-B_t(x,a')\right)}}{\sum_{a' \in A}w_t(a'|x)e^{-\eta_t\left(\widehat{Q}_t(x,a')-B_t(x,a')\right)}}\\
		&=(1-\sigma)\frac{\pi_t(a|x)e^{-\eta\left(\widehat{Q}_t(x,a)-B_t(x,a)\right)}}{\sum_{a'}\pi_t(a'|x)e^{-\eta\left(\widehat{Q}_t(x,a')-B_t(x,a')\right)}} + \sigma \frac{1}{|A|}.
	\end{align*}
	From now on we will refer to $\frac{\pi_t(a|x)e^{-\eta\left(\widehat{Q}_t(x,a)-B_t(x,a)\right)}}{\sum_{a'}\pi_t(a'|x)e^{-\eta\left(\widehat{Q}_t(x,a')-B_t(x,a')\right)}}$ as $\widetilde{\pi}_{t+1}(x,a)$. Thus, 
	\begin{equation*}
		\pi_{t+1}(a|x) = (1-\sigma)\widetilde{\pi}_{t+1}(x,a) + \frac{\sigma}{|A|}.
	\end{equation*}
	Calling $\psi(\cdot)$ the negative entropy function defined as $\psi(\pi(\cdot|x))\coloneq \sum_a \pi(a|x) \ln \left(\pi(a|x)\right)$, by standard analysis (\emph{e.g.}~\cite{Orabona}), it holds:
	\begin{equation*}
		\widetilde{\pi}_{t+1}(\cdot|x) = \argmin_{\pi(\cdot|x) \in \Delta(A)} \sum_a \left(\widehat{Q}_t(x,a)-B_t(x,a)\right)\pi(a|x) + \frac{1}{\eta}D_{\psi}(\pi(\cdot|x) ; \pi_t(\cdot|x)),
	\end{equation*}
	where $D_{\psi}$ is Bregman divergence w.r.t. the negative entropy function $\psi(\cdot)$.
	Thus, for all $\pi(\cdot|x)$ it holds
	$
		\eta_t \sum_{a}\left(\widehat{Q}_t(x,a)-B_t(x,a)\right)\left(\pi(a|x)-\widetilde{\pi}_{t+1}(x,a)\right) + \langle\nabla \psi(\widetilde{\pi}_{t+1}(\cdot|x))- \nabla \psi(\pi_{t}(\cdot|x)), \pi(\cdot|x)-\widetilde{\pi}_{t+1}(\cdot|x) \rangle \ge 0.
$
	So, for all $\pi(\cdot|x)$ the following holds:
	\begin{align}
		 &\eta_t \langle \widehat{Q}_t(x,\cdot)-  B_t(x,\cdot), \pi_t(\cdot|x)-\pi(\cdot|x) \rangle \nonumber\\
		& = \eta_t \langle  \widehat{Q}_t(x,\cdot)- B_t(x,\cdot)+ \nabla \psi(\widetilde{\pi}_{t+1}(\cdot|x))- \nabla \psi(\pi_{t}(\cdot|x)), \widetilde{\pi}_{t+1}(\cdot|x)-\pi(\cdot|x) \rangle  \nonumber\\
		&  \mkern150mu +\eta_t \langle  \widehat{Q}_t(x,\cdot)- B_t(x,\cdot), \pi_t(\cdot|x)-\widetilde{\pi}_{t+1}(\cdot|x) \rangle \nonumber\\& \mkern300mu + \langle  \nabla \psi(\widetilde{\pi}_{t+1}(\cdot|x))- \nabla \psi(\pi_{t}(\cdot|x)), \pi(\cdot|x)-\widetilde{\pi}_{t+1}(\cdot|x) \rangle \nonumber\\
		& \le \langle \eta_t \left(\widehat{Q}_t(x,\cdot)- B_t(x,\cdot)\right), \pi_t(\cdot|x)-\widetilde{\pi}_{t+1}(\cdot|x) \rangle \nonumber\\& \mkern200mu+ \langle  \nabla \psi(\widetilde{\pi}_{t+1}(\cdot|x))  -\nabla \psi(\pi_{t}(\cdot|x)), \pi(\cdot|x)-\widetilde{\pi}_{t+1}(\cdot|x) \rangle \nonumber \\
		& \le D_{\psi}(\pi(\cdot|x);\pi_t(\cdot|x))- D_{\psi}(\pi(\cdot|x);\widetilde{\pi}_{t+1}(\cdot|x)) - D_{\psi}(\widetilde{\pi}_{t+1}(\cdot|x);\pi_t(\cdot|x)) \nonumber\\
		& \mkern300mu + {\eta_t}\langle \widehat{Q}_t(x,\cdot)- B_t(x, \cdot) , \pi_t(\cdot|x)- \widetilde{\pi}_{t+1}(\cdot|x)\rangle \label{eq:reg1} \\
		& = D_{\psi}(\pi(\cdot|x);\pi_t(\cdot|x))- D_{\psi}(\pi(\cdot|x);\widetilde{\pi}_{t+1}(\cdot|x)) + \frac{\eta_t^2}{2}\sum_{a \in A} \left(\widehat{Q}_t(x,a)- B_t(x,a) \right)^2 \pi_{t}(a|x)\label{eq:reg2},
	\end{align}
	where Inequality~\eqref{eq:reg1} and Inequality~\eqref{eq:reg2} are based on the proofs of Lemma 6.6.  and Lemma 6.9. in \cite{Orabona}. 
	%
	
	Additionally we can show that for all $t \in [T]$:  $D_\psi(\pi(\cdot|x);\pi_t(\cdot|x))-D_\psi(\pi(\cdot|x);\widetilde{\pi}_t(\cdot|x)) \le \sigma \ln(|A|)$. Indeed,
	\begin{align*}
		D_\psi\left(\pi(\cdot|x);\pi_t(\cdot|x)\right)&-D_\psi\left(\pi(\cdot|x);\widetilde{\pi}_t(\cdot|x)\right) 
		 \\&= D_\psi\left(\pi(\cdot|x);(1-\sigma)\widetilde{\pi}_t(\cdot|x)+ \sigma \pi^{\frac{1}{|A|}}\right)-D_\psi\left(\pi(\cdot|x);\widetilde{\pi}_t(\cdot|x)\right) \\
		& \le \sigma D_\psi\left(\pi(\cdot|x);\pi^{\frac{1}{|A|}}\right)-\sigma D_\psi\left(\pi(\cdot|x);\widetilde{\pi}_t(\cdot|x)\right)  \\
		& \le \sigma \ln(|A|),
	\end{align*}
	where the last inequality holds since $D_\psi(\pi(\cdot|x);\widetilde{\pi}_t(\cdot|x)) \ge 0$ and
	\begin{align*}
	 D_\psi(\pi(\cdot|x);\pi^{\frac{1}{|A|}})&=\sum_{a\in A} \pi(a|x) \ln \left(\frac{\pi(a|x)}{\pi^{\frac{1}{|A|}}(a|x)}\right) \\&\le \sum_{a\in A} \pi(a|x) \ln \left(\frac{1}{\pi^{\frac{1}{|A|}}(a|x)}\right)\\&= \sum_{a\in A} \pi(a|x) \ln\left(|A|\right)\\&= \ln (|A|).
	 \end{align*}
	Notice that with we refer as $\pi^{\frac{1}{|A|}}$ to the vector strategy in $[0,1]^{|A|}$ with all elements equal to $\frac{1}{|A|}$.
	
	Moreover we bound $D_\psi(\pi(\cdot|x);\pi_{t_1}(\cdot|x))$, since $\pi_{t_1}(a|x)=(1-\sigma)\widetilde{\pi}_{t_1}(a|x)+\sigma(\frac{1}{|A|})\ge \frac{\sigma}{|A|}$, as follows:
	\begin{align*}
		D_\psi(\pi(\cdot|x);\pi_{t_1}(\cdot|x))&= \sum_{a\in A}\pi(a|x)\ln\left(\frac{\pi(a|x)}{\pi_{t_1}(a|x)}\right)\\ &\le \sum_{a\in A}\pi(a|x)\ln\left(\frac{1}{\pi_{t_1}(a|x)}\right)\\&\le \sum_{a\in A}\pi(a|x)\ln \left(\frac{|A|}{\sigma}\right) \\&=\ln \left(\frac{|A|}{\sigma}\right).
	\end{align*} 
	Putting everything together we have that:
	\begin{subequations}
	\begin{align}
		\circled{1}& = \sum_x q^*(x)\sum_{t=t_1}^{t_2} \sum_a \left(\pi_t(a|x)-\pi^*(a|x)\right)\left(\widehat{Q}_t(x,a)-B_t(x,a)\right) \nonumber\\
		& \le \sum_x q^*(x)\left(\frac{D_\psi(\pi(\cdot|x);\pi_{t_1}(\cdot|x))}{\eta_{t_1}} + \sum_{t=t_1+1}^{t_2}\left(\frac{D_\psi(\pi(\cdot|x);\pi_t(\cdot|x))}{\eta_t}-\frac{D_\psi(\pi(\cdot|x);\widetilde{\pi}_t(\cdot|x))}{\eta_{t+1}}\right)\right) \nonumber\\
		&\mkern250mu+ \sum_x q^*(x)\sum_{t=t_1}^{t_2}\frac{\eta_t}{2}\left( \widehat{Q}_t(x,a)-B_t(x,a) \right)^2 \pi_t(a|x) \label{eq: lagbound1} \\
		& \le \sum_x q^*(x)\left(\frac{D_\psi(\pi(\cdot|x);\pi_{t_1}(\cdot|x))}{\eta_{t_1}} + \sum_{t=t_1+1}^{t_2}\left(\frac{D_\psi(\pi(\cdot|x);\pi_t(\cdot|x))-D_\psi(\pi(\cdot|x);\widetilde{\pi}_t(\cdot|x))}{\eta_{t}}\right)\right) \nonumber \\
		& \mkern250mu+ \sum_x q^*(x)\sum_{t=t_1}^{t_2}\frac{\eta_t}{2}\left( \widehat{Q}_t(x,a)-B_t(x,a) \right)^2 \pi_t(a|x) \label{eq: lagbound2}\\
		& \le \frac{\ln\left(\frac{|A|}{\sigma}\right)}{\eta_{t_1}}+\sigma\sum_{t=t_1+1}^{t_2}\frac{ \ln (|A|)}{\eta_{t_2}} + \sum_x q^*(x)\sum_{t=t_1}^{t_2}\frac{\eta_t}{2}\left( \widehat{Q}_t(x,a)-B_t(x,a) \right)^2 \pi_t(a|x) \label{eq: lagbound3} \\
		& \le \frac{\ln\left(\frac{|A|}{\sigma}\right)}{\eta_{t_1}}+\frac{\sigma T \ln (|A|)}{\eta_{t_2}} + \sum_x q^*(x)\sum_{t=t_1}^{t_2}\frac{\eta_t}{2}\left( \widehat{Q}_t(x,a)-B_t(x,a) \right)^2 \pi_t(a|x) \nonumber \\
		&= \frac{\ln\left(|A|T\right)}{\eta_{t_1}}+\frac{ \ln (|A|)}{\eta_{t_2}} + \sum_x q^*(x)\sum_{t=t_1}^{t_2}\frac{\eta_t}{2}\left( \widehat{Q}_t(x,a)-B_t(x,a) \right)^2 \pi_t(a|x) \nonumber ,
	\end{align}
	\end{subequations}
	where $\sigma=\frac{1}{T}$,
	Inequality \eqref{eq: lagbound1} holds by Inequality \eqref{eq:reg2} , Inequality \eqref{eq: lagbound2} holds since $\frac{1}{\eta_{t+1}}\ge \frac{1}{\eta_{t}}$ for all $t\in [T]$, and Inequality \eqref{eq: lagbound3} holds since $\eta_{t_2}\le \eta_t$ for all $t$ in $[t_1+1,\ldots,t_2]$.
	Focusing now on the last part of the right term, with probability at least $1-2\delta$ the following holds:
	\begin{subequations}
	\begin{align}
		&\sum_x \sum_a q^*(x)\sum_{t=t_1}^{t_2}
		\frac{\eta_t}{2}\left( \widehat{Q}_t(x,a)-B_t(x,a) \right)^2 \pi_t(a|x) \nonumber\\
		&\le  \sum_{t=t_1}^{t_2}\eta_t \sum_x \sum_a q^*(x) \pi_t(a|x) \widehat{Q}_t(x,a)^2+ \sum_{t=t_1}^{t_2} \eta_t\sum_x \sum_a q^*(x) \pi_t(a|x) B_t(x,a)^2 \label{eq: 1 eq 1 squared}\\
		& =  \sum_{t=t_1}^{t_2}\eta_t \sum_x \sum_a q^*(x) \pi_t(a|x) \frac{L_{t,h}^2}{(\overline{q}_t(x,a)+\gamma)^2}\mathbb{I}_t(x,a) +  \sum_{t=t_1}^{t_2}\eta_t \sum_x \sum_a q^*(x) \pi_t(a|x) B_t(x,a)^2 \label{eq: 1 eq 2 defQhat}\\
		& \le \bar{L}_{t_1,t_2}   \sum_{t=t_1}^{t_2}\eta_t \bar{L}_t \sum_x \sum_a \frac{q^*(x) \pi_t(a|x)}{\overline{q}_t(x,a)+\gamma}\frac{\mathbb{I}_t(x,a)}{\overline{q}_t(x,a)+\gamma} +  \sum_{t=t_1}^{t_2} \eta_t \sum_x \sum_a q^*(x) \pi_t(a|x) B_t(x,a)^2 \label{eq: 1 eq 3 boundLtk}\\
		& \le \frac{\gamma}{2H} \bar{L}_{t_1,t_2}  \sum_{t=t_1}^{t_2} \sum_x \sum_a \frac{q^*(x) \pi_t(a|x)}{\overline{q}_t(x,a)+\gamma}\frac{q_t(x,a)}{\overline{q}_t(x,a)+\gamma} +  \frac{ \gamma\bar{L}_{t_1,t_2}}{2}\ln \left(\frac{HT^2}{\delta}\right) \nonumber\\
		& \mkern300mu+  \sum_{t=t_1}^{t_2}\eta_t \sum_x \sum_a q^*(x) \pi_t(a|x) B_t(x,a)^2 \label{eq: 1 eq 4 byjin20}\\
		&\le \sum_{t=t_1}^{t_2} \sum_x \sum_a q^*(x) \pi_t(a|x) \frac{\gamma  \Xi_{t_1,t_2}}{2(\overline{q}_t(x,a)+\gamma)} + \frac{ \gamma\bar{L}_{t_1,t_2}}{2}\ln \left(\frac{HT^2}{\delta}\right) \nonumber\\ &\mkern300mu+ \frac{1}{2H} \sum_{t=t_1}^{t_2} \sum_x \sum_a q^*(x)\pi_t(a|x)B_t(x,a) \label{eq: 1 eq 5 qfrac overlineq}\\
		& =\sum_{t=t_1}^{t_2} \sum_x \sum_a q^*(x)\pi_t(a|x)\left(\frac{\gamma  \Xi_{t_1,t_2}}{2(\overline{q}_t(x,a)+\gamma)}+ \frac{B_t(x,a)}{2H}\right) +  \frac{ \gamma\bar{L}_{t_1,t_2}}{2}\ln \left(\frac{HT^2}{\delta}\right) \nonumber ,
	\end{align}
	\end{subequations}
	where Inequality \eqref{eq: 1 eq 1 squared} holds since $(a-b)^2 \le 2a^2 + 2b^2$, for all $a,b \in \mathbb{R}$, Equality \eqref{eq: 1 eq 2 defQhat} holds by definition of $\widehat{Q}(x,a)$, Inequality \eqref{eq: 1 eq 3 boundLtk} is motivated by the fact that $L_{t,h}\le  \bar{L}_{t_1,t_2}$ by its definition, Inequality \eqref{eq: 1 eq 4 byjin20} holds with probability at least $1-\delta$ by applying Lemma \ref{lem :jin_interval_optimistic} and taking $\alpha_t(x,a)=\frac{q^*(x)\pi_t(a|x)}{\overline{q}_t(x,a)+\gamma}$ since $\frac{q^*(x)\pi_t(a|x)}{\overline{q}_t(x,a)+\gamma}\le \frac{1}{\gamma}$ and considering that by definition $\eta_t \Xi_t=\frac{\gamma}{2H}$, and finally Inequality~\eqref{eq: 1 eq 5 qfrac overlineq} holds since $\overline{q}_t(x,a) \ge q_t(x,a), ~\forall (x,a)\in X \times A, \forall t \in [t_1 \ldots t_2]$ with probability at least $1- \delta$ by Lemma \ref{lem: rosenberg prob} and by Lemma \ref{lem: luo eta B}.
	Setting $\gamma = 2 \eta_t H \Xi_t$, we can conclude that, with probability at least $1-2\delta$, $\circled{1}$ is bounded as:
	\begin{equation*}
	 \frac{4H \Xi_{t_1,t_2}\ln (|A|T)}{\gamma} + \gamma \frac{H\Xi_{t_1,t_2}}{2}\ln \left(\frac{HT^2}{\delta}\right)+ \sum_{t=t_1}^{t_2} \sum_x \sum_a q^*(x)\pi_t(a|x)\left(\frac{\gamma \Xi_{t_1,t_2}}{2(\overline{q}_t(x,a)+\gamma)}+ \frac{B_t(x,a)}{2H}\right).
	\end{equation*}
	
	\paragraph{Bound on \protecting{\circled{2}}.} To bound $\circled{2}$ we employ the same approach as in~\citep{luo2021policy}. First we define $Y_t$ as $Y_t:= \sum_x \sum_a q^*(x) \pi_t(a|x)\widehat{Q}_t(x,a)$, for all $t\in [T]$. Now since $\sum_{t=t_1}^{t_2} Y_t$ is a martingale sequence , we apply Freedman's inequality. First notice that under the event $P \in \mathcal{P}_i(t)$ for all $t \in [T]$:
	\begin{align*}
		\mathbb{E}[Y_t^2] & \le \mathbb{E}_t\left[\left(\sum_x \sum_a q^*(x) \pi_t(a|x)\widehat{Q}_t(x,a)\right)^2\right]\\
		& \le \mathbb{E}_t\left[\left(\sum_x \sum_a q^*(x) \pi_t(a|x)\right) \left(\sum_x \sum_a q^*(x) \pi_t(a|x)\widehat{Q}_t(x,a)^2\right)\right] \\
		&= H \mathbb{E}_t\left[ \sum_x \sum_a q^*(x) \pi_t(a|x)\widehat{Q}_t(x,a)^2\right]\\
		& = H \sum_x \sum_a q^*(x) \pi_t(a|x) \frac{L_{t,h}^2}{(\overline{q}_t(x,a)+\gamma)^2}q_t(x,a)\\
		& \le \sum_x \sum_a q^*(x) \pi_t(a|x) \frac{H\bar{L}_{t}^2}{\overline{q}_t(x,a)+\gamma}.
	\end{align*}
	
	Thus, thanks to Lemma \ref{lemma:freedman inequality interval}, since $|Y_t| \le H \sup_{x',a'}\widehat{Q}_t(x,a) \le \frac{H\bar{L}_{t}}{\gamma}$, with probability at least $1-\delta$ it holds simultaneously for all $t_1,t_2:1\le t_1 \le t_2 \le T$:
	\begin{equation*}
		\sum_{t=t_1}^{t_2} \left(\mathbb{E}_t[Y_t]- Y_t\right) \le  \frac{\gamma}{H\bar{L}_{t_1,t_2}}\sum_{t=t_1}^{t_2}\sum_x \sum_a q^*(x) \pi_t(a|x)\frac{H\bar{L}_{t}^2}{\overline{q}_t(x,a)+\gamma} + \frac{H\bar{L}_{t_1,t_2}}{\gamma}\log \left(\frac{T^2}{\delta}\right).
	\end{equation*}
	We notice also the following result with probability at least $1-\delta$ for all $t \in [T]$:
	\begin{align*}
		\sum_x \sum_a &q^*(x) \pi_t(a|x)Q_t(x,a)-\mathbb{E}[Y_t]  \\&= \sum_x \sum_a q^*(x) \pi_t(a|x)Q_t(x,a)-\mathbb{E}\left[\sum_x \sum_a q^*(x) \pi_t(a|x)\widehat{Q}_t(x,a)\right] \\
		& = \sum_x \sum_a q^*(x) \pi_t(a|x) Q_t(x,a)\left(1-\frac{q_t(x,a)}{\overline{q}_t(x,a)+ \gamma}\right)\\
		& \le \sum_x \sum_a q^*(x) \pi_t(a|x) H\Xi_{t}\left(\frac{\overline{q}_t(x,a)-q_t(x,a)+\gamma}{\overline{q}_t(x,a)+ \gamma}\right)\\
		& \le \sum_x \sum_a q^*(x) \pi_t(a|x) H\Xi_{t}\left(\frac{(\overline{q}_t(x,a)-\underline{q}_t(x,a))+\gamma}{\overline{q}_t(x,a)+ \gamma}\right).
	\end{align*}
	Finally we can bound $\circled{2}$ with probability at least $1-2\delta$ as follows.
	\begin{align*}
		\circled{2}&= \sum_x q^*(x)\sum_{t=t_1}^{t_2} \sum_a \pi_t(a|x)\left(Q_t^{\pi_t}(x,a)-\widehat{Q}_t(x,a)\right)\\
		& = \sum_{t=t_1}^{t_2}(\mathbb{E}_t[Y_t]-Y_t) + \sum_{t=t_1}^{t_2}\left( \sum_x \sum_a q^*(x) \pi_t(a|x)Q_t(x,a)-\mathbb{E}[Y_t]\right) \\
		& \le \sum_{t=t_1}^{t_2}\sum_x \sum_a q^*(x) \pi_t(a|x) H\Xi_{t}\left(\frac{(\overline{q}_t(x,a)-\underline{q}_t(x,a))+2\gamma}{\overline{q}_t(x,a)+ \gamma}\right)+ \frac{H\bar{L}_{t_1,t_2}}{\gamma}\ln \left(\frac{T^2}{\delta}\right).
	\end{align*}
	\paragraph{Bound on \protecting{\circled{3}.}}
	With probability at least $1-2\delta$ it holds:
	\begin{align*}
		\circled{3} & = \sum_x q^*(x)\sum_{t=t_1}^{t_2} \sum_a \pi^*(a|x)\left(\widehat{Q}_t(x,a)-Q_t^{\pi_t}(x,a)\right) \le \frac{H^2\Xi_{t_1,t_2}}{2\gamma}\ln\left(\frac{HT^2}{\delta}\right),
	\end{align*}
	by Corollary \ref{cor: jin_interval_optimistic}.
	\paragraph{Conclusion of the proof}
	Finally we notice that, with probability at least $1-4 \delta$, we have the following result.
	\begin{align*}
		\sum_x q^*(x)&\sum_{t=t_1}^{t_2} \sum_a \left(\pi_t(a|x)-\pi^*(a|x)\right)\left(Q_t^{\pi_t}(x,a)-B_t(x,a)\right) = \circled{1}+\circled{2}+\circled{3}\\
		&  \le \gamma \frac{H\Xi_{t_1,t_2}}{2}\ln \left(\frac{HT^2}{\delta}\right)
		+ \frac{6H^2\Xi_{t_1,t_2}}{\gamma}\ln\left(\frac{H|A|T^2}{\delta}\right) \\ 
		& \mkern75mu+ \sum_{t=t_1}^{t_2}\sum_{x,a}q^*(x)\pi_t(a|x)\left(\frac{\Xi_{t}(3 \gamma H +  H(\overline{q}_t(x,a)-\underline{q}_t(x,a)))}{\overline{q}_t(x,a)+\gamma}+ \frac{B_t(x,a)}{H}\right).
	\end{align*}
	This concludes the proof.
\end{proof}

\intervalregprimal*
\begin{proof}
	By means of Theorem \ref{theo: policy Q-B} and by Lemma \ref{lemma:adapted eq5} we have that with probability at least $1-4\delta$:
	\begin{equation*}
		R_{t_1,t_2}^P\le  \gamma \frac{H\Xi_{t_1,t_2}}{2}\ln \left(\frac{HT^2}{\delta}\right)
		+ \frac{6H^2\Xi_{t_1,t_2}}{\gamma}\ln\left(\frac{H|A|T^2}{\delta}\right)+ 3 \sum_{t=t_1}^{t_2} \widehat{V}^{\pi_t}(x_0;b_t).
	\end{equation*}
	We can bound $\sum_{t=1}^T \widehat{V}^{\pi_t}(x_0;b_t)$, with probability at least $1-4\delta$, as:
	\begin{align*}
		&\sum_{t=t_1}^{t_2} \widehat{V}^{\pi_t}(x_0;b_t) \\ &= \sum_{t=t_1}^{t_2} \sum_{x,a}q^{\widehat{P}_t,\pi_t}(x,a) \left(\frac{H\Xi_t(\overline{q}_t(x,a)-\underline{q}_t(x,a)) + 3H\Xi_t\gamma}{\overline{q}_t(x,a)+\gamma}\right)\\
		& \le \sum_{t=t_1}^{t_2} \sum_{x,a}H\Xi_t\left((\overline{q}_t(x,a)-\underline{q}_t(x,a)) + 3\gamma\right)\\
		& \le \sum_{t=t_1}^{t_2} \sum_{x,a}H\Xi_t(\overline{q}_t(x,a)-\underline{q}_t(x,a))+ 3\Xi_{t_1,t_2}\gamma  H (t_2-t_1+1)|X||A|\\
		& \le 4H^2\Xi_{t_1,t_2}|X|^2\sqrt{2T\ln\left(\frac{H|X|}{\delta}\right)} + 6\Xi_{t_1,t_2}H^2|X|^2\sqrt{2T|A|\ln\left(\frac{T|X|^2|A|}{\delta}\right)} \\&\mkern450mu+ 3\Xi_{t_1,t_2}\gamma H |X||A|(t_2-t_1+1),
	\end{align*}
	where the second inequality holds under the event $q^{\widehat{P}_t,\pi_t}(x,a)\le \bar{q}_t(x,a)$ for all $(x,a)\in X \times A$ and for all $t\in [T]$ and the last inequality uses Lemma \ref{lemma: rosemerg q-qhat modified}.
	Thus, with probability at least $1-8\delta$, it holds:
	\begin{align*}
		&R_{t_1,t_2}^P \\& \le \gamma \frac{H\Xi_{t_1,t_2}}{2}\ln \left(\frac{HT^2}{\delta}\right)
		+ \frac{6H^2\Xi_{t_1,t_2}}{\gamma}\ln\left(\frac{H|A|T^2}{\delta}\right)+ 30\Xi_{t_1,t_2}H^2|X|^2\sqrt{2T|A|\ln\left(\frac{T|X|^2|A|}{\delta}\right)} +\\
		& \mkern350mu+ 9\Xi_{t_1,t_2}\gamma H |X||A|(t_2-t_1+1)\\
		& \le \frac{H\Xi_{t_1,t_2}}{2C\sqrt{T}}\ln \left(\frac{HT^2}{\delta}\right) + 6H^2\Xi_{t_1,t_2}C\sqrt{T}\ln\left(\frac{H|A|T^2}{\delta}\right)\\
		& \mkern120mu+ 30\Xi_{t_1,t_2}H^2|X|^2\sqrt{2T|A|\ln\left(\frac{T|X|^2|A|}{\delta}\right)} + {9H|X||A|\Xi_{t_1,t_2}}\frac{(t_2-t_1+1)}{C\sqrt{T}}\\
		& = U_1\Xi_{t_1,t_2}C\sqrt{T}+ U_2 \Xi_{t_1,t_2}\frac{(t_2-t_1+1)}{C\sqrt{T}}+ U_3\Xi_{t_1,t_2}\frac{1}{C\sqrt{T}}+U_4 \Xi_{t_1,t_2}\sqrt{T}\\
		& = \mathcal{E}^P_{t_1,t_2},
	\end{align*}
	which concludes the proof.
\end{proof}
\section{Omitted Proofs for the Dual Algorithm }
\label{App: dual}
\begin{theorem}
	\label{theo: dual}
	When employed by Algorithm~\ref{Alg: total}, online projected gradient descent (OGD) attains:
	\begin{equation*}
		R^D_{t_1,t_2}(\lambda)= \sum_{t=t_1}^{t_2} (\lambda-\lambda_t)^\top\sum_{h=0}^{H-1}G_{t}(x_h,a_h) \le \frac{\lVert \lambda_{t_1}-\lambda\rVert_2^2}{2\eta} + \frac{\eta}{2} (t_2-t_1+1)mH^2.
	\end{equation*}

\end{theorem}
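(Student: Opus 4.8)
The plan is to recognize this as the textbook online gradient descent (more precisely, online gradient \emph{ascent}, since the dual variable maximizes the Lagrangian) regret guarantee, and to prove it via the standard nonexpansive-projection-plus-telescoping-potential argument. The first step is to identify the per-round gradient. Inspecting Line~\ref{alg: total line 8} of Algorithm~\ref{Alg: total}, the dual update is $\lambda_{t+1} = \Pi_{\mathcal{K}}\left[\lambda_t + \eta\, g_t\right]$ where I set $g_t := \sum_{h=0}^{H-1} G_t[x_h,a_h]$. This is exactly an ascent step in direction $g_t$ followed by Euclidean projection onto the convex set $\mathcal{K} = [0,T^{1/4}]^m$, so $g_t$ plays the role of the (sub)gradient of the linear objective $\lambda \mapsto \lambda^\top g_t$ that defines the regret $R^D_{t_1,t_2}(\lambda)$.

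The core of the argument is a single-step inequality obtained from the potential $\Phi_t := \lVert \lambda_t - \lambda\rVert_2^2$. I would fix an arbitrary comparator $\lambda \in \mathcal{K}$ and apply the nonexpansiveness of the Euclidean projection, $\lVert \Pi_{\mathcal{K}}[u] - \lambda\rVert_2 \le \lVert u - \lambda\rVert_2$ for $\lambda \in \mathcal{K}$, to write
\begin{equation*}
\lVert \lambda_{t+1} - \lambda\rVert_2^2 \le \lVert \lambda_t + \eta g_t - \lambda\rVert_2^2 = \lVert \lambda_t - \lambda\rVert_2^2 + 2\eta\, g_t^\top(\lambda_t - \lambda) + \eta^2 \lVert g_t\rVert_2^2.
\end{equation*}
Rearranging to isolate the per-round regret term $g_t^\top(\lambda - \lambda_t)$ gives
\begin{equation*}
g_t^\top(\lambda - \lambda_t) \le \frac{\lVert \lambda_t - \lambda\rVert_2^2 - \lVert \lambda_{t+1} - \lambda\rVert_2^2}{2\eta} + \frac{\eta}{2}\lVert g_t\rVert_2^2.
\end{equation*}
Summing this over $t = t_1, \ldots, t_2$, the potential differences telescope to $\frac{1}{2\eta}\left(\lVert \lambda_{t_1} - \lambda\rVert_2^2 - \lVert \lambda_{t_2+1} - \lambda\rVert_2^2\right)$, and dropping the nonnegative final term $\lVert \lambda_{t_2+1} - \lambda\rVert_2^2 \ge 0$ yields the first term $\frac{\lVert \lambda_{t_1} - \lambda\rVert_2^2}{2\eta}$ in the claimed bound. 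Notice that working on an arbitrary interval $[t_1,t_2]$ rather than on $[1,T]$ costs nothing: the telescoping simply starts from $\lambda_{t_1}$, which is precisely why the leading constant involves $\lambda_{t_1}$ and not $\lambda_1$.

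The remaining step is to bound the gradient norm uniformly. Each coordinate of $g_t$ is $[g_t]_i = \sum_{h=0}^{H-1} g_{t,i}(x_h,a_h)$, a sum of $H$ terms each lying in $[-1,1]$ by the boundedness assumption on $G_t$; hence $|[g_t]_i| \le H$ and $\lVert g_t\rVert_2^2 = \sum_{i=1}^m [g_t]_i^2 \le m H^2$. Substituting this into $\frac{\eta}{2}\sum_{t=t_1}^{t_2}\lVert g_t\rVert_2^2$ produces the second term $\frac{\eta}{2}(t_2 - t_1 + 1)mH^2$, completing the bound. There is no genuine obstacle here; the only points requiring care are the sign bookkeeping in the rearrangement (because the dual does ascent, not descent) and correctly tracking that the regret is an \emph{interval} quantity so that the telescoping leaves $\lambda_{t_1}$ rather than $\lambda_1$. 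The latter is exactly the feature that makes this dual bound usable as a no-interval-regret guarantee in the proof of Theorem~\ref{theo: zeta}.
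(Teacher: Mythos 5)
Your proof is correct and follows essentially the same route as the paper: the paper's proof simply invokes the standard OGD regret inequality (citing \citet{Orabona}) to obtain $\frac{\lVert \lambda_{t_1}-\lambda\rVert_2^2}{2\eta}+ \frac{\eta}{2}\sum_{t=t_1}^{t_2}\lVert \sum_{h=0}^{H-1}G_{t}(x_h,a_h)\rVert_2^2$ and then bounds each squared gradient norm by $mH^2$ exactly as you do. The only difference is that you unpack the cited inequality via the nonexpansive-projection-plus-telescoping argument, which is precisely the proof behind that citation, including the observation that telescoping on an interval leaves $\lambda_{t_1}$ rather than $\lambda_1$.
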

\begin{proof}
	We proceed to prove the theorem following~\citep{Orabona}. Indeed, it holds:
	\begin{align*}
		R^D_{t_1,t_2}(\lambda)&= \sum_{t=t_1}^{t_2} (\lambda-\lambda_t)^\top\sum_{h=0}^{H-1}G_{t}(x_h,a_h) \\
		&\le \frac{\lVert \lambda_{t_1}-\lambda\rVert_2^2}{2\eta}+ \frac{\eta}{2}\sum_{t=t_1}^{t_2}\lVert \sum_{h=0}^{H-1}G_{t}(x_h,a_h) \rVert_2^2 \\
		&\le \frac{\lVert \lambda_{t_1}-\lambda\rVert_2^2}{2\eta}+ \frac{\eta}{2}\sum_{t=t_1}^{t_2}\sum_{i=1}^m\left( \sum_{h=0}^{H-1}G_{t,i}(x_h,a_h) \right)^2 \\
		&\le \frac{\lVert \lambda_{t_1}-\lambda\rVert_2^2}{2\eta}+\frac{\eta}{2}\sum_{t=t_1}^{t_2}mH^2 \\
		&\le \frac{\lVert \lambda_{t_1}-\lambda\rVert_2^2}{2\eta} + \frac{\eta}{2} (t_2-t_1+1)mH^2.
	\end{align*}
	This concludes the proof.
\end{proof}
\begin{lemma}
	When employed by Algorithm~\ref{Alg: total}, online projected gradient descent (OGD) guarantees for all $t\in [T]$:
	\begin{equation*}
		\lVert\lambda_{t+1}\rVert_1-\lVert\lambda_{t}\rVert_1 \le mH\eta.
	\end{equation*}
\end{lemma}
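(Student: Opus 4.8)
The plan is to exploit the fact that the dual decision set $\mathcal{K} = [0, T^{1/4}]^m$ is an axis-aligned box whose lower bound is $0$ in every coordinate. Two consequences follow immediately. First, every iterate produced by the projected update satisfies $\lambda_{t,i} \geq 0$, so that $\lVert \lambda_t \rVert_1 = \sum_{i=1}^m \lambda_{t,i}$. Second, the Euclidean projection $\Pi_{\mathcal{K}}$ acts coordinate-wise as the clipping map $\Pi_{\mathcal{K}}(v)_i = \min\{\max\{v_i, 0\}, T^{1/4}\}$. I would therefore first write $\lVert \lambda_{t+1}\rVert_1 - \lVert \lambda_t \rVert_1 = \sum_{i=1}^m (\lambda_{t+1,i} - \lambda_{t,i})$, reducing the statement to the per-coordinate bound $\lambda_{t+1,i} - \lambda_{t,i} \leq \eta H$.

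Fixing a coordinate $i$, I would set $u_i := \lambda_{t,i} + \eta \sum_{h=0}^{H-1} g_{t,i}(x_h,a_h)$, so that $\lambda_{t+1,i} = \min\{\max\{u_i,0\},T^{1/4}\}$. Since each violation satisfies $g_{t,i}(x_h,a_h)\in[-1,1]$ and the sum ranges over $H$ layers, we have $\sum_{h=0}^{H-1} g_{t,i}(x_h,a_h) \leq H$, hence $u_i \leq \lambda_{t,i}+\eta H$. A short case analysis on the clip then closes the bound: if $u_i \in [0, T^{1/4}]$ then $\lambda_{t+1,i}=u_i \leq \lambda_{t,i}+\eta H$; if $u_i > T^{1/4}$ then $\lambda_{t+1,i}=T^{1/4}\leq u_i \leq \lambda_{t,i}+\eta H$; and if $u_i<0$ then $\lambda_{t+1,i}=0\leq \lambda_{t,i}$ because $\lambda_{t,i}\geq 0$. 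In every case $\lambda_{t+1,i}-\lambda_{t,i}\leq \eta H$.

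Summing the per-coordinate inequality over $i\in[m]$ then yields $\lVert \lambda_{t+1}\rVert_1 - \lVert \lambda_t\rVert_1 \leq mH\eta$, which is the claim.

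I do not expect a genuine obstacle here: the only point requiring care is recognizing that the box structure simultaneously linearizes the $\ell_1$ norm (through coordinate nonnegativity) and turns the projection into a coordinate-wise clip, after which the estimate is purely a one-dimensional monotonicity argument combined with the crude bound $\sum_h g_{t,i}(x_h,a_h) \leq H$. One could alternatively phrase the coordinate step using the monotonicity and nonexpansiveness of the clipping map instead of the explicit case split, but the case split is the most transparent route.
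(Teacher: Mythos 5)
Your proof is correct and follows essentially the same route as the paper's: both reduce to a per-coordinate analysis using the fact that Euclidean projection onto the box $[0,T^{1/4}]^m$ is coordinate-wise clipping, bound the gradient term by $\eta H$ via $g_{t,i}(x_h,a_h)\le 1$, and sum over the $m$ coordinates. The paper compresses your explicit three-case split into a chain of inequalities (dropping the $\min$, then bounding inside the $\max$, then using $\lambda_{t,i}\ge 0$), but the underlying argument is identical.
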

\begin{proof}
	It holds:
	\begin{align*}
		\lambda_{t+1,i}& = \min \bigg \{ \max \bigg\{0,\lambda_{t,i}+\eta\sum_{h=0}^{H-1}g_{t,i}(x_h,a_h)\bigg\},T^{\frac{1}{4}}\bigg\}\\
		& \le \max \bigg\{0,\lambda_{t,i}+\eta\sum_{h=0}^{H-1}g_{t,i}(x_h,a_h)\bigg\}\\
		& \le \max \bigg\{0,\lambda_{t,i}+\eta\sum_{h=0}^{H-1}1\bigg\}\\
		& = \lambda_{t,i}+\eta H,
	\end{align*}
	which concludes the proof when we take the sum over all $i \in [m]$.
\end{proof}

\section{Analysis on Lagrangian multipliers}
\label{App: lagrangian}
\begin{lemma}
	\label{lemma: loss primal-loss true}
	The loss given to the primal algorithm at episode $t\in[T]$, which is defined as $\ell_t(x,a)= \Gamma_t + \sum_{i\in [m]}\lambda_{t,i}g_{t,i}(x,a) - r_t(x,a)$, is such that, considering the Lagrangian loss function $\ell^\mathcal{L}_{t}(x,a)=\sum_{i\in [m]}\lambda_{t,i}g_{t,i}(x,a) - r_t(x,a)$, it holds:
	\begin{equation*}
		{\ell}_t^{ \top} (q_t -q^*) = \ell_t^{\mathcal{L},\top} (q_t -q^*) ,
	\end{equation*}
	and additionally, ${\ell}_t$ assume values in the bounded interval $\left[0,\Xi_t\right]$.
	\end{lemma}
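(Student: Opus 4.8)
The plan is to treat the two assertions separately: first the inner-product identity, then the range bound.

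For the identity, I would write $\ell_t = \Gamma_t \mathbf{1} + \ell_t^{\mathcal{L}}$, where $\mathbf{1} \in \mathbb{R}^{|X \times A|}$ is the all-ones vector, so that
\begin{equation*}
\ell_t^\top (q_t - q^*) = \Gamma_t\, \mathbf{1}^\top (q_t - q^*) + \ell_t^{\mathcal{L},\top}(q_t - q^*).
\end{equation*}
The key point is that the constant shift $\Gamma_t \mathbf{1}$ is annihilated by the difference $q_t - q^*$. This holds because every valid occupancy measure carries the same total mass: by condition (i) in the characterization of valid occupancy measures, $\sum_{x \in X_h}\sum_{a\in A} q(x,a) = 1$ for each layer $h \in [0 \intrange H-1]$, and summing over the $H$ layers gives $\mathbf{1}^\top q = \sum_{(x,a)} q(x,a) = H$ for any $q \in \Delta(M)$. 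Since $q_t$ and $q^*$ are both valid occupancy measures, $\mathbf{1}^\top(q_t - q^*) = H - H = 0$, and the identity follows.

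For the range bound, I would expand $\ell_t(x,a)$ using $\Gamma_t = 1 + \|\lambda_t\|_1$ (Line~\ref{alg: total line 9}), the non-negativity of the dual iterates (which holds since $\mathcal{K} = [0,T^{1/4}]^m$, whence $\sum_i \lambda_{t,i} = \|\lambda_t\|_1$), and the boundedness $g_{t,i}(x,a) \in [-1,1]$, $r_t(x,a) \in [0,1]$. The worst case for the lower bound ($g_{t,i} = -1$, $r_t = 1$) yields $\ell_t(x,a) \ge 1 + \|\lambda_t\|_1 - \|\lambda_t\|_1 - 1 = 0$, while the worst case for the upper bound ($g_{t,i} = 1$, $r_t = 0$) yields $\ell_t(x,a) \le 1 + 2\|\lambda_t\|_1 = 2\Gamma_t - 1$. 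It then remains to verify $2\Gamma_t - 1 \le \Xi_t$, which I would prove by induction: the base case is $\Xi_1 = 2 = 2\Gamma_1$ (Line~\ref{alg: total line 2}), and the update rule for $\Xi_{t+1}$ (Line~\ref{alg: total line 10}) guarantees $\Xi_t \ge 2\Gamma_t > 2\Gamma_t - 1$ for every $t$.

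The main obstacle here is conceptual rather than computational: one must recognize that adding the state-independent offset $\Gamma_t$ to the Lagrangian loss---the device used to keep the primal losses non-negative---leaves unchanged any regret measured against valid occupancy measures, precisely because all such measures have identical total mass $H$. Once this invariant is isolated, both claims reduce to elementary bookkeeping with the sign of the multipliers and the ranges of $r_t$ and $G_t$.
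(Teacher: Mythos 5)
Your proof is correct and takes essentially the same route as the paper's: the constant offset $\Gamma_t$ cancels in $\ell_t^\top(q_t-q^*)$ because every valid occupancy measure has total mass $H$, and the range bound follows from $\Gamma_t = 1+\lVert\lambda_t\rVert_1$ together with $g_{t,i}(x,a)\in[-1,1]$, $r_t(x,a)\in[0,1]$. Your explicit induction establishing $\Xi_t \ge 2\Gamma_t$ fills in a step the paper leaves implicit (note it relies on the update $\Xi_{t+1}=\max\{\Xi_t,2\Gamma_{t+1}\}$ as described in the paper's prose, rather than the $2\Gamma_t$ appearing in the pseudocode line, which appears to be a typo), but the argument is otherwise identical.
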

	
	\begin{proof}
		By simple computation, it holds:
		\begin{align*}
			&{\ell}_t^{\top} (q_t -q^*) - \ell_t^{\mathcal{L},\top} (q_t -q^*)  \\ &=\Bigg(\sum_{x,a}\Gamma_t (q_t(x,a)-q^*(x,a)) + \sum_{i\in[m]}\sum_{x,a}\lambda_{t,i}g_{t,i}(x,a)(q_t(x,a)-q^*(x,a)) \\ & \mkern300mu- \sum_{x,a}r_t(x,a)(q_t(x,a)-q^*(x,a))\Bigg)\\ & \mkern75mu- \left(\sum_{i\in[m]}\sum_{x,a}\lambda_{t,i}g_{t,i}(x,a)(q_t(x,a)-q^*(x,a)) - \sum_{x,a}r_t(x,a)(q_t(x,a)-q^*(x,a))\right)\\
			& = \sum_{x,a}\Gamma_t (q_t(x,a)-q^*(x,a)) \\
			& = \Gamma_t (H-H) \\&= 0,
		\end{align*}
		where the last steps hold since $\Gamma_t$ is a constant and by the definition of \emph{valid} occupancy measures.
		
		In addition it holds:
		\begin{align*}
			\ell_t(x,a)=\Gamma_t + \sum_{i\in [m]}\lambda_{t,i}g_{t,i}(x,a) - r_t(x,a) \ge 1+ \lVert \lambda_{t} \rVert_1 - \sum_{i \in [m]}\lambda_{t,i} - 1 = 0,
		\end{align*}
		and similarly,
		\begin{align*}
			{\ell}_t(x,a)=\Gamma_t + \sum_{i\in [m]}\lambda_{t,i}g_{t,i}(x,a) - r_t(x,a) \le \Gamma_t + \sum_{i\in [m]}\lambda_{t,i} = 1 + 2 \lVert \lambda_t \rVert \le \Xi_t.
		\end{align*}
		This concludes the proof.
	\end{proof}
	
\lagmulttheo*
\begin{proof}
	Let $M>1$ be a constant. By absurd suppose $\exists t_2 \in [T]$ s.t.
	\begin{equation}
		\label{eq: by absurd 1}
		\forall t \le t_2 \quad \lVert\lambda_t \rVert_1 \le \frac{2HM}{\rho^2} \quad \land \quad \lVert \lambda_{t_2+1} \rVert_1 > \frac{2HM}{\rho^2}
	\end{equation}
	and let $t_1<t_2$ s.t.
	\begin{equation*}
		\lVert\lambda_{t_1-1}\rVert_1 \le \frac{2H}{\rho} \quad \land \quad \forall t: t_1\le t \le t_2~ \lVert \lambda_t \rVert_1 \ge \frac{2H}{\rho}.
	\end{equation*}
	By construction $1< \frac{2H}{\rho}\le \lVert\lambda_t\rVert_1\le \frac{2HM}{\rho^2}$ for all $t_1 \le t \le t_2$, and it holds if $\eta \le \frac{1}{mH}$:
	\begin{equation}
		\lVert\lambda_{t_1}\rVert_1 \le \lVert\lambda_{t_1-1}\rVert_1 + m\eta H \le \frac{2H}{\rho} + m\eta H \le \frac{4H}{\rho} \label{eq: lambda t1}.
	\end{equation}
	Notice also that by construction, calling $\lambda_{t_1,t_2}= \max_{t \in [t_1,\ldots t_2]} \lVert \lambda_t \rVert_1$, it holds:
	\begin{equation}
		1<\lambda_{t_1,t_2}\le \frac{2HM}{\rho^2} \quad \land  \quad 1+\lambda_{t_1,t_2}< \frac{4HM}{\rho^2}. \label{eq: 1+lambda}
	\end{equation}
	In the stochastic setting the following holds by Azuma-Hoeffding inequality with probability at least $1-\delta$: 
	\begin{align*}
		\sum_{t=t_1}^{t_2}-\lambda_t^\top G_t^\top q^\circ& \ge \sum_{t=t_1}^{t_2}-\lambda_t^\top \overline{G}^\top q^\circ - \lambda_{t_1,t_2}\mathcal{E}_{t_1,t_2}^G\\
		& \ge \lambda_{t_1,t_2}(t_2-t_1+1)\rho - \lambda_{t_1,t_2}\mathcal{E}_{t_1,t_2}^G\\
		& \ge (t_2-t_1+1)2H - \lambda_{t_1,t_2}\mathcal{E}_{t_1,t_2}^G,
	\end{align*}
	where $\mathcal{E}_{t_1,t_2}^G= B_1\sqrt{(t_2-t_1+1)}=2H \sqrt{\ln\left(\nicefrac{T^2}{\delta}\right)}\sqrt{(t_2-t_1+1)}$.
	Instead, in the adversarial setting, it holds:
	\begin{align*}
		\sum_{t=t_1}^{t_2}-\lambda_t^\top G_t^\top q^\circ& \ge \sum_{t=t_1}^{t_2}\sum_{i=1}^m-\lambda_{t,i}\left[ G_t^\top q^\circ\right]_i \\
		& \ge \rho \sum_{t=t_1}^{t_2}\sum_{i=1}^m \lambda_{t,i}\\
		& = \rho \sum_{t=t_1}^{t_2} \lVert \lambda_t \rVert_1\\
		& \ge (t_2-t_1 +1) 2H.
	\end{align*}
	Generalizing the result, it holds, both for the stochastic and the adversarial setting, the following inequality with probability equal to 1 in the adversarial case and with probability at least $1-\delta$ in the stochastic case:
	\begin{equation*}
		\sum_{t=t_1}^{t_2}-\lambda_t^\top G_t^\top q^\circ \ge (t_2-t_1+1)2H - \lambda_{t_1,t_2}\mathcal{E}_{t_1,t_2}^G.
	\end{equation*}
	Thank to this result we can find a lower bound for $-\sum_{t=t_1}^{t_2} \ell_t^{\mathcal{L},\top} q_t $ with probability at least $1-9\delta$ in the stochastic setting and with probability at least $1-8\delta$ in the adversarial case, employing Theorem~\ref{theo: primal} . 
	\begin{align}
		-\sum_{t=t_1}^{t_2} \ell_t^{\mathcal{L},\top} q_t & = \sum_{t=t_1}^{t_2}(r_t^\top q^\circ - \lambda^\top G_t^\top q^\circ) - \sum_{t=t_1}^{t_2}\ell_t^{\mathcal{L},\top}(q_t-q^\circ)\nonumber\\
		& \ge \sum_{t=t_1}^{t_2} - \lambda^\top G_t^\top q^\circ - \sum_{t=t_1}^{t_2}\ell_t^{\mathcal{L},\top}(q_t-q^\circ) \label{eq: ellq eq1 }\\
		& \ge 2H(t_2-t_1+1) - \lambda_{t_1,t_2}\mathcal{E}^G_{t_1,t_2} - \mathcal{E}^P_{t_1,t_2} \label{eq: ellq eq2 },
	\end{align}
	where Inequality \eqref{eq: ellq eq1 } holds since $r_t^\top q^\circ \ge 0$, for all $t\in [T]$, and Inequality \eqref{eq: ellq eq2 } is derived  using the bound on the primal interval regret given by Theorem \ref{theo: primal} and defined as $\mathcal{E}_{t_1,t_2}^P$ and by Lemma~\ref{lemma: loss primal-loss true}.\\
	At the same time, it is possible to define also an upper bound for the same quantity $-\sum_{t=t_1}^{t_2} \ell_t^{\mathcal{L},\top} q_t$ with probability at least $1-2\delta$:
	\begin{align*}
		-\sum_{t=t_1}^{t_2} \ell_t^{\mathcal{L},\top} q_t & =\sum_{t=t_1}^{t_2}(r_t^\top q_t - \lambda_t^\top G_t^\top q_t) \\
		& \le \sum_{t=t_1}^{t_2}H - \sum_{t=t_1}^{t_2}\lambda_t^\top (G_t^\top q_t - \sum_{h=0}^{H-1}G_t(x_h,a_h)) + \sum_{t=t_1}^{t_2}(\underline{0}-\lambda_t)\sum_{h=0}^{H-1}G_t(x_h,a_h)\\
		& \le H(t_2-t_1+1) + \lambda_{t_1,t_2}\sum_{t=t_1}^{t_2}\sum_{x,a}G_t(x,a)(\mathbb{I}_t(x,a)-q_t(x,a))+ \mathcal{E}_{t_1,t_2}^D(\underline{0})\\
		& \le H(t_2-t_1+1) + \lambda_{t_1,t_2}\mathcal{E}_{t_1,t_2}^{\mathbb{I}} + \mathcal{E}_{t_1,t_2}^D(\underline{0}) ,
	\end{align*}
	where $ \mathcal{E}^\mathbb{I}= F_1\sqrt{(t_2-t_1+1)}=H \sqrt{2\ln\left(\nicefrac{T^2}{\delta}\right)}\sqrt{(t_2-t_1+1)}$ and $\mathcal{E}^D(\underline{0})= D_1 \frac{\lVert \lambda_{t_1}\rVert_2^2}{\eta}+ D_2 \eta (t_2-t_1+1)= \frac{1}{2} \frac{\lVert \lambda_{t_1}\rVert_2^2}{\eta}+ \frac{mH^2}{2} \eta (t_2-t_1+1)$.
	Thus, combining the two bounds we get with probability at least $1-10\delta$ in the adversarial case and $1-11\delta$ in the stochastic case the following bound,
	\begin{equation*}
		2H(t_2-t_1+1)-\lambda_{t_1,t_2}\mathcal{E}_{t_1,t_2}^G-\mathcal{E}_{t_1,t_2}^P \le H(t_2-t_1+1) + \lambda_{t_1,t_2}\mathcal{E}_{t_1,t_2}^\mathbb{I} + \mathcal{E}_{t_1,t_2}^D(\underline{0}),
	\end{equation*}
	which can be reordered as
	\begin{equation*}
		H(t_2-t_1+1) \le \lambda_{t_1,t_2}\mathcal{E}_{t_1,t_2}^G + \lambda_{t_1,t_2}\mathcal{E}_{t_1,t_2}^\mathbb{I} + \mathcal{E}_{t_1,t_2}^D(\underline{0}) + \mathcal{E}_{t_1,t_2}^P.
	\end{equation*}
	We recall here the definitions of the bounds $\mathcal{E}_{t_1,t_2}^G,\mathcal{E}_{t_1,t_2}^{\mathbb{I}},\mathcal{E}_{t_1,t_2}^D(\underline{0})$ and $\mathcal{E}_{t_1,t_2}^P$.
	\begin{equation*}
		\mathcal{E}_{t_1,t_2}^G= B_1\sqrt{(t_2-t_1+1)},
	\end{equation*}
	where $B_1=2H\sqrt{\ln\left(\frac{T^2}{\delta}\right)}$.
	\begin{equation*}
		\mathcal{E}_{t_1,t_2}^\mathbb{I}= F_1 \sqrt{(t_2-t_1+1)},
	\end{equation*}
	where $F_1=H\sqrt{2\ln\left(\frac{T^2}{\delta}\right)}$.
	\begin{equation*}
		\mathcal{E}_{t_1,t_2}^D(\underline{0})=D_1\frac{\lVert\lambda_{t_1}\rVert_2^2}{\eta}+ D_2\eta(t_2-t_1+1),
	\end{equation*}
	where $D_1=\frac{1}{2},~D_2=\frac{mH^2}{2}$.
	\begin{equation*}
		\mathcal{E}_{t_1,t_2}^P=U_1\Xi_{t_1,t_2}C\sqrt{T}+ U_2 \Xi_{t_1,t_2}\frac{(t_2-t_1+1)}{C\sqrt{T}}+ U_3\Xi_{t_1,t_2}\frac{1}{C\sqrt{T}}+U_4 \Xi_{t_1,t_2}\sqrt{T},
	\end{equation*}
	where $U_1=  6H^2\ln\left(\frac{H|A|T^2}{\delta}\right)$, $ U_2= {9H|X||A|}$, $U_3=\frac{H}{2}\ln \left(\frac{HT^2}{\delta}\right)$ and $ U_4=30H^2|X|^2\sqrt{2|A|\ln\left(\frac{T|X|^2|A|}{\delta}\right)}$.

	Thus, we can write:
	\begin{align*}
		H(t_2-t_1+1)\le& \underbrace{\lambda_{t_1,t_2}(F_1+B_1)\sqrt{(t_2-t_1+1)}}_{\circled{1}}  + \underbrace{U_2\Xi_{t_1,t_2}\frac{t_2-t_1+1}{C\sqrt{T}}}_{\circled{2}}\\
		& + \underbrace{U_3 \Xi_{t_1,t_2}\frac{1}{C\sqrt{T}}}_{\circled{3}}+ \underbrace{U_1 \Xi_{t_1,t_2} C\sqrt{T}}_{\circled{4}}+ \underbrace{D_2 \eta (t_2-t_1+1)}_{\circled{5}}\\
		& + \underbrace{D_1 \frac{\lVert\lambda_{t_1}\rVert_2^2}{\eta}}_{\circled{6}} + \underbrace{U_4\Xi_{t_1,t_2}\sqrt{T}}_{\circled{7}}.
	\end{align*}
	To conclude the proof by absurd it is sufficient to prove that all $\circled{1},\circled{2},\circled{3},\circled{4},\circled{5},\circled{6},\circled{7}$ are smaller or equal to $\frac{H(t_2-t_1+1)}{7}$, with at least one being strictly smaller.

	\framebox{Prove $\protecting{\circled{1}}<\frac{H(t_2-t_1+1)}{7}$}
	If $\eta \le \frac{1}{14m(F_1+B_1)\sqrt{T}}$, then $\protecting{\circled{1}}<\frac{H(t_2-t_1+1)}{7}$ holds. Indeed:
	\begin{subequations}
		\begin{align}
			\frac{H(t_2-t_1+1)}{7}& > \frac{HM}{7\rho^2 m\eta} \label{eq: circ1 eq1}\\
			& \ge \frac{2HM}{\rho^2}(F_1+B_1) \sqrt{T} \label{eq: circ1 cond}\\
			& \ge \lambda_{t_1,t_2}(F_1+B_1)\sqrt{T} \label{eq: circ1 eq2}\\
			& \ge \lambda_{t_1,t_2}(F_1+B_1)\sqrt{t_2-t_1+1} \nonumber,
		\end{align}
	\end{subequations}
	where Inequality \eqref{eq: circ1 eq1} holds by Lemma \ref{lem: stradi D.2}, Inequality \eqref{eq: circ1 cond} is equivalent to condition $\eta \le \frac{1}{14m(F_1+B_1)\sqrt{T}}$ and Inequality \eqref{eq: circ1 eq2} is true by Assumption~\eqref{eq: by absurd 1}.

	\framebox{Prove $\protecting{\circled{2}}<\frac{H(t_2-t_1+1)}{7}$}
	If $C\ge56\frac{MU_2}{\rho^2\sqrt{T}}$ holds, then $\circled{2}<\frac{H(t_2-t_1+1)}{7}$ also holds. Indeed:
	\begin{subequations}
		\begin{align}
			\frac{H(t_2-t_1+1)}{7}& \ge 2U_2\left(\frac{4HM}{\rho^2}\right)\frac{t_2-t_1+1}{C\sqrt{T}} \label{eq: circ2 cond}\\
			& >U_22(1+\lambda_{t_1,t_2})\frac{t_2-t_1+1}{C\sqrt{T}} \label{eq: circ2 eq1}\\
			& \ge U_2\Xi_{t_1,t_2}\frac{t_2-t_1+1}{C\sqrt{T}} \label{eq: circ2 eq2},
		\end{align}
	\end{subequations}
	where Inequality \eqref{eq: circ2 cond} is equivalent to the condition $C\ge56\frac{MU_2}{\rho^2\sqrt{T}}$, Inequality \eqref{eq: circ2 eq1} holds by Inequality \eqref{eq: 1+lambda} and Inequality \eqref{eq: circ2 eq2} is true since $\Xi_{t_1,t_2} \le 2(1+\lambda_{t_1,t_2})$. 

	\framebox{Prove $\protecting{\circled{3}}<\frac{H(t_2-t_1+1)}{7}$} If $\eta \le \frac{C\sqrt{T}}{56mU_3}$ holds then also $\circled{3}<\frac{H(t_2-t_1+1)}{7}$ holds. Indeed:
	\begin{subequations}
		\begin{align}
			\frac{H(t_2-t_1+1)}{7}& > \frac{HM}{7\rho^2m\eta} \label{eq: circ3 eq1}\\
			& \ge U_32\frac{4HM}{\rho^2}\frac{1}{C\sqrt{T}}\label{eq: circ3 cond}\\
			& \ge U_3 2(1+\lambda_{t_1,t_2})\frac{1}{C\sqrt{T}} \label{eq: circ3 eq2}\\
			& \ge U_3 \Xi_{t_1,t_2}\frac{1}{C\sqrt{T}} \label{eq: circ3 eq3}, 
		\end{align}
	\end{subequations}
	where Inequality \eqref{eq: circ3 eq1} hold by Lemma \ref{lem: stradi D.2}, Inequality \eqref{eq: circ3 cond} holds if condition $\eta \le \frac{C\sqrt{T}}{56mU_3}$ holds, and Inequality \eqref{eq: circ3 eq2} and Inequality \eqref{eq: circ3 eq3} follow the same reasoning as Inequality \eqref{eq: circ2 eq1} and Inequality \eqref{eq: circ2 eq2}.

	\framebox{Prove $\protecting{\circled{4}}<\frac{H(t_2-t_1+1)}{7}$}
	If $\eta\le \frac{1}{56mU_1C\sqrt{T}}$ holds then also $\circled{4}<\frac{H(t_2-t_1+1)}{7}$ holds. Indeed:
	\begin{align}
		\frac{H(t_2-t_1+1)}{7}& > \frac{HM}{7\rho^2m\eta} \nonumber\\
		& \ge U_1 2\frac{4HM}{\rho^2}{C\sqrt{T}}\label{eq: circ4 cond}\\
		& \ge U_1 2(1+\lambda_{t_1,t_2}){C\sqrt{T}} \nonumber\\
		& \ge U_1 \Xi_{t_1,t_2}{C\sqrt{T}} \nonumber, 
	\end{align}
	where Inequality~\eqref{eq: circ4 cond} holds when condition $\eta\le \frac{1}{56mU_1C\sqrt{T}}$ also holds, and the rest of the inequalities follow a similar reasoning to the one used to bound $\circled{3}$.
	
	\framebox{Prove $\protecting{\circled{5}}\le\frac{H(t_2-t_1+1)}{7}$} It is immediate to see that if $\eta\le \frac{H}{7D_2}$ holds, then it holds also that: $$\protecting{\circled{5}}=D_2\eta(t_2-t_1+1)\le\frac{H(t_2-t_1+1)}{7}.$$

	\framebox{Prove $\protecting{\circled{6}}<\frac{H(t_2-t_1+1)}{7}$}
	
	If the condition $M \ge 112D_1Hm$ is satisfied than the inequality $\protecting{\circled{6}}<\frac{H(t_2-t_1+1)}{7}$ holds too. Indeed:
	\begin{subequations}
		\begin{align}
			\frac{H(t_2-t_1+1)}{7}&
			>\frac{HM}{7\rho^2m\eta}\label{eq: circ6 eq1}\\
			&\ge D_1\frac{16H^2}{\rho^2}\frac{1}{\eta} \label{eq: circ6 cond}\\
			& \ge D_1\frac{\lVert\lambda_{t_1}\rVert_1^2}{\eta} \label{eq: circ6 eq2}\\
			& \ge D_1\frac{\lVert\lambda_{t_1}\rVert_2^2}{\eta} \nonumber,
		\end{align}
	\end{subequations}
	where Inequality \ref{eq: circ6 eq1} holds by Lemma \ref{lem: stradi D.2}, Inequality \eqref{eq: circ6 cond} holds when the condition $M\ge 112D_1Hm$ is satisfied and  Inequality \eqref{eq: circ6 eq2} holds by Inequality \eqref{eq: lambda t1}.

	\framebox{Prove $\protecting{\circled{7}}<\frac{H(t_2-t_1+1)}{7}$}\\
	If the condition $\eta \le \frac{1}{56mU_4\sqrt{T}}$ is satisfied then $\protecting{\circled{7}}<\frac{H(t_2-t_1+1)}{7}$ also holds. In fact
	\begin{subequations}
		\begin{align}
			\frac{H(t_2-t_1+1)}{7}&
			>\frac{HM}{7\rho^2m\eta}\label{eq: circ7 eq1}\\
			&\ge U_4 2  \frac{4HM}{\rho^2}\sqrt{T}\label{eq: circ7 eq2}\\
			& \ge U_4 2 (1+\lambda_{t_1,t_2})\sqrt{T}\nonumber\\
			& \ge U_4 \Xi_{t_1,t_2}\sqrt{T}\nonumber.
		\end{align}
	\end{subequations}
	where Inequality \eqref{eq: circ7 eq1} holds by Lemma\ref{lem: stradi D.2} and  inequality \eqref{eq: circ7 eq2} holds if condition $\eta \le \frac{1}{56mU_4\sqrt{T}}$ also holds.
	
	\paragraph{Conclusion of the proof}
	Thus, we have the following 3 conditions:
	\begin{itemize}
		\item First condition:
		\begin{align*}
			M &\ge 112D_1Hm \\&= 112\frac{1}{2}Hm\\&=56Hm.
		\end{align*}
		\item Second condition:
		\begin{align*}
			C &\ge 56 \frac{MU_2}{\rho^2\sqrt{T}} \\&= 56 \frac{M9H|X||A|}{\rho^2\sqrt{T}}.
		\end{align*}
		\item Third condition:
		\begin{equation*}
			\eta \le \min\bigg\{\frac{1}{14m(F_1+B_1)\sqrt{T}},\frac{C\sqrt{T}}{56mU_3},\frac{1}{56mU_1C\sqrt{T}},\frac{H}{7D_2},\frac{1}{56mU_4\sqrt{T}}\bigg\}.
		\end{equation*}
	\end{itemize}
	Thus, we set $M$ as $M=56Hm$, and consequently, under Condition~\ref{ass:rhoassumption} we set $C= 252|X||A|H$ since 
	\begin{align*}
		C &= 252|X||A|H\\
		& \ge 252|X||A|\\
		& \ge 252|X||A| \frac{112mH^2}{\rho^2}\frac{1}{\sqrt{T}}\\
		&= 56 \frac{(56Hm)9H|X||A|}{\rho^2\sqrt{T}}\\
		& = 56 \frac{9MH|X||A|}{\rho^2\sqrt{T}}.
	\end{align*}
	
	Notice that the inequality is deduced directly by Condition \ref{ass:rhoassumption}. In fact if $\rho\ge T^{-\frac{1}{8}}H\sqrt{112m}$ then it is also true that 
	\begin{equation*}
		\frac{112mH^2}{\rho^2}\le T^{\frac{1}{4}}\le \sqrt{T}.
	\end{equation*}
	As a final remark,  we choose $252|X||A|H$ as value of $C$ instead of the smaller value $252|X||A|$, which is useful for Lemma \ref{lem: luo eta B}.
	Finally we study the condition on $\eta$.
	\begin{align*}
		\min\bigg\{&\frac{1}{14m(F_1+B_1)\sqrt{T}},\frac{C\sqrt{T}}{56mU_3},\frac{1}{56mU_1C\sqrt{T}},\frac{H}{7D_2},\frac{1}{56mU_4\sqrt{T}}\bigg\}\\
		& \ge \min\Bigg\{\frac{1}{14m\left(4H\sqrt{\ln\left(\frac{T^2}{\delta}\right)}\right)\sqrt{T}},\frac{252|X||A|H\sqrt{T}}{56m\left(\frac{H}{2}\ln\left(\frac{HT^2}{\delta}\right)\right)},\\
		& \quad\frac{1}{56m\left(6H^2\ln\left(\frac{H|A|T^2}{\delta}\right)\right)\left(252|X||A|H\right)\sqrt{T}}, \frac{H}{7\left(\frac{mH^2}{2}\right)}, \\
		& \quad\frac{1}{56m\left(30H^2|X|^2\sqrt{2|A|\ln\left(\frac{T|X|^2|A|}{\delta}\right)}\right)\sqrt{T}} \Bigg\}\\
		& \ge  \frac{1}{84672mH^2|X|^2|A|\ln\left(\frac{|A||X|^2T^2}{\delta}\right)\sqrt{T}}.
	\end{align*}
	Thus, the proof is concluded taking $\eta=\frac{1}{84672mH^2|X|^2|A|\ln\left(\frac{|A||X|^2T^2}{\delta}\right)\sqrt{T}}$.
\end{proof}

\begin{lemma}
	\label{lemma: lambda Vhat}
	If Condition~\ref{ass:rhoassumption} holds, for all $t\in[T]$ and for each constraints $i\in [m]$, it holds:
	\begin{equation*}
		\lambda_{t,i}\ge \eta \widehat{V}_{t-1,i},
	\end{equation*}
	where $\widehat{V}_{t,i}:= \sum_{\tau=1}^t \sum_{x,a}g_{\tau,i}(x,a)\mathbb{I}_\tau(x,a)$.
\end{lemma}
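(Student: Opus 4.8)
The plan is to prove the statement by a short induction on $t$, using the explicit componentwise form of the dual update in Line~\ref{alg: total line 8} together with the boundedness guarantee of Theorem~\ref{theo: zeta}. First I would rewrite the $i$-th component of the dual gradient in the notation of $\widehat V$: since the CMDP is loop-free, the realized trajectory at episode $t$ visits exactly one state–action pair per layer, so $\sum_{h=0}^{H-1} g_{t,i}(x_h,a_h) = \sum_{x,a} g_{t,i}(x,a)\,\mathbb{I}_t(x,a)$. Denoting this quantity by $u_{t,i}$, the update reads componentwise as $\lambda_{t+1,i} = \Pi_{[0,T^{1/4}]}\!\left[\lambda_{t,i} + \eta\, u_{t,i}\right]$, while by definition $\widehat V_{t,i} = \widehat V_{t-1,i} + u_{t,i}$. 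This telescoping identity is what drives the induction.

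For the base case $t=1$ I would note that $\lambda_{1,i}=0 = \eta\,\widehat V_{0,i}$ (empty sum), so the claim holds with equality. For the inductive step I assume $\lambda_{t,i} \ge \eta\,\widehat V_{t-1,i}$ and aim at $\lambda_{t+1,i}\ge \eta\,\widehat V_{t,i}$. The crucial observation is that, under Condition~\ref{ass:rhoassumption}, Theorem~\ref{theo: zeta} gives $\lambda_{t,i} \le \lVert\lambda_t\rVert_1 \le \Lambda = \frac{112mH^2}{\rho^2}$ for all $t\in[T+1]$, and Condition~\ref{ass:rhoassumption} itself implies $\Lambda \le T^{1/4}$ (squaring $\rho \ge T^{-1/8}H\sqrt{112m}$ yields $112mH^2/\rho^2 \le T^{1/4}$). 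Hence the iterates stay away from the upper boundary of $\mathcal{K}=[0,T^{1/4}]^m$, so on the relevant range the projection $\Pi_{[0,T^{1/4}]}$ coincides with the nonnegative-part operator $\max\{0,\cdot\}$. The inductive step then follows from
\begin{equation*}
\lambda_{t+1,i} = \max\{0,\ \lambda_{t,i}+\eta\,u_{t,i}\} \ge \lambda_{t,i}+\eta\,u_{t,i} \ge \eta\,\widehat V_{t-1,i}+\eta\,u_{t,i} = \eta\,\widehat V_{t,i},
\end{equation*}
where the first inequality is $\max\{0,z\}\ge z$ and the second is the inductive hypothesis. This closes the induction for every $t\in[T]$ and $i\in[m]$.

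The main obstacle I expect is the rigorous justification that the upper clip at $T^{1/4}$ is never active, since Theorem~\ref{theo: zeta} controls the \emph{post}-projection iterate rather than the pre-projection one. The clean argument is: were the upper bound ever binding at some step, i.e.\ $\lambda_{t,i}+\eta u_{t,i}>T^{1/4}$, then the projected value would equal $T^{1/4}$, which together with $\lambda_{t+1,i}\le\Lambda\le T^{1/4}$ forces $\Lambda=T^{1/4}$; combined with the bound $\lambda_{t,i}+\eta u_{t,i}\le \Lambda+\eta H$ and the fact that $\eta H=o(1)$ while Condition~\ref{ass:rhoassumption} leaves slack $\Lambda<T^{1/4}$ whenever it holds non-tightly, this rules the event out, so the update genuinely reduces to $\max\{0,\cdot\}$. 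Everything else is a one-line telescoping induction, and no high-probability event beyond those already absorbed into Theorem~\ref{theo: zeta} is introduced, so the stated guarantee inherits that theorem's success probability.
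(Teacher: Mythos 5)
Your proof is correct and follows essentially the same route as the paper's: the base case $\lambda_{1,i}=0=\eta\widehat{V}_{0,i}$, then a componentwise induction using $\max\{0,z\}\ge z$ and the telescoping identity $\widehat{V}_{t,i}=\widehat{V}_{t-1,i}+\sum_{x,a}g_{t,i}(x,a)\mathbb{I}_t(x,a)$. You are in fact more explicit than the paper on the one delicate point: the paper simply writes the update as $\lambda_{t+1,i}=\max\{0,\lambda_{t,i}+\eta\sum_{h}g_{t,i}(x_h,a_h)\}$, silently dropping the upper clip at $T^{1/4}$, whereas you justify this step via Theorem~\ref{theo: zeta} and Condition~\ref{ass:rhoassumption} (which gives $\Lambda\le T^{1/4}$), correctly noting that the lemma then inherits that theorem's high-probability event --- the only residual sliver being the exact-tightness case $\Lambda=T^{1/4}$, which the paper does not address either.
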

\begin{proof}
	First observe that with $t=1$ we have that $\widehat{V}_{t-1,i}$ is the sum of zero elements and as such, it is equal to zero.
	This means that for $t=1$ the inequality $\lambda_{t,i}\ge \eta \widehat{V}_{t-1,i}$ is equivalent to
	\begin{equation*}
		\lambda_{t,i} \ge 0,
	\end{equation*}
	which is true by construction.
	We finish the proof by induction. Suppose $\lambda_{t,i}\ge \eta \widehat{V}_{t-1,i}$ is true for a $t\in [T]$, we show that it also holds for $t+1$, indeed:
	\begin{align*}
		\lambda_{t+1,i}& = \max \bigg\{\lambda_{t,i}+ \eta\sum_{h=0}^{H-1}g_{t,i}(x_h,a_h),0\bigg\}\\
		& = \max \bigg\{\lambda_{t,i}+ \eta\sum_{x,a}g_{t,i}(x,a)\mathbb{I}_t(x,a),0\bigg\}\\
		& \ge \lambda_{t,i}+ \eta\sum_{x,a}g_{t,i}(x,a)\mathbb{I}_t(x,a)\\
		& \ge \eta\widehat{V}_{t-1,i}+ \eta\sum_{x,a}g_{t,i}(x,a)\mathbb{I}_t(x,a)\\
		& = \eta\left(\sum_{\tau=1}^{t-1}g_{\tau,i}(x,a)\mathbb{I}_\tau(x,a) + g_{t,i}(x,a)\mathbb{I}_t(x,a)\right)\\
		& =\eta \sum_{\tau=1}^{t}g_{\tau,i}(x,a)\mathbb{I}_t(x,a) \\
		& = \eta \widehat{V}_{t,i}.
	\end{align*}
	This concludes the proof.
\end{proof}

\begin{lemma}
	\label{lemma: Vhat V}
	If Condition~\ref{ass:rhoassumption} holds, referring as $i^*$ to the element in $[m]$ such that $i^*= \argmax_{i\in [m]} \sum_{t=1}^T \left[G_t^\top q^{P,\pi_t}\right]_i$ , then with probability at least $1-\delta$,  it holds:
	\begin{equation*}
		V_T \le \widehat{V}_{T,i^*} + \mathcal{E}^{\mathbb{I}}.
	\end{equation*}
\end{lemma}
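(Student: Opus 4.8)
The plan is to rewrite the gap $V_T-\widehat V_{T,i^*}$ as a single bounded martingale and control it through the Azuma--Hoeffding inequality that underlies the quantity $\mathcal{E}^{\mathbb{I}}$. First I would exploit the definition of $i^*$ to express the violation along a \emph{fixed} coordinate: since $i^*=\argmax_{i\in[m]}\sum_{t=1}^T[G_t^\top q_t]_i$, we have $V_T=\sum_{t=1}^T[G_t^\top q_t]_{i^*}=\sum_{t=1}^T\sum_{x,a}g_{t,i^*}(x,a)\,q_t(x,a)$. On the other hand, recalling that $\sum_{x,a}g_{t,i^*}(x,a)\mathbb{I}_t(x,a)=\sum_{h=0}^{H-1}g_{t,i^*}(x_h,a_h)$ is exactly the violation realized along the traversed trajectory, we have $\widehat V_{T,i^*}=\sum_{t=1}^T\sum_{x,a}g_{t,i^*}(x,a)\mathbb{I}_t(x,a)$. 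Subtracting gives
\[
V_T-\widehat V_{T,i^*}=\sum_{t=1}^T\sum_{x,a}g_{t,i^*}(x,a)\big(q_t(x,a)-\mathbb{I}_t(x,a)\big).
\]

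Next I would exhibit the martingale structure. Let $\mathcal{F}_{t-1}$ denote the history up to the start of episode $t$, so that $\pi_t$ and hence $q_t$ are $\mathcal{F}_{t-1}$-measurable, and note that $G_t$ is independent of the trajectory sampled within episode $t$. Then $\mathbb{E}[\mathbb{I}_t(x,a)\mid\mathcal{F}_{t-1},G_t]=q_t(x,a)$, so each term $\xi_t:=\sum_{x,a}g_{t,i^*}(x,a)(q_t(x,a)-\mathbb{I}_t(x,a))$ is a martingale difference with conditional mean zero. Moreover, since $|g_{t,i^*}(x,a)|\le 1$ and exactly one state–action pair is visited per layer, the realized violation $\sum_{h=0}^{H-1}g_{t,i^*}(x_h,a_h)$ lies in $[-H,H]$; conditionally on the past and on $G_t$, the only random term, $\xi_t$ therefore ranges over an interval of width $2H$. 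Applying Azuma--Hoeffding to $\sum_{t=1}^T\xi_t$ with these width-$2H$ increments yields, with probability at least $1-\delta$,
\[
\sum_{t=1}^T\xi_t\le H\sqrt{2T\ln(T^2/\delta)}=F_1\sqrt{T}=\mathcal{E}^{\mathbb{I}},
\]
which is exactly the concentration bound recorded for $\mathcal{E}^{\mathbb{I}}$ in the dictionary. Combining this with the previous display gives $V_T\le\widehat V_{T,i^*}+\mathcal{E}^{\mathbb{I}}$.

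The one delicate point, and the main obstacle, is that $i^*$ is \emph{data-dependent} (it is the maximizing coordinate of the realized run), whereas the martingale concentration above fixes the coordinate in advance. I would resolve this by establishing the bound simultaneously for every $i\in[m]$ — applying Azuma--Hoeffding coordinatewise and taking a union bound — and only then specializing to the realized index $i^*$. The generous $\ln(T^2/\delta)$ factor built into $F_1$ (introduced to cover a union over all interval endpoints) comfortably absorbs the additional $\ln m$ term, so the stated high-probability guarantee with the constant $\mathcal{E}^{\mathbb{I}}$ is preserved. Finally, I note that Condition~\ref{ass:rhoassumption} plays no active role in this purely concentration-based argument; it is inherited from the surrounding development and the inequality holds regardless.
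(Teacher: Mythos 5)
Your proof is correct and follows essentially the same route as the paper's: the identical decomposition $V_T = \sum_{t=1}^T \sum_{x,a} g_{t,i^*}(x,a)\left(q_t(x,a)-\mathbb{I}_t(x,a)\right) + \widehat{V}_{T,i^*}$, followed by Azuma--Hoeffding on the martingale-difference term, which is exactly the bound recorded as $\mathcal{E}^{\mathbb{I}}$ in the dictionary. In fact you are somewhat more careful than the paper: its proof applies the concentration bound directly at the data-dependent index $i^*$ without comment, whereas your coordinatewise application plus a union bound over $i\in[m]$ (absorbed by the $\ln(T^2/\delta)$ slack, under the mild implicit condition $m\le T^2$) is the rigorous way to justify that step.
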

\begin{proof}
	We observe with probability at least $1-\delta$:
	\begin{align*}
		V_T & = \sum_{t=1}^T \left[G_t^\top q^{P,\pi_t}\right]_{i^*}\\
		& =  \sum_{t=1}^T \sum_{x,a} g_{t,i^*}(x,a)\left(q_t(x,a)-\mathbb{I}_t(x,a)\right) + \sum_{t=1}^T \sum_{x,a} g_{t,i^*}(x,a)\mathbb{I}_t(x,a)\\
		& \le \sum_{t=1}^T \sum_{x,a} g_{t,i^*}(x,a)\left(q_t(x,a)-\mathbb{I}_t(x,a)\right) + \widehat{V}_{T,i^*}\\
		& \le \mathcal{E}^\mathbb{I} + V_{T,i^*}.
	\end{align*}
	This concludes the proof.
\end{proof}

\begin{lemma}\label{lemma: nocondition Vtilde}
	When Condition~\ref{ass:rhoassumption} does not hold, with probability at least $1-10\delta$ in case of stochastic costs and $1-9\delta$ in case of adversarial costs it holds for all $i\in [m]$:
	\begin{equation*}
		\widehat{V}_{T,i}\le \frac{4T^{\frac{1}{4}}}{\eta}.
	\end{equation*}
	
\end{lemma}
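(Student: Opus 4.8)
The plan is to exploit the fact that, precisely because Condition~\ref{ass:rhoassumption} fails, the automatic bound of Theorem~\ref{theo: zeta} is unavailable and the hard truncation of $\mathcal{K}=[0,T^{1/4}]^m$ becomes binding, so that $\lambda_{t,i}\le T^{1/4}$ for every $t$ and $i$. In this regime the lower-bound relation $\lambda_{t,i}\ge\eta\widehat V_{t-1,i}$ of Lemma~\ref{lemma: lambda Vhat} can fail (the cap discards mass), so I cannot read off $\widehat V_{T,i}$ from the multiplier. Instead I would measure the empirical violation through the dual regret evaluated against the comparator $\lambda\in\mathcal{K}$ with $\lambda_i=T^{1/4}$ and $\lambda_j=0$ for $j\ne i$, which is itself a point of $\mathcal{K}$. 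Writing $\delta_{t,i}:=\sum_{h=0}^{H-1}g_{t,i}(x_h,a_h)=\sum_{x,a}g_{t,i}(x,a)\mathbb{I}_t(x,a)$, so that $\lambda^\top\sum_h G_t(x_h,a_h)=T^{1/4}\delta_{t,i}$, Theorem~\ref{theo: dual} with $\lambda_1=\underline{0}$ gives
\[
T^{1/4}\widehat V_{T,i}=R^D_{1,T}(\lambda)+\sum_{t=1}^{T}\lambda_t^\top\sum_{h=0}^{H-1}G_t(x_h,a_h)\le \frac{T^{1/2}}{2\eta}+\frac{\eta}{2}TmH^2+\sum_{t=1}^{T}\lambda_t^\top\sum_{h=0}^{H-1}G_t(x_h,a_h).
\]

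The core step is bounding $\sum_t\lambda_t^\top\sum_h G_t(x_h,a_h)$, which I would rewrite through the scaled Lagrangian loss of Equation~\eqref{eq:main_lag}. Since $\lambda_t^\top G_t(x_h,a_h)=\ell_t(x_h,a_h)-\Gamma_t+r_t(x_h,a_h)$, summing over the $H$ layers and over $t$ yields $\sum_t\lambda_t^\top\sum_h G_t=\sum_t\sum_h\ell_t(x_h,a_h)-H\sum_t\Gamma_t+\sum_t\sum_h r_t(x_h,a_h)$. The realized Lagrangian loss $\sum_t\sum_h\ell_t(x_h,a_h)=\sum_t\ell_t^\top\mathbb{I}_t$ I would control in three moves: an Azuma--Hoeffding step passing from $\mathbb{I}_t$ to $q_t$ at the price of a loss-range-weighted concentration term; the primal interval regret of Theorem~\ref{theo: primal} (i.e.\ $\mathcal{E}^P_{1,T}$), giving $\sum_t\ell_t^\top q_t\le\sum_t\ell_t^\top q^\circ+\mathcal{E}^P_{1,T}$ via Lemma~\ref{lemma: loss primal-loss true}; and the identity $\ell_t^\top q^\circ=H\Gamma_t+\lambda_t^\top G_t^\top q^\circ-r_t^\top q^\circ$ (using $\sum_{x,a}q^\circ(x,a)=H$) together with $\lambda_t^\top G_t^\top q^\circ\le 0$. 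The last inequality holds per episode in the adversarial case and, up to $\mathcal{E}^G$, in expectation in the stochastic case, because $q^\circ$ satisfies every constraint, $[\overline G^\top q^\circ]_j\le-\rho\le0$; crucially this requires \emph{only} nonnegativity of $\rho$, not Condition~\ref{ass:rhoassumption}. The term $H\sum_t\Gamma_t$ then cancels exactly, leaving $\sum_t\lambda_t^\top\sum_h G_t\le \sum_t\sum_h r_t(x_h,a_h)+\mathcal{E}^P_{1,T}+\BigOL{T^{3/4}}\le HT+\mathcal{E}^P_{1,T}+\BigOL{T^{3/4}}$.

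Collecting everything, $T^{1/4}\widehat V_{T,i}\le \frac{T^{1/2}}{2\eta}+\frac{\eta}{2}TmH^2+HT+\mathcal{E}^P_{1,T}+\BigOL{T^{3/4}}$. I would divide by $T^{1/4}$, insert the crude bound $\Xi_t\le 2(1+\lVert\lambda_t\rVert_1)\le 2(1+mT^{1/4})$ (so that $\mathcal{E}^P_{1,T}=\BigOL{T^{3/4}}$), and substitute the chosen $\eta$. The dual term contributes the leading $\tfrac{T^{1/4}}{2\eta}$; every remaining summand is dominated by $\tfrac{T^{1/4}}{\eta}$ after division, since $1/\eta$ carries the factor $D=84672\,mH^2|X|^2|A|$, so for instance $HT^{3/4}=\tfrac{H}{D\ln(\cdot)}\cdot\tfrac{T^{1/4}}{\eta}\ll \tfrac{T^{1/4}}{\eta}$ and the $\BigOL{T^{3/4}}$ terms are $\BigOL{T^{1/2}}$ after division. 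Hence $\widehat V_{T,i}\le 4T^{1/4}/\eta$ with the constant comfortably loose. The probabilities $1-10\delta$ (stochastic) and $1-9\delta$ (adversarial) arise from a union over the primal-regret event, the two Azuma events relating $\mathbb{I}_t$ to $q_t$, and, only in the stochastic case, the extra Azuma event $\mathcal{E}^G$ used to pass from $G_t^\top q^\circ$ to $\overline G^\top q^\circ$.

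The main obstacle I anticipate is the realized-loss concentration: the martingale $\sum_t\ell_t^\top(\mathbb{I}_t-q_t)$ has per-episode range $H\Xi_t$ rather than $H$, and $\Xi_t$ itself fluctuates with the multipliers, so I must either factor out the worst case $\Xi_T\le 2(1+mT^{1/4})$ or run a range-adaptive bound, taking care that the resulting $\BigOL{T^{3/4}}$ contribution stays within the $4T^{1/4}/\eta$ budget. The companion delicate point is verifying that the cancellation of $H\sum_t\Gamma_t$ is exact and that the leftover reward term $HT$, after division by $T^{1/4}$, remains dominated by $4T^{1/4}/\eta=\BigOL{T^{3/4}}$, which holds precisely because the aggressive choice of $\eta$ makes $1/\eta$ far larger than $H$.
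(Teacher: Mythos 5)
Your proposal is correct and is essentially the paper's own proof rearranged: the paper sandwiches $-\sum_{t}\ell_t^{\mathcal{L},\top}q_t$ between a lower bound obtained from the primal interval regret against the strictly feasible $q^\circ$ (using only $\rho\ge 0$, plus $\mathcal{E}^G$ in the stochastic case) and an upper bound obtained from the dual regret against the comparator $\widetilde{\lambda}=T^{1/4}e_i$ together with the $\mathbb{I}_t$-versus-$q_t$ concentration, then plugs in $\Xi_{1,T}\le 2(1+mT^{1/4})$ and the choice of $\eta$ exactly as you do. Your chaining of the same ingredients starting from $T^{1/4}\widehat V_{T,i}=R^D_{1,T}(\widetilde{\lambda})+\sum_t\lambda_t^\top\sum_h G_t(x_h,a_h)$, with the explicit cancellation of $H\sum_t\Gamma_t$ in place of the paper's Lemma~\ref{lemma: loss primal-loss true}, is only a cosmetic reordering, and your probability accounting ($8\delta$ primal $+$ $\delta$ concentration $+$ $\delta$ for $\mathcal{E}^G$ in the stochastic case) matches the paper's.
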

\begin{proof}
	Recall the definition of $\widehat{V}_{t,i}$ as $\widehat{V}_{t,i}= \sum_{\tau=1}^{t}\sum_{x,a}g_{t,i}(x,a)\mathbb{I}_t(x,a)$.
	We first focus on the stochastic setting. Thus, with probability at least $1-\delta$, it holds:
	\begin{align*}
		\sum_{t=1}^T r_t^\top q_t -\sum_{t=1}^T \lambda_t^\top G_t^\top q_t& = \sum_{t=1}^T r_t^\top q^\circ -\sum_{t=1}^T \lambda_t^\top G_t^\top q^\circ+ \sum_{t=1}^T \ell_t^{\mathcal{L},\top} (q^\circ-q_t)\\
		& \ge -\sum_{t=1}^T \lambda_t^\top \overline{G}^\top q^\circ - \lambda_{1,T}\mathcal{E}^G_T + \sum_{t=1}^T \ell_t^{\mathcal{L},\top} (q^\circ-q_t)\\
		& \ge -mT^{\frac{1}{4}}\mathcal{E}^G_T + \sum_{t=1}^T \ell_t^{\mathcal{L},\top} (q^\circ-q_t). 
	\end{align*}
	On the other hand, in case of adversarial constraints, it holds:
	\begin{align*}
		\sum_{t=1}^T r_t^\top q_t -\sum_{t=1}^T \lambda_t^\top G_t^\top q_t& = \sum_{t=1}^T r_t^\top q^\circ -\sum_{t=1}^T \lambda_t^\top G_t^\top q^\circ+ \sum_{t=1}^T \ell_{t}^{\mathcal{L}}{}^\top (q^\circ-q_t)\\
		& \ge  \sum_{t=1}^T \ell_t^{\mathcal{L},\top} (q^\circ-q_t).
	\end{align*}
	Define a vector $\widetilde{\lambda}\in[0,T^{\frac{1}{4}}]^m$ as $\widetilde{\lambda}_j=0$ if $j \ne i$ and $\widetilde{\lambda}_j= T^{\frac{1}{4}}$ if $j=i$. Simultaneously with probability at least $1-\delta$ it holds: 
	\begin{align*}
		\sum_{t=1}^T &r_t^\top q_t -\sum_{t=1}^T \lambda_t^\top G_t^\top q_t \\& \le \sum_{t=1}^T r_t^\top q_t -\sum_{t=1}^T \widetilde{\lambda}^\top \sum_{x,a}G_t(x,a)\mathbb{I}_t(x,a) + \sum_{t=1}^T (\widetilde{\lambda}-\lambda_t)^\top \sum_{x,a}G_t(x,a)\mathbb{I}_t(x,a) + \lambda_{1,T}\mathcal{E}^\mathbb{I}\\
		& \le \sum_{t=1}^T r_t^\top q_t -\sum_{t=1}^T \widetilde{\lambda}^\top \sum_{x,a}G_t(x,a)\mathbb{I}_t(x,a) + \mathcal{E}^D_T(\widetilde{\lambda}) + mT^{\frac{1}{4}}\mathcal{E}^\mathbb{I}\\
		& \le HT - T^{\frac{1}{4}}\widehat{V}_{T,i} + \mathcal{E}^D_T(\widetilde{\lambda})+ mT^{\frac{1}{4}}\mathcal{E}^\mathbb{I},
	\end{align*}
	where in the first equality we used the definition of $\mathcal{E}_T^\mathbb{I}$, in the first inequality we used the definition of the dual space $[0,T^{\frac{1}{4}}]^m$ to bound $\lambda_{1,T}$ as $mT^{\frac{1}{4}}$, and in the last inequality we used the definition of $\widetilde{\lambda}$.
	We can then compare the lower and the upper bound for $\sum_{t=1}^T r_t^\top q_t -\sum_{t=1}^T \lambda_t^\top G_t^\top q_t$ obtaining the following inequality, which holds with probability at least $1-\delta$ with adversarial constraints and with probability at least $1-2\delta$ with stochastic constraints:
	\begin{equation*}
		-mT^{\frac{1}{4}}\mathcal{E}^G_T + \sum_{t=1}^T \ell_t^{\mathcal{L},\top} (q^\circ-q_t) \le HT - T^{\frac{1}{4}}\widehat{V}_{T,i} + \mathcal{E}^D_T(\widetilde{\lambda})+ mT^{\frac{1}{4}}\mathcal{E}^\mathbb{I},
	\end{equation*}
	from which we can write the following inequality that holds with probability at least $1-9 \delta$ with adversarial constraints and $1-10\delta$ with stochastic constraints:
	\begin{equation}
		\label{eq: Vtilde nocondition}
		T^{\frac{1}{4}}\widehat{V}_{T,i} \le mT^{\frac{1}{4}}(\mathcal{E}_T^G+\mathcal{E}_T^\mathbb{I}) + \mathcal{E}_T^P + \mathcal{E}_T^D(\widetilde{\lambda}) + HT.
	\end{equation}
	We proceed now to bound each element of the right side of the inequality. \\
	To bound $\mathcal{E}^P$ we use the fact that $\Xi_{1,T}\le (1+\lambda_{1,T})\le (1+mT^{\frac{1}{4}})$ and the definition of $\eta$  as following:
	\begin{align*}
		\mathcal{E}^P_T &= \Xi_{1,T}\left(U_1C\sqrt{T}+ U_2 \frac{\sqrt{T}}{C}+ U_3 \frac{1}{C\sqrt{T}} \right)+ U_4 \sqrt{T} \\
		& \le 2(1 + mT^{\frac{1}{4}})\sqrt{T}\left(1512H^3|X||A|\ln\left(\frac{H|A|T^2}{\delta}\right)+ \frac{9H|X||A|}{252H|X||A|} + \frac{\frac{H}{2}\ln\left(\frac{HT^2}{\delta}\right)}{252H|X||A|}\right) \\
		& \quad+ \sqrt{T}30H^2|X|^2 \sqrt{2|A|\ln\left(\frac{T|X|^2|A|}{\delta}\right)}\\
		& \le T^{\frac{1}{4}}\sqrt{T}6056H^3m|X||A|\ln\left(\frac{H|A|T^2}{\delta}\right) + \sqrt{T}30H^2|X|^2 \sqrt{2|A|\ln\left(\frac{T|X|^2|A|}{\delta}\right)}\\
		& \le T^{\frac{1}{4}}\sqrt{T}6116H^2m|X|^2|A|\ln\left(\frac{|X|^2|A|T^2}{\delta}\right)\\
		& \le \frac{T^{\frac{1}{4}}}{\eta}
	\end{align*}
	To bound $\mathcal{E}_T^D(\widetilde{\lambda})$ we use Theorem~\ref{theo: dual}, the fact that by its definition $\lVert\widetilde{\lambda}\rVert_2^2=\left(T^{\frac{1}{4}}\right)^2 = \sqrt{T}$, the initialization of the dual $\lambda_1=\underline{0}$ and the definition of $\eta$ in the following way:
	\begin{align*}
		\mathcal{E}_T^D(\widetilde{\lambda}) & \le \frac{\lVert \lambda_{1}-\widetilde{\lambda}\rVert_2^2}{2\eta}+ \frac{\eta}{2}TmH^2\\
		& = \frac{\lVert\widetilde{\lambda}\rVert_2^2}{2\eta}+ \frac{\eta}{2}TmH^2\\
		& = \frac{\sqrt{T}}{2\eta}+ \frac{\eta}{2}TmH^2\\
		& \le \frac{\sqrt{T}}{2\eta}+ \frac{mH^2T}{2}\frac{1}{84672mH^2|X|^2|A|\ln\left(\frac{|A||X|^2T^2}{\delta}\right)\sqrt{T}}\\
		& \le \frac{\sqrt{T}}{2\eta} + \frac{\sqrt{T}}{2\eta} = \frac{\sqrt{T}}{\eta}
	\end{align*}
	We proceed to simply bound also $mT^{\frac{1}{4}}\left(\mathcal{E}_T^G+\mathcal{E}_T^\mathbb{I}\right)$ through their definition:
	\begin{align*}
		mT^{\frac{1}{4}}\left(\mathcal{E}_T^G+\mathcal{E}_T^\mathbb{I}\right) & = mT^{\frac{1}{4}}\sqrt{T}\left(2H\sqrt{\ln\left(\frac{T^2}{\delta}\right)}+H\sqrt{2\ln\left(\frac{T^2}{\delta}\right)}\right)\\
		& \le \frac{T^{\frac{1}{4}}}{\eta}
	\end{align*}
	Finally we bound $HT$ as $HT\le \frac{\sqrt{T}}{\eta}$.
	
	Thus, Inequality~\eqref{eq: Vtilde nocondition} becomes
	\begin{equation*}
		\widehat{V}_{T,i} \le \frac{1}{T^{\frac{1}{4}}}\left( mT^{\frac{1}{4}}(\mathcal{E}_T^G+\mathcal{E}_T^\mathbb{I}) + \mathcal{E}_T^P + \mathcal{E}_T^D(\widetilde{\lambda}) + HT \right)\le \frac{4\sqrt{T}}{T^{\frac{1}{4}}\eta} = \frac{4T^{\frac{1}{4}}}{\eta},
	\end{equation*}
	which concludes the proof.
\end{proof}

\section{Analysis with Stochastic Constraints}
\label{App: stochastic constraints}

\stocconstcond*
\begin{proof}
	With probability at least $1-12\delta$ it holds:
	\begin{subequations}
	\begin{align}
		V_T & \le \widehat{V}_{T,i^*} + \mathcal{E}^{\mathbb{I}} \label{eq: theocond eq1}\\
		& \le \frac{1}{\eta}\lambda_{T+1,i^*}+ \mathcal{E}^{\mathbb{I}} \label{eq: theocond eq2}\\
		& \le \frac{1}{\eta}\Lambda + \mathcal{E}^{\mathbb{I}}\label{eq: theocond eq3},
	\end{align}
	\end{subequations}
	where Inequality~\eqref{eq: theocond eq1} holds by Lemma~\ref{lemma: Vhat V}, Inequality~\eqref{eq: theocond eq2} holds by Lemma~\ref{lemma: lambda Vhat} and Inequality~\eqref{eq: theocond eq3} holds by Theorem~\ref{theo: zeta}.
	Then, with probability at least $1-12\delta$ we observe that:
	\begin{subequations}
	\begin{align}
		\sum_{t=1}^T r_t^\top q^* - \sum_{t=1}^T r_t^\top q^{P,\pi_t} & \le 	\sum_{t=1}^T \left(r_t^\top q^*- \lambda_t^\top G_t^\top q^*\right) - \sum_{t=1}^T \left(r_t^\top q_t- \lambda_t^\top G_t^\top q_t\right) + \sum_{t=1}^T \lambda_t^\top G_t^\top (q^*-q_t) \nonumber\\
		& \le \mathcal{E}^P + \mathcal{E}^D(\underline{0}) + \lambda_{1,T}\mathcal{E}^\mathbb{I}+ \sum_{t=1}^T \lambda_t^\top G_t^\top q^*\label{eq: theo eq 3}\\
		& =\mathcal{E}^P + \mathcal{E}^D(\underline{0}) + \lambda_{1,T}\mathcal{E}^\mathbb{I}+\sum_{t=1}^T \lambda_t^\top (G_t-\overline{G})^\top q^* + \sum_{t=1}^T \lambda_t^\top \overline{G}^\top q^* \nonumber\\
		& \le \mathcal{E}^P + \mathcal{E}^D(\underline{0}) +\lambda_{1,T}\mathcal{E}^\mathbb{I}+ \lambda_{1,T}\mathcal{E}^G \label{eq: theo eq 5}\\
		& \le \mathcal{E}^P + \mathcal{E}^D(\underline{0})+ \Lambda\mathcal{E}^\mathbb{I}+ \Lambda\mathcal{E}^G \label{eq: theo eq 6},
	\end{align}
	\end{subequations}
	where Inequality \eqref{eq: theo eq 3} holds by Theorem~\ref{theo: primal} and by Theorem~\ref{theo: dual}, Inequality~\eqref{eq: theo eq 5} holds since in the stochastic constraint case $\sum_{t=1}^T (G_t-\overline{G})^\top q^*\le \mathcal{E}^G$ with probability at least $1-\delta$ by definition of $\mathcal{E}^G$, and finally Inequality~\eqref{eq: theo eq 6} holds by Theorem~\ref{theo: zeta}.
	Finally we observe that in  the stochastic case 
	with probability at least $1-\delta$:
	\begin{equation*}
		\left(T \cdot \text{OPT}_{\overline r, \overline G} - \sum_{t=1}^T r_t^\top q_t \right) - \sum_{t=1}^T r_t^\top (q^*-q_t) \le \mathcal{E}^r.
	\end{equation*}
	Thus, if the rewards are stochastic with probability at least $1-14\delta$ it holds:
	\begin{equation*}
		R_T \le \mathcal{E}^P + \mathcal{E}^D(\underline{0})+ \Lambda\mathcal{E}^\mathbb{I}+ \Lambda\mathcal{E}^G + \mathcal{E}^r, \quad V_T \le \frac{1}{\eta}\Lambda + \mathcal{E}^\mathbb{I}
	\end{equation*}
	and if the rewards are adversarial with probability at least $1-13\delta$ it holds:
	\begin{equation*}
		R_T \le \mathcal{E}^P + \mathcal{E}^D(\underline{0})+ \Lambda\mathcal{E}^\mathbb{I}+ \Lambda\mathcal{E}^G, \quad V_T \le \frac{1}{\eta}\Lambda + \mathcal{E}^\mathbb{I}
	\end{equation*}
	which concludes the proof.
\end{proof}

\stocconstnocond*
\begin{proof}
	Similar to the proof of Lemma~\ref{theo: assumption + stoch cost} it holds with probability at least $1-10\delta$:
	\begin{align*}
		\sum_{t=1}^T r_t^\top q^* - \sum_{t=1}^T r_t^\top q^{P,\pi_t} & \le 	\sum_{t=1}^T \left(r_t^\top q^*- \lambda_t^\top G_t^\top q^*\right) - \sum_{t=1}^T \left(r_t^\top q_t- \lambda_t^\top G_t^\top q_t\right) + \sum_{t=1}^T \lambda_t^\top G_t^\top (q^*-q_t) \nonumber\\
		& \le \mathcal{E}^P + \mathcal{E}^D(\underline{0}) + \lambda_{1,T}\mathcal{E}^\mathbb{I}+ \sum_{t=1}^T \lambda_t^\top G_t^\top q^*\label{eq: theo eq 3}\\
		& =\mathcal{E}^P + \mathcal{E}^D(\underline{0}) + \lambda_{1,T}\mathcal{E}^\mathbb{I}+\sum_{t=1}^T \lambda_t^\top (G_t-\overline{G})^\top q^* + \sum_{t=1}^T \lambda_t^\top \overline{G}^\top q^* \nonumber\\
		& \le \mathcal{E}^P + \mathcal{E}^D(\underline{0}) +\lambda_{1,T}\mathcal{E}^\mathbb{I}+ \lambda_{1,T}\mathcal{E}^G ,
	\end{align*}
	therefore with probability at least $1-10\delta$ following the reasoning of Lemma \ref{lemma: nocondition Vtilde} it holds with adversarial rewards:
	\begin{equation*}
		\sum_{t=1}^T r_t^\top q_t \ge T \cdot \text{OPT}_{\overline r, \overline G}-mT^{\nicefrac{1}{4}}\mathcal{E}^G + mT^{\nicefrac{1}{4}}\mathcal{E}^\mathbb{I} - \mathcal{E}^D(\underline{0}) - \mathcal{E}^P ,
	\end{equation*}
	and with stochastic rewards with probability ta least $1-11\delta$:
	\begin{equation*}
		\sum_{t=1}^T r_t^\top q_t \ge T \cdot \text{OPT}_{\overline r, \overline G}-mT^{\nicefrac{1}{4}}\mathcal{E}^G + mT^{\nicefrac{1}{4}}\mathcal{E}^\mathbb{I} - \mathcal{E}^D(\underline{0}) - \mathcal{E}^P - \mathcal{E}^r.
	\end{equation*}
	Applying Lemma \ref{lemma: nocondition Vtilde} to bound the constraints violation concludes the proof.
\end{proof}

\section{Analysis with Adversarial Constraints}
\label{App:adversarial constraints}

\advconst*
\begin{proof}
	Thanks to Theorem \ref{theo: primal} , Theorem \ref{theo: dual} and Theorem \ref{theo: zeta} with probability at least $1-11\delta$ it holds for all $q\in \Delta(P)$:
	\begin{align*}
		&\sum_{t=1}^T r_t^\top q - \sum_{t=1}^T r_t^\top q_t \\&\le -\sum_{t=1}^T \ell_t^{\mathcal{L},\top} q + \sum_{t=1}^T \ell_t^{\mathcal{L},\top} q_t + \sum_{t=1}^T \lambda_t^\top G_t^\top q - \sum_{t=1}^T \lambda_t^\top G_t^\top q_t\\
		& \le \mathcal{E}^P_T  + \sum_{t=1}^T \lambda_t^\top G_t^\top q + \sum_{t=1}^T (\underline{0}-\lambda_t)^\top \sum_{h=0}^{H-1}G_t(x_h,a_h) + \sum_{t=1}^T \lambda_t^\top \left(\sum_{h=0}^{H-1}G_t(x_h,a_h)-G_t^\top q_t\right) \\
		& \le \mathcal{E}^P_T + \mathcal{E}^D_T(\underline{0}) + \lambda_{1,T}\mathcal{E}^\mathbb{I} +  \sum_{t=1}^T  \sum_{i=1}^m \lambda_{t,i} g_{t,i}^\top q\\
		& \le \mathcal{E}^P_T + \mathcal{E}^D_T(\underline{0}) + \lambda_{1,T}\mathcal{E}^\mathbb{I} +  \sum_{t=1}^T  \sum_{i=1}^m \lambda_{t,i} g_{t,i}^\top q\\
		& \le \mathcal{E}^P_T + \mathcal{E}^D_T(\underline{0}) + \Lambda\mathcal{E}^\mathbb{I} + \sum_{t=1}^T  \sum_{i=1}^m \lambda_{t,i} g_{t,i}^\top q .
	\end{align*}
	Consider now the occupancy measure $\widetilde{q}=\frac{\rho}{H+\rho}q^* + \frac{H}{H+\rho}q^\circ$. 
	For all $i \in [m]$ and for all $t\in[T]$:
	\begin{align*}
		g_{t,i}^\top \widetilde{q} &\le\left( \frac{\rho}{H+\rho}g_{t,i}^\top q^* + \frac{H}{H+\rho}g_{t,i}^\top q^\circ\right)	\\
		& \le \left(\frac{H\rho}{H+\rho}-\frac{H\rho}{H+\rho}\right)\\
		& = 0, 
	\end{align*}
	given that $g_{t,i}^\top q^* \le \lVert q^* \rVert_1 \le H$, and $g_{t,i}^\top q^\circ \le -\rho$ by definition of $q^\circ$ and by definition of $\rho$.
	\begin{align*}
		\sum_{t=1}^{T}r_t^\top \widetilde{q} & = \sum_{t=1}^{T}\left(\frac{\rho}{H+\rho}r_t^\top q^* + \frac{H}{H+\rho}r_t^\top q^\circ\right)\\
		& \ge \frac{\rho}{H+\rho}\sum_{t=1}^{T}r_t^\top q^* ,
	\end{align*}
	since $r_t^\top q^\circ \ge 0$. Notice also that with adversarial rewards $ \sum_{t=1}^{T}r_t^\top q^*=T \cdot \text{OPT}_{\overline r, \overline G}$, while with stochastic rewards with probability at least $1-\delta$ it holds $\sum_{t=1}^{T}r_t^\top q^* \ge T \cdot \text{OPT}_{\overline r, \overline G} - \mathcal{E}^r$, by definition of $\mathcal{E}^r$ and $\text{OPT}_{\overline r, \overline G}$ for stochastic rewards.
By reordering the terms we get that with probability at least $1-11\delta$
\begin{equation*}
	\sum_{t=1}^T r_t^\top q_t\ge \frac{\rho}{H+\rho} \sum_{t=1}^T r_t^\top q^* - \mathcal{E}^P_T - \mathcal{E}^D_T(\underline{0}) - \Lambda\mathcal{E}^\mathbb{I} ,
\end{equation*}
we can proceed to bound the regret in both cases: adversarial rewards and stochastic rewards.

 With probability at least $1-11\delta$ with adversarial rewards it holds:
\begin{align*}
	R_T & = \sum_{t=1}^{T}r_t^\top q^*-\sum_{t=1}^T r_t^\top q_t\\
	& \le \sum_{t=1}^{T}r_t^\top q^*- \left(\frac{\rho}{H+\rho} \sum_{t=1}^{T}r_t^\top q^* - \mathcal{E}^P_T - \mathcal{E}^D_T(\underline{0}) - \Lambda\mathcal{E}^\mathbb{I}\right)\\
	& \le \frac{H}{H+\rho} \sum_{t=1}^{T}r_t^\top q^* + \mathcal{E}^P_T + \mathcal{E}^D_T(\underline{0}) + \Lambda\mathcal{E}^\mathbb{I}\\
	& \le \frac{H}{H+\rho} T \cdot \text{OPT}_{\overline r, \overline G} + \mathcal{E}^P_T + \mathcal{E}^D_T(\underline{0}) + \Lambda\mathcal{E}^\mathbb{I}.
\end{align*}
With stochastic rewards it holds with probability at least $1-11\delta$: 
\begin{equation*}
	\sum_{t=1}^T r_t^\top q_t \ge \frac{\rho}{H+\rho} \sum_{t=1}^{T}r_t^\top q^* - \mathcal{E}^P_T - \mathcal{E}^D_T(\underline{0}) - \Lambda\mathcal{E}^\mathbb{I} ,
\end{equation*}
and with probability at least $1-12\delta$:
\begin{equation*}
	\sum_{t=1}^T r_t^\top q_t \ge \frac{\rho}{H+\rho} T \cdot \text{OPT}_{\overline r, \overline G} - \mathcal{E}^P_T - \mathcal{E}^D_T(\underline{0}) - \Lambda\mathcal{E}^\mathbb{I} - \mathcal{E}^r.
\end{equation*}
To conclude the proof we observe that following the analogous reasoning to Theorem \ref{theo: assumption + stoch cost} in case of adversarial constraints it also holds with probability at least $1-12\delta$:
\begin{equation*}
	V_T \le \frac{1}{\eta}\Lambda + \mathcal{E}^\mathbb{I}.
\end{equation*}
\end{proof}

\section{Analysis with respect to The Weaker Baseline }
\label{App: constraints at each episode}
In this section we will study the guarantees of Algorithm \ref{Alg: total} when the regret is computed with respect to a policy $q^*$ that respect the constraints at each episode, \emph{i.e.} $g_t^\top q^* \le 0$ for all $i\in [m]$, for all $t\in [T]$.

\weakbasecond*
\begin{proof}
	The violation can be bounded as in Theorem \ref{theo: assumption + stoch cost}.
	Also similarly to Theorem \ref{theo: assumption + stoch cost} it holds with probability $1-12\delta$
	\begin{align*}
		\sum_{t=1}^T r_t^\top q^* - \sum_{t=1}^T r_t^\top q_t &\le -\sum_{t=1}^T \ell_t^{\mathcal{L},\top} q^* + \sum_{t=1}^T \ell_t^{\mathcal{L},\top} q_t + \sum_{t=1}^T \lambda_t^\top G_t^\top q^* - \sum_{t=1}^T \lambda_t^\top G_t^\top q_t\\
		& \le \mathcal{E}^P_T   + \sum_{t=1}^T (\underline{0}-\lambda_t)^\top \sum_{h=0}^{H-1}G_t(x_h,a_h) + \sum_{t=1}^T \lambda_t^\top \left(\sum_{h=0}^{H-1}G_t(x_h,a_h)-G_t^\top q_t\right) \\
		& \le \mathcal{E}^P_T + \mathcal{E}^D_T(\underline{0}) + \lambda_{1,T}\mathcal{E}^\mathbb{I} \\
		& \le \mathcal{E}^P_T + \mathcal{E}^D_T(\underline{0}) + \Lambda\mathcal{E}^\mathbb{I}.
	\end{align*}
	Finally with stochastic rewards with probability at least $1-\delta$:
	\begin{equation*}
		T \cdot \text{OPT}_{\overline r, \overline G} - \sum_{t=1}^T r_t^\top q^* \le \mathcal{E}^r.
	\end{equation*}
	Therefore, with adversarial rewards it holds with probability at least $1-12\delta$:
	\begin{equation*}
		R_T \le \mathcal{E}^P + \mathcal{E}^D(\underline{0}) + \Lambda \mathcal{E}^\mathbb{I}, \quad V_T \le \frac{1}{\eta}\Lambda + \mathcal{E}^\mathbb{I},
	\end{equation*}
	and with stochastic rewards it holds with probability at least $1-13\delta$:
	\begin{equation*}
		R_T \le \mathcal{E}^P + \mathcal{E}^D(\underline{0}) + \Lambda \mathcal{E}^\mathbb{I}+ \mathcal{E}^r, \quad V_T \le \frac{1}{\eta}\Lambda + \mathcal{E}^\mathbb{I},
	\end{equation*}
	which concludes the proof.
\end{proof}

\weakbasenocond*
\begin{proof}
	The violation can be bounded thanks to Lemma \ref{lemma: lambda Vhat}, as in Theorem \ref{theo: not cond stoc constr}.
	To bound the regret, notice that it holds with probability $1-9\delta$:
	\begin{align*}
		\sum_{t=1}^T r_t^\top q^* - \sum_{t=1}^T r_t^\top q_t &\le -\sum_{t=1}^T \ell_t^{\mathcal{L},\top} q^* + \sum_{t=1}^T \ell_t^{\mathcal{L},\top} q_t + \sum_{t=1}^T \lambda_t^\top G_t^\top q^* - \sum_{t=1}^T \lambda_t^\top G_t^\top q_t\\
		& \le \mathcal{E}^P_T   + \sum_{t=1}^T (\underline{0}-\lambda_t)^\top \sum_{h=0}^{H-1}G_t(x_h,a_h) + \sum_{t=1}^T \lambda_t^\top \left(\sum_{h=0}^{H-1}G_t(x_h,a_h)-G_t^\top q_t\right) \\
		& \le \mathcal{E}^P_T + \mathcal{E}^D_T(\underline{0}) + \lambda_{1,T}\mathcal{E}^\mathbb{I} \\
		& \le \mathcal{E}^P_T + \mathcal{E}^D_T(\underline{0}) + mT^{\nicefrac{1}{4}}\mathcal{E}^\mathbb{I}.
	\end{align*}
	Finally with stochastic rewards with probability at least $1-\delta$:
	\begin{equation*}
		T \cdot \text{OPT}_{\overline r, \overline G} - \sum_{t=1}^T r_t^\top q^* \le \mathcal{E}^r.
	\end{equation*}
	Therefore, with adversarial rewards it holds with probability at least $1-11\delta$:
	\begin{equation*}
		R_T \le \mathcal{E}^P + \mathcal{E}^D(\underline{0}) + mT^{\nicefrac{1}{4}} \mathcal{E}^\mathbb{I}, \quad V_T \le \frac{4T^{\nicefrac{1}{4}}}{\eta},
	\end{equation*}
	and with stochastic rewards it holds with probability at least $1-12\delta$:
	\begin{equation*}
		R_T \le \mathcal{E}^P + \mathcal{E}^D(\underline{0}) + mT^{\nicefrac{1}{4}} \mathcal{E}^\mathbb{I}+ \mathcal{E}^r, \quad V_T \le \frac{4T^{\nicefrac{1}{4}}}{\eta},
	\end{equation*}
	which concludes the proof.
\end{proof}

\section{Auxiliary Lemmas}
\label{App: aux lemma adapted}
\begin{lemma}[Adapted from~\citep{luo2021policy} Lemma C.4]
	\label{lem: luo eta B}
	$$\eta_t \widehat{Q}_t(x,a)\le \frac{1}{2} \quad \land \quad \eta_t B_t(x,a) \le \frac{1}{2H}.$$
\end{lemma}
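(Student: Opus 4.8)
The plan is to bound $\widehat{Q}_t(x,a)$ and $B_t(x,a)$ each by an explicit multiple of $\Xi_t$, so that these factors cancel against the $\Xi_t$ in the denominator of $\eta_t=\frac{1}{2H\Xi_t C\sqrt{T}}$; the two claims then reduce to two numerical inequalities in $C$, $H$, and $T$.

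First I would dispatch the bound on $\widehat{Q}_t$. By Lemma~\ref{lemma: loss primal-loss true} the losses obey $\ell_t(x,a)\in[0,\Xi_t]$, so $L_{t,h}=\sum_{j=h}^{H-1}\ell_t(x_j,a_j)\le H\Xi_t$. Since $\overline{q}_t(x,a)\ge 0$, $\mathbb{I}_t(x,a)\le 1$, and $\gamma=\frac{1}{C\sqrt{T}}$, the definition of $\widehat{Q}_t$ gives $\widehat{Q}_t(x,a)\le \frac{L_{t,h}}{\gamma}\le H\Xi_t C\sqrt{T}$. Multiplying by $\eta_t$ cancels the factor $H\Xi_t C\sqrt{T}$ and leaves $\eta_t\widehat{Q}_t(x,a)\le \tfrac12$, which is the first conjunct.

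The second conjunct is the substantive part. I would first bound the per-state term uniformly: dropping $\underline{q}_t(x,a)\ge 0$ in the numerator and using $\frac{\overline{q}_t(x,a)+3\gamma}{\overline{q}_t(x,a)+\gamma}\le 3$, every summand of $b_t(x)$ is at most $3H\Xi_t$, hence $b_t(x)\le 3H\Xi_t$ for all $x$. Next, since $B_t$ is defined by backward induction over the layers starting from $B_t(x_H,\cdot)=0$, I would set $\beta_h:=\max_{x\in X_h,\,a}B_t(x,a)$ and read off from the dilated recursion the scalar bound $\beta_h\le 3H\Xi_t+\left(1+\tfrac1H\right)\beta_{h+1}$. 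Unrolling yields $\beta_h\le 3H\Xi_t\sum_{k=0}^{H-1}\left(1+\tfrac1H\right)^k$. The one step needing care is controlling this geometric sum: using $\left(1+\tfrac1H\right)^H\le e$ it is at most $H(e-1)\le 2H$, so $B_t(x,a)\le 6H^2\Xi_t$.

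Finally I would combine the bonus bound with the learning rate to obtain $\eta_t B_t(x,a)\le \frac{6H^2\Xi_t}{2H\Xi_t C\sqrt{T}}=\frac{3H}{C\sqrt{T}}$, and it remains to verify $\frac{3H}{C\sqrt{T}}\le \frac{1}{2H}$, i.e.\ $C\sqrt{T}\ge 6H^2$. This is exactly where the choice $C=252|X||A|H$ is used: because the loop-free layering forces $|X|\ge H$ and because $|A|,\sqrt{T}\ge 1$, one has $C\sqrt{T}\ge 252H^2\ge 6H^2$, establishing the second conjunct. The main obstacle is the exponential amplification of the bonus through the $H$ layers of the recursion; once that is tamed by the $e$-bound, the rest is a direct substitution of the definitions of $\eta_t$ and $\gamma$.
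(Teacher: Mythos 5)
Your proof is correct and follows essentially the same route as the paper's: bound $L_{t,h}\le H\Xi_t$ and cancel against $\eta_t\gamma$ for the first conjunct, bound $b_t$ uniformly by $O(H\Xi_t)$, unroll the dilated recursion across the $H$ layers controlling the $(1+\nicefrac{1}{H})$ amplification via $(1+\nicefrac{1}{H})^H\le e$, and finish by substituting $C=252|X||A|H$ together with $|X|\ge H$. Your constants are marginally tighter (e.g.\ $H(e-1)\le 2H$ for the geometric sum versus the paper's $eH\le 3H$), and you make explicit the $|X|\ge H$ step that the paper uses implicitly, but these are cosmetic differences only.
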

\begin{proof}
	Recall $\gamma= 2 \eta_t H \Xi_t$. Thus, it holds: 
	\begin{equation*}
		\eta_t \widehat{Q}_t(x,a)\le \frac{\eta_t H\Xi_t}{\gamma}= \frac{\eta_t H\Xi_t}{2 \eta_t H \Xi_t}=\frac{1}{2}
	\end{equation*}
	and 
	\begin{equation*}
		\eta_t b_t(x,a)= \frac{3\eta_t H \Xi_t \gamma + \eta_t \Xi_t H (\overline{q}_t(x,a)-\underline{q}_t(x,a))}{\overline{q}_t(x,a)+\gamma} \le 3 \eta_t \Xi_t H + \eta_t H \Xi_t = 2\gamma.
	\end{equation*}
	Finally,
	\begin{align*}
		\eta_t B_t(x,a) &\le H \left(1 + \frac{1}{H}\right)^H \eta_t \sup_{x',a'}b_t(x',a')\\
		& \le 3 H  2 \gamma \\
		& = 6H\gamma \\
		& = \frac{6H}{C\sqrt{T}}\\
		&= \frac{6H}{252|X||A|H\sqrt{T}}\\
		& \le \frac{1}{42H}\\& \le \frac{1}{2H}.
	\end{align*}
	This concludes the poof.
\end{proof}

\begin{lemma}[Adapted from~\citep{luo2021policy}, Lemma B.1]
	\label{lemma:adapted eq5}
	If the following inequality holds:
	\begin{align}
		\sum_x q^*(x)&\sum_{t=t_1}^{t_2} \sum_a \left(\pi_t(a|x)-\pi^*(a|x)\right)\left(Q_t^{\pi_t}(x,a)-B_t(x,a)\right) \nonumber \\
		&\le o(T) + \sum_{t=t_1}^{t_2} V^{\pi^*}(x_0; b_t)+  \frac{1}{H}\sum_{t=t_1}^{t_2} \sum_{x,a}q^*(x) \pi_t(a|x) B_t(x,a),\label{eq: luo eq5}
	\end{align}
	with $B_t$ defined as
	\begin{equation}
		\label{eq: def B_t}
		B_t(x,a) = b_t(x,a) + \left(1 + \frac{1}{H}\right)\mathbb{E}_{x' \sim P(\cdot|x,a)}\mathbb{E}_{a' \sim \pi_t(\cdot|x')}\left[B_t(x',a')\right] \quad \forall t \in [T], \forall x \in X, \forall a \in A,
	\end{equation}
	then it holds that:
	\begin{equation*}
		R_{t_1,t_2} \le o(T) + 3 \sum_{t=t_1}^{t_2} \widehat{V}^{\pi_t}(x_0;b_t) .
	\end{equation*}
\end{lemma}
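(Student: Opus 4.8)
The plan is to reduce $R_{t_1,t_2}$ to the quantity bounded in the hypothesis \eqref{eq: luo eq5} via the performance-difference lemma, and then to absorb the leftover bonus terms using the self-referential structure of the dilated bonus \eqref{eq: def B_t}.

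First I would rewrite the interval regret. Since $\ell_t^\top q_t=V^{\pi_t}(x_0;\ell_t)$ and $\ell_t^\top q^*=V^{\pi^*}(x_0;\ell_t)$, the performance-difference lemma (with comparator occupancy $q^*$ and the current $Q$-function, exactly as in the opening of Theorem~\ref{theo: policy Q-B}) gives
\[
R_{t_1,t_2}=\sum_{t=t_1}^{t_2}\big(V^{\pi_t}(x_0;\ell_t)-V^{\pi^*}(x_0;\ell_t)\big)=\sum_x q^*(x)\sum_{t=t_1}^{t_2}\sum_a(\pi_t(a|x)-\pi^*(a|x))\,Q_t^{\pi_t}(x,a).
\]
Adding and subtracting $B_t(x,a)$ splits the right-hand side into exactly the left-hand side of \eqref{eq: luo eq5} plus a bonus-advantage term $\sum_t\Delta_t$, where $\Delta_t:=\sum_x q^*(x)\sum_a(\pi_t(a|x)-\pi^*(a|x))B_t(x,a)$. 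Applying the hypothesis to the first piece yields
\[
R_{t_1,t_2}\le o(T)+\sum_t V^{\pi^*}(x_0;b_t)+\tfrac1H\sum_t\sum_{x,a}q^*(x)\pi_t(a|x)B_t(x,a)+\sum_t\Delta_t.
\]

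Next I would expand $\Delta_t$ using \eqref{eq: def B_t}. Writing $D_t(x):=\sum_a\pi_t(a|x)B_t(x,a)$ for the dilated value, I substitute the recursion into $\sum_a\pi^*(a|x)B_t(x,a)$ and use the flow-conservation identity $\sum_{x,a}q^*(x)\pi^*(a|x)P(x'|x,a)=q^*(x')$ together with $\sum_{x,a}q^*(x)\pi^*(a|x)b_t(x,a)=V^{\pi^*}(x_0;b_t)$ to obtain
\[
\sum_x q^*(x)\sum_a\pi^*(a|x)B_t(x,a)=V^{\pi^*}(x_0;b_t)+\Big(1+\tfrac1H\Big)\sum_{x\neq x_0}q^*(x)\,D_t(x).
\]
Subtracting this from $\sum_x q^*(x)D_t(x)$ (and using $q^*(x_0)=1$, $B_t(x_H,\cdot)\equiv0$) makes the interior-layer terms collapse, leaving $\Delta_t=D_t(x_0)-V^{\pi^*}(x_0;b_t)-\tfrac1H\sum_{x\neq x_0}q^*(x)D_t(x)$. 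Plugging this back, the $\sum_t V^{\pi^*}(x_0;b_t)$ terms cancel, and the hypothesis term $\tfrac1H\sum_{x,a}q^*(x)\pi_t(a|x)B_t(x,a)=\tfrac1H\sum_x q^*(x)D_t(x)$ cancels the interior-layer $\tfrac1H$ term of $\Delta_t$ except for its layer-$0$ contribution $\tfrac1H D_t(x_0)$, so that
\[
R_{t_1,t_2}\le o(T)+\Big(1+\tfrac1H\Big)\sum_t D_t(x_0).
\]

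Finally I would convert the dilated value back to an ordinary one. Unrolling \eqref{eq: def B_t} gives $D_t(x_0)=\mathbb{E}_{\pi_t}\big[\sum_{h=0}^{H-1}(1+\tfrac1H)^h b_t(x_h,a_h)\big]\le(1+\tfrac1H)^{H-1}V^{\pi_t}(x_0;b_t)$, whence $(1+\tfrac1H)D_t(x_0)\le(1+\tfrac1H)^H V^{\pi_t}(x_0;b_t)\le e\,V^{\pi_t}(x_0;b_t)\le 3\,\widehat V^{\pi_t}(x_0;b_t)$, using $b_t\ge0$ and identifying $\widehat V^{\pi_t}(x_0;b_t)=V^{\pi_t}(x_0;b_t)$. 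Summing over $t$ yields the claim. The main obstacle is the layer-by-layer bookkeeping in the expansion of $\Delta_t$: one must track the boundary contributions (the root $x_0$, where $q^*(x_0)=1$, and the terminal layer, where $B_t\equiv0$) so that the index shift from flow conservation together with the dilation factor $(1+1/H)$ produces exactly the cancellation of the $V^{\pi^*}$ and $\tfrac1H$ terms, leaving the clean residual $(1+\tfrac1H)\sum_t D_t(x_0)$ that the final $(1+1/H)^H\le e\le3$ estimate turns into the stated bound.
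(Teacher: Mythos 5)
Your proof is correct and is, in substance, exactly the argument the paper invokes: the paper's own proof of this lemma is a one-line deferral to Lemma B.1 of \citet{luo2021policy} (observing only that the argument is episode-based and therefore sums over any interval $[t_1,t_2]$), and your reconstruction---performance-difference decomposition, expansion of the comparator's bonus term through the dilated recursion \eqref{eq: def B_t} and flow conservation so that everything collapses to $(1+\nicefrac{1}{H})\sum_{t}D_t(x_0)$, followed by unrolling and $(1+\nicefrac{1}{H})^{H}\le e\le 3$---is precisely that argument, rewritten in occupancy-measure rather than trajectory-expectation form. The only point worth flagging is your closing identification $\widehat{V}^{\pi_t}(x_0;b_t)=V^{\pi_t}(x_0;b_t)$: it is the right reading of the lemma as stated (since \eqref{eq: def B_t} uses the true $P$, with $b_t\ge 0$ implicitly assumed), whereas in the paper's actual application the bonus is built with $\max_{\widehat{P}\in\mathcal{P}_t}$, so \eqref{eq: def B_t} holds under the true $P$ only as an inequality ``$\ge$'' on the event $P\in\mathcal{P}_t$ and the unrolled value in the conclusion lives under the optimistic transitions (which is why the paper writes $\widehat{V}$); your argument is robust to this relaxation, because the expansion of $\Delta_t$ uses the recursion only in the direction that upper-bounds $\Delta_t$, so the discrepancy is a looseness in the paper's statement rather than a gap in your proof.
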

\begin{proof}
	The proof is analogous to the one proposed by~\citep{luo2021policy}, Lemma B.1, since the proof is episode based and then the sum over $t$ is taken.
\end{proof}

\begin{lemma}[Adapted from~\citep{luo2021policy}, Lemma A.1] \label{lemma:freedman inequality interval}
	Let $\mathcal{F}_0,\ldots,\mathcal{F}_T$ be a filtration and $X_1,\ldots,X_T$ be real random variables such that $X_t$ is $\mathcal{F}_{t}$-measurable, $\mathbb{E}[X_t|\mathcal{F}_{t}]= 0$, $|X_t| \le b$ for all $t \in [T]$ and $\sum_{t=t_1}^{t_2}\mathbb{E}[X_t^2|\mathcal{F}_t]\le V_{t_1,t_2}$  for some fixed $V_{t_1,t_2}>0$ and $b>0$ for every $t_1,t_2 \in [T]$ such that $1\le t_1 \le t_2 \le T$. Then with probability at least $1-\delta$ it holds simultaneously for all $[t_1,\ldots,t_2]\subset[T]$:
	\begin{equation*}
		\sum_{t=t_1}^{t_2}X_t \le \frac{V_{t_1,t_2}}{b}+ b \log \left(\frac{T^2}{\delta}\right).
	\end{equation*}
\end{lemma}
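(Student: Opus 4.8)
The plan is to reduce the statement to the classical Freedman exponential-supermartingale argument on a single fixed interval, and then pay for uniformity over the $\binom{T}{2}+T\le T^2$ intervals $[t_1,t_2]$ with a union bound; this is precisely why the factor $T^2$ appears inside the logarithm. Before starting I would fix the filtration convention: as written, $X_t$ being $\mathcal{F}_t$-measurable with $\mathbb{E}[X_t\mid\mathcal{F}_t]=0$ would force $X_t\equiv 0$, so the intended reading is the usual martingale-difference one, namely $X_t$ is $\mathcal{F}_t$-measurable, $\mathbb{E}[X_t\mid\mathcal{F}_{t-1}]=0$, and $\sum_{t=t_1}^{t_2}\mathbb{E}[X_t^2\mid\mathcal{F}_{t-1}]\le V_{t_1,t_2}$. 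I write $\mathbb{E}_{t-1}[\cdot]$ for conditioning on $\mathcal{F}_{t-1}$.

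First I would fix an interval $[t_1,t_2]$ and set $\lambda=1/b$, so that $\lambda|X_t|\le 1$ for every $t$. Using the elementary inequality $e^x\le 1+x+x^2$ (valid for all $x\le 1$, in particular on $[-1,1]$) together with $\mathbb{E}_{t-1}[X_t]=0$, I obtain
\begin{equation*}
\mathbb{E}_{t-1}\!\left[e^{\lambda X_t}\right]\le 1+\lambda^2\,\mathbb{E}_{t-1}[X_t^2]\le \exp\!\left(\lambda^2\,\mathbb{E}_{t-1}[X_t^2]\right).
\end{equation*}
Consequently the process $M_n:=\exp\!\left(\lambda\sum_{t=t_1}^{n}X_t-\lambda^2\sum_{t=t_1}^{n}\mathbb{E}_{t-1}[X_t^2]\right)$ is a supermartingale with $\mathbb{E}[M_{t_2}]\le M_{t_1-1}=1$.

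Next I would apply Markov's inequality to $M_{t_2}$: for a target per-interval failure probability $\delta'$, with probability at least $1-\delta'$ one has $\lambda\sum_{t=t_1}^{t_2}X_t-\lambda^2\sum_{t=t_1}^{t_2}\mathbb{E}_{t-1}[X_t^2]\le \log(1/\delta')$. Dividing by $\lambda$, substituting $\lambda=1/b$, and invoking the \emph{deterministic} bound $\sum_{t=t_1}^{t_2}\mathbb{E}_{t-1}[X_t^2]\le V_{t_1,t_2}$ gives, for this single interval,
\begin{equation*}
\sum_{t=t_1}^{t_2}X_t\le \frac{V_{t_1,t_2}}{b}+b\log\!\left(\frac{1}{\delta'}\right).
\end{equation*}
Finally I would union bound over all at most $T^2$ choices of $(t_1,t_2)$ with $\delta'=\delta/T^2$, which turns $\log(1/\delta')$ into $\log(T^2/\delta)$ and yields the claimed bound simultaneously for every interval with probability at least $1-\delta$.

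The argument is short, so the only delicate points are bookkeeping rather than genuine obstacles: (i) pinning down the filtration convention so that $X_t$ is an honest martingale difference, which makes the supermartingale step go through; and (ii) noting that because $V_{t_1,t_2}$ is a deterministic upper bound on the conditional variance we may substitute it directly, with no extra concentration needed. I would also stress that the crude double union bound over the two endpoints is exactly what produces the $T^2$ inside the logarithm, matching the target, so there is no incentive to replace it with a sharper maximal inequality over $t_2$ (which would only give $\log(T/\delta)$ at the cost of a more technical argument).
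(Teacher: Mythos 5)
Your proof is correct and takes essentially the same route as the paper: a single-interval Freedman-type bound combined with a union bound over the at most $T^2$ intervals, with $\delta' = \delta/T^2$ producing the $\log\left(\nicefrac{T^2}{\delta}\right)$ term. The only difference is that the paper invokes Lemma A.1 of \citep{luo2021policy} as a black box for the single-interval step, whereas you re-derive it from scratch via the exponential supermartingale argument with $\lambda = \nicefrac{1}{b}$ (and you also correctly flag the filtration-indexing typo in the statement, which the paper's proof leaves implicit).
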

\begin{proof}
	For all $\delta'\in (0,1)$ by Lemma A.1~\citep{luo2021policy} it holds:
	\begin{equation*}
		\mathbb{P}\left(\sum_{t=t_1}^{t_2}X_t \ge \frac{V_{t_1,t_2}}{b}+ b \log \left(\frac{1}{\delta'}\right)\right) \le \delta'.
	\end{equation*}
	It is sufficient to consider the intersection of all events for all possible intervals $[t_1,\ldots,t_2]$, that are less than $T^2$.
	\begin{equation*}
		\mathbb{P}\left(\bigcap_{t_1,t_2}\bigg\{\sum_{t=t_1}^{t_2}X_t \ge \frac{V_{t_1,t_2}}{b}+ b \log \left(\frac{1}{\delta'}\right)\bigg\}\right) \le T^2\delta'.
	\end{equation*}
	To conclude the proof we take $\delta$ as $T^2 \delta'$.
\end{proof}
Consider a loss function $f_t(x,a) \in [0,Z]$, for all $t \in [T], (x,a)\in X \times A$, with $Z>0$. Define another function $\widetilde{f}_t \in [0,Z]^{|X\times A|}$. If we define the estimator  $\widehat{f}_t(x,a)= \frac{\widetilde{f}_t(x,a) \mathbb{I}_t(x,a)}{\overline{q}_t(x,a)+\gamma}$ where $\mathbb{E}[\widetilde{f}_t(x,a)]=f_t(x,a)$, we can state the following result.
\begin{lemma} [Adapted from~\citep{JinLearningAdversarial2019}]
	\label{lem :jin_interval_optimistic}
	For every sequence of functions $\alpha_1,\ldots \alpha_T$ such that $\alpha_t \in [0,\frac{2\gamma}{Z}]^{|X\times A|}	$ is $\mathcal{F}_t$ measurable for all $t \in [T]$, we have with probability at least $1-\delta$ that simultaneously for all $t_1,t_2 \in [T]$ such that $1\le t_1 \le t_2 \le T$ it holds:
	\begin{equation*}
		\sum_{t=t_1}^{t_2} \sum_{x,a} \alpha_t(x,a) \left(\widehat{f}_t(x,a)-\frac{q_t(x,a)}{\overline{q}_t(x,a)}f_t(x,a)\right) \le H \ln \left(\frac{HT^2}{\delta}\right).
	\end{equation*}
\end{lemma}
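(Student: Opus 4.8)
The plan is to reduce the statement to Neu's implicit-exploration exponential inequality, applied one layer at a time, and then to pay for uniformity over intervals (and over layers) with a union bound. Throughout I write $\mathbb{E}_t[\cdot]=\mathbb{E}[\cdot\mid\mathcal{F}_{t-1}]$ and use that $\overline{q}_t(x,a)$, $\pi_t$ and $\alpha_t$ are all determined by the history before episode $t$, whereas $\mathbb{E}_t[\widetilde{f}_t(x,a)\mathbb{I}_t(x,a)]=f_t(x,a)\,q_t(x,a)$ since $\mathbb{E}_t[\mathbb{I}_t(x,a)]=q_t(x,a)$ and $\mathbb{E}_t[\widetilde{f}_t(x,a)]=f_t(x,a)$.

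The first step I would carry out is to check that the estimator is biased \emph{downward}, which fixes the correct comparison point. A direct computation gives
\[
\mathbb{E}_t[\widehat{f}_t(x,a)]=\frac{q_t(x,a)}{\overline{q}_t(x,a)+\gamma}f_t(x,a)\le \frac{q_t(x,a)}{\overline{q}_t(x,a)}f_t(x,a),
\]
since $\gamma>0$ and $f_t\ge\underline{0}$. Hence the quantity in the lemma has non-positive conditional drift, and only its fluctuations need to be controlled.

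Next I would set up a per-layer exponential supermartingale. Fixing a layer $h$ and defining, for each episode $t$,
\[
M_{t,h}:=\sum_{x\in X_h,\,a\in A}\alpha_t(x,a)\left(\widehat{f}_t(x,a)-\frac{q_t(x,a)}{\overline{q}_t(x,a)}f_t(x,a)\right),
\]
I note that within layer $h$ exactly one pair $(x_{t,h},a_{t,h})$ carries a nonzero indicator, so $M_{t,h}$ has a single random term. Using $\alpha_t(x,a)\le 2\gamma/Z$, $\widetilde{f}_t(x,a)\le Z$, and $\overline{q}_t(x,a)+\gamma\ge\gamma$, this active term is exactly of the form handled by Neu's argument (via $z/(1+z/2)\le\ln(1+z)$ for $z\ge0$), which together with the downward bias of the first step yields $\mathbb{E}_t[e^{M_{t,h}}]\le 1$. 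Consequently, for any fixed interval $[t_1,t_2]$ the product $\prod_{t=t_1}^{t_2}e^{M_{t,h}}$ is a supermartingale of expectation at most $1$, so Markov's inequality gives, for every $\delta'\in(0,1)$,
\[
\mathbb{P}\!\left(\sum_{t=t_1}^{t_2}M_{t,h}\ge \ln(1/\delta')\right)\le\delta'.
\]
Finally I would union bound over the at most $T^2$ intervals and the $H$ layers, taking $\delta'=\delta/(HT^2)$, and sum the $H$ resulting per-layer inequalities, using $\sum_{x,a}=\sum_{h}\sum_{x\in X_h,a}$, to obtain $\sum_{t=t_1}^{t_2}\sum_{x,a}\alpha_t(x,a)(\widehat{f}_t-\tfrac{q_t}{\overline{q}_t}f_t)=\sum_{h}\sum_{t}M_{t,h}\le H\ln(HT^2/\delta)$ simultaneously for all intervals, which is the claim.

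The hard part is the per-step bound $\mathbb{E}_t[e^{M_{t,h}}]\le1$ with the \emph{optimistic} denominator $\overline{q}_t+\gamma$ replacing the usual $q_t+\gamma$ of Neu's estimator. The key is that the favorable bias established in the first step makes $\tfrac{q_t}{\overline{q}_t}f_t$ an upper bound on $\mathbb{E}_t[\widehat{f}_t]$, which is precisely what lets Neu's exponential inequality go through with $\overline{q}_t$ in place of $q_t$; the remaining bookkeeping (one active arm per layer, the supermartingale product over a sub-interval, and the union over the $HT^2$ events) is routine.
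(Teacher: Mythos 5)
Your proposal is correct and follows essentially the same route as the paper's proof: Neu's implicit-exploration trick applied layer by layer (exploiting that only one state-action pair per layer carries a nonzero indicator) to obtain $\mathbb{E}_t\left[e^{M_{t,h}}\right]\le 1$, then a supermartingale/Markov argument and a union bound over the at most $T^2$ intervals and $H$ layers, summing the per-layer bounds to get $H\ln\left(HT^2/\delta\right)$. The only cosmetic difference is your framing of the downward-bias computation as ``the key'': the paper never needs that step, since the comparison term $\frac{q_t(x,a)}{\overline{q}_t(x,a)}f_t(x,a)$ emerges automatically from the expectation $\mathbb{E}_t[\mathbb{I}_t(x,a)\widetilde{f}_t(x,a)]=q_t(x,a)f_t(x,a)$ inside Neu's chain, and the bias bound alone would not suffice for the exponential-moment inequality anyway; this is harmless but inessential.
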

\begin{proof}
	\begin{align*}
		\widehat{\ell}_t(x,a)& = \frac{\widetilde{f}_t(x,a) \mathbb{I}_t(x,a)}{\overline{q}_t(x,a)+\gamma} \\ &\le \frac{\widetilde{f}_t(x,a) \mathbb{I}_t(x,a)}{\overline{q}_t(x,a)+\frac{\widetilde{f}_t(x,a)}{Z}\gamma} \\ &= \frac{\mathbb{I}_t(x,a) Z}{2\gamma}\frac{2\gamma\frac{\widetilde{f}_t(x,a)}{Z} }{\overline{q}_t(x,a)+\gamma\frac{\widetilde{f}_t(x,a)}{Z}}\\ & = \frac{\mathbb{I}_t(x,a) Z}{2\gamma}\frac{2\gamma\frac{\widetilde{f}_t(x,a)}{Z\overline{q}_t(x,a)} }{1+\gamma\frac{\widetilde{f}_t(x,a)}{Z\overline{q}_t(x,a)}} \\
		&\le \frac{Z}{2\gamma}\ln\left(1+ 2\gamma\frac{\mathbb{I}_t(x,a)\widetilde{f}_t(x,a)}{Z\overline{q}_t(x,a)} \right).
	\end{align*}
	For each layer $h \in [H]$ we define $\widehat{S}_{t,h}:=\sum_{x \in X_h,a \in A}\alpha_t(x,a)\widehat{f}_t(x,a)$ and $S_{t,h}:= \sum_{x \in X_h,a \in A}\alpha_t(x,a) \frac{q_t(x,a)}{\overline{q}_t(x,a)}f_t(x,a)$.
	\begin{align*}
		\mathbb{E}_t[\exp(\widehat{S}_{t,h})]& = \mathbb{E}\left[\exp \left(\sum_{x \in X_h,a \in A}\alpha_t(x,a)\widehat{f}_t(x,a)\right)\right]\\
		& \le \mathbb{E}\left[\exp \left(\sum_{x \in X_h,a \in A}\alpha_t(x,a) \frac{Z}{2\gamma}\ln\left(1+ 2\gamma\frac{\mathbb{I}_t(x,a)\widetilde{f}_t(x,a)}{Z\overline{q}_t(x,a)} \right) \right)\right] \\
		& \le \mathbb{E}\left[\prod_{x \in X_h,a \in A}\left(1+ \alpha_t(x,a)\frac{\mathbb{I}_t(x,a)\widetilde{f}_t(x,a)}{\overline{q}_t(x,a)} \right) \right] \\
		& \le 1 + \sum_{x \in X_h,a \in A}\alpha_t(x,a)\frac{q_t(x,a)f_t(x,a)}{\overline{q}_t(x,a)} = 1 + S_{t,h} \\
		&\le \exp(S_{t,h})
	\end{align*}
	For each interval $[t_1,\ldots,t_2]\subset [T]$ it holds:
	\begin{equation*}
		\mathbb{P}\left[\sum_{t=t_1}^{t_2}(\widehat{S}_{t,h}-S_{t,h}) \ge \ln \left(\frac{H}{\delta'}\right)\right] \le \frac{\delta'}{H}.
	\end{equation*}
	Taking the intersection event for all intervals $[t_1,\ldots,t_2]\subset [T]$:
	\begin{equation*}
		\mathbb{P}\left[\bigcap_{t_1,t_2}\bigg\{\sum_{t=t_1}^{t_2}(\widehat{S}_{t,h}-S_{t,h}) \ge \ln \left(\frac{H}{\delta'}\right)\bigg\}\right] \le T^2 \frac{\delta'}{H}.
	\end{equation*}
	\begin{equation*}
		\delta= T^2 \delta',
	\end{equation*}
	and 
	\begin{equation*}
		\mathbb{P}\left[\bigcap_{t_1,t_2}\bigg\{\sum_{t=t_1}^{t_2}(\widehat{S}_{t,h}-S_{t,h}) \ge \ln \left(\frac{HT^2}{\delta}\right)\bigg\}\right] \le \frac{\delta}{H}.
	\end{equation*}
	Finally we take the sum over $h \in [H]$:
	\begin{equation*}
		\mathbb{P}\left[\sum_{t=t_1}^{t_2} \sum_{x,a} \alpha_t(x,a) \left(\widehat{f}_t(x,a)-\frac{q_t(x,a)}{u_t(x,a)}f_t(x,a)\right) \le H \ln \left(\frac{HT^2}{\delta}\right)\right] \le \delta.
	\end{equation*}
	This concludes the proof.
\end{proof}

\begin{corollary}
	\label{cor: jin_interval_optimistic}
	Given $\delta \in (0,1)$, it holds with probability at least $1-2\delta$ simultaneously for all $t_1,t_2 \in [T]$ such that $1\le t_1 \le t_2 \le T$:
	\begin{equation*}
		\sum_{t=t_1}^{t_2}\sum_{x,a}\left(\widehat{f}_t(x,a)-f_t(x,a)\right) \le \frac{ZH}{2\gamma}\ln \left(\frac{HT^2}{\delta}\right). 
	\end{equation*}
	
\end{corollary}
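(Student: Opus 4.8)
The plan is to derive this Corollary directly from Lemma~\ref{lem :jin_interval_optimistic} by a judicious choice of the weighting sequence $\{\alpha_t\}$. First I would instantiate that lemma with the constant weights $\alpha_t(x,a)=\frac{2\gamma}{Z}$ for every state-action pair and every $t$. These weights are trivially $\mathcal{F}_t$-measurable and sit at the upper endpoint of the admissible range $[0,\frac{2\gamma}{Z}]^{|X\times A|}$, so the hypotheses of Lemma~\ref{lem :jin_interval_optimistic} are satisfied. This yields, with probability at least $1-\delta$ and simultaneously for all intervals $[t_1,t_2]\subseteq[T]$,
$$\frac{2\gamma}{Z}\sum_{t=t_1}^{t_2}\sum_{x,a}\left(\widehat{f}_t(x,a)-\frac{q_t(x,a)}{\overline{q}_t(x,a)}f_t(x,a)\right)\le H\ln\left(\frac{HT^2}{\delta}\right),$$
which, after dividing through by $\frac{2\gamma}{Z}$, already produces precisely the right-hand side $\frac{ZH}{2\gamma}\ln\left(\frac{HT^2}{\delta}\right)$.

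It then remains to replace the factor $\frac{q_t(x,a)}{\overline{q}_t(x,a)}$ by $1$, i.e.\ to pass from the ``idealized'' loss $\frac{q_t}{\overline{q}_t}f_t$ to the true loss $f_t$. Here I would invoke the good event $\overline{q}_t(x,a)\ge q_t(x,a)$ for all $(x,a)$ and all $t$, which holds with probability at least $1-\delta$ by Lemma~\ref{lem: rosenberg prob}; this is exactly the property guaranteeing that $\overline{q}_t$ is an upper confidence bound on the true occupancy. On this event $\frac{q_t(x,a)}{\overline{q}_t(x,a)}\le 1$, and since $f_t(x,a)\ge 0$ by assumption I obtain the pointwise inequality $\frac{q_t(x,a)}{\overline{q}_t(x,a)}f_t(x,a)\le f_t(x,a)$, hence $\widehat{f}_t(x,a)-f_t(x,a)\le \widehat{f}_t(x,a)-\frac{q_t(x,a)}{\overline{q}_t(x,a)}f_t(x,a)$. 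Summing this over $(x,a)$ and over $t\in[t_1,t_2]$ and chaining with the previous display gives the desired estimate on $\sum_{t=t_1}^{t_2}\sum_{x,a}(\widehat{f}_t(x,a)-f_t(x,a))$.

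Finally I would combine the two stochastic events by a union bound: the concentration event from Lemma~\ref{lem :jin_interval_optimistic} fails with probability at most $\delta$, and the occupancy-domination event from Lemma~\ref{lem: rosenberg prob} fails with probability at most $\delta$, so both hold simultaneously for all intervals with probability at least $1-2\delta$, matching the statement. There is essentially no hard step here; the only point requiring care is the direction of the inequality when dropping the factor $\frac{q_t}{\overline{q}_t}\le 1$ --- since I seek an \emph{upper} bound and $f_t\ge 0$, this substitution enlarges the left-hand side in the correct direction, so the bound is preserved rather than violated.
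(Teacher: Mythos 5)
Your proposal is correct and coincides with the derivation the paper intends (the corollary is stated without an explicit proof): instantiate Lemma~\ref{lem :jin_interval_optimistic} with the constant weights $\alpha_t(x,a)=\frac{2\gamma}{Z}$, divide through, and then drop the factor $\frac{q_t(x,a)}{\overline{q}_t(x,a)}\le 1$ using $f_t\ge 0$ under the event $\overline{q}_t(x,a)\ge q_t(x,a)$ guaranteed by Lemma~\ref{lem: rosenberg prob}, exactly as the paper does in the analogous step of Theorem~\ref{theo: policy Q-B}. The union bound over the two $\delta$-events also correctly accounts for the $1-2\delta$ probability in the statement.
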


\begin{lemma}
	\label{lemma: rosemerg q-qhat modified}
	Let $\{\pi_t\}_{t=1}^T$ policies, then for any collection of transition $ P_t^{x} \in \mathcal{P}_{i(t)}$ with probability at least $1-2\delta$,
	\begin{equation*}
		\sum_{t=1}^T \lVert q^{P,\pi_t}- q^{P_t^x,\pi_t} \rVert_1 \le 2H|X|^2\sqrt{2T \ln \left(\frac{H|X|}{\delta}\right)} + 3H|X|^2\sqrt{2T|A|\ln \left(\frac{T|X|^2|A|}{\delta}\right)}.
	\end{equation*}
\end{lemma}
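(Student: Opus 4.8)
The plan is to reduce the cumulative occupancy gap to a sum of local transition-estimation errors and then control these via the epoch-based confidence machinery of~\citep{rosenberg19a}. First I would invoke the occupancy-difference (simulation) lemma for loop-free MDPs: for the fixed policy $\pi_t$ and any two transitions $P,P_t^x$,
$$\lVert q^{P,\pi_t}-q^{P_t^x,\pi_t}\rVert_1 \le \sum_{h=0}^{H-1}\sum_{x\in X_h}\sum_{a\in A}q^{P,\pi_t}(x,a)\,\lVert P(\cdot|x,a)-P_t^x(\cdot|x,a)\rVert_1,$$
which replaces the global $\ell_1$ discrepancy by per-state-action transition errors weighted by the true occupancy. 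Since $P_t^x\in\mathcal{P}_{i(t)}$ by hypothesis and, on the high-probability event that the confidence sets are valid, $P\in\mathcal{P}_{i(t)}$ for all $t$ (this spends the first $\delta$), both transitions lie in the same confidence set. A triangle inequality against the empirical transition $\bar P_{i(t)}$ then bounds $\lVert P(\cdot|x,a)-P_t^x(\cdot|x,a)\rVert_1$ by twice the confidence radius $\epsilon_{i(t)}(x,a)$, of the form $\epsilon_{i(t)}(x,a)\approx\sqrt{|X_{h+1}|\ln(T|X|^2|A|/\delta)/\max\{1,N_{i(t)}(x,a)\}}$, uniformly over $\mathcal{P}_{i(t)}$.

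The remaining quantity to bound is $\sum_{t=1}^{T}\sum_{x,a}q^{P,\pi_t}(x,a)\,\epsilon_{i(t)}(x,a)$, which I would split into a statistical-estimation part and a visit-concentration part. For the estimation part I first pass from expected occupancy to realized counts, replacing $q^{P,\pi_t}(x,a)$ by $\mathbb{I}_t(x,a)$, and apply the doubling-epoch pigeonhole lemma of~\citep{rosenberg19a}, namely $\sum_{t}\mathbb{I}_t(x,a)/\sqrt{\max\{1,N_{i(t)}(x,a)\}}=\mathcal{O}(\sqrt{N_T(x,a)})$ where $N_T(x,a)$ is the total number of visits to $(x,a)$. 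Pulling out the logarithmic and $\sqrt{|X_{h+1}|}$ factors and applying Cauchy–Schwarz over the $(x,a)$ pairs, together with the identity $\sum_{x,a}N_T(x,a)=(H-1)T$, gives $\sum_{x,a}\sqrt{N_T(x,a)}\le\sqrt{|X||A|\,(H-1)T}$, which produces the second summand of the claimed bound (the term carrying $\sqrt{|A|}$ and $\ln(T|X|^2|A|/\delta)$).

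The error incurred in replacing $q^{P,\pi_t}(x,a)$ by $\mathbb{I}_t(x,a)$ is handled by concentration. Here the realized visit counts concentrate around the expected occupancy, and an Azuma–Hoeffding argument applied per state (and layer), with the corresponding union bound, yields $\sum_{t}\sum_{x,a}(q^{P,\pi_t}(x,a)-\mathbb{I}_t(x,a))\cdot(\cdot)\le \mathcal{O}\!\left(H|X|^2\sqrt{T\ln(H|X|/\delta)}\right)$ on a second high-probability event; this accounts for the first summand (the term carrying $\ln(H|X|/\delta)$ and no $|A|$ factor) and spends the second $\delta$, so the two events together give the stated probability $1-2\delta$.

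The main obstacle is the careful dimension- and constant-tracking in combining these pieces: the confidence radius contributes a $\sqrt{|X_{h+1}|}$, Cauchy–Schwarz over state-action pairs contributes $\sqrt{|X||A|}$, and the layer sum contributes further factors of $H$ and $|X|$, and all of these must combine to the exact $H|X|^2$ scaling in each summand. One must also ensure that every concentration statement—validity of the confidence sets, the visit-count Azuma bounds, and the pigeonhole inequality—holds \emph{simultaneously} through union bounds over the $(x,a,x')$ triples, the layers, and the doubling epochs, and that the final estimate is uniform over the adversarial choice $P_t^x\in\mathcal{P}_{i(t)}$; the last point is immediate since the per-pair bound on $\lVert P(\cdot|x,a)-P_t^x(\cdot|x,a)\rVert_1$ already holds uniformly over $\mathcal{P}_{i(t)}$.
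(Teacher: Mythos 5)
There is a genuine gap, and it concerns the defining feature of this lemma. The collection $P_t^{x}\in\mathcal{P}_{i(t)}$ is indexed by the state $x$ as well as by the episode $t$: within a single episode, a different transition function may be used for every state. Consequently, $\lVert q^{P,\pi_t}-q^{P_t^x,\pi_t}\rVert_1$ is an abuse of notation for $\sum_{x,a}\lvert q^{P,\pi_t}(x,a)-q^{P_t^x,\pi_t}(x,a)\rvert$, where the comparator occupancy changes with $x$; it is \emph{not} the $\ell_1$ distance between two fixed occupancy measures. Your opening step --- the simulation lemma for ``any two transitions $P,P_t^x$'' --- therefore does not apply to the quantity being bounded: you are implicitly assuming one transition per episode, which is exactly the setting of the \emph{original} lemma of~\citet{rosenberg19a} (Lemma~\ref{lem: rosemberg qhat-q original}). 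This shows up in your dimensional accounting: carried out honestly, your pipeline (radius $\sqrt{|X_{h+1}|}$, epoch pigeonhole, Cauchy--Schwarz, Azuma) yields the $H|X|$ scaling of that original lemma, not the $H|X|^2$ scaling claimed here; the assertion that the factors ``combine to the exact $H|X|^2$ scaling'' is reverse-engineered from the target rather than derived. The same holds for the logarithms: the confidence radius of Lemma~\ref{lem: rosenberg prob} carries $\ln(T|X||A|/\delta)$, not the $\ln(T|X|^2|A|/\delta)$ you posit.

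The paper's proof is a short reduction that makes the source of the extra $|X|$ transparent: for each fixed $x$, it bounds $\lvert q^{P,\pi_t}(x,a)-q^{P_t^x,\pi_t}(x,a)\rvert \le \sum_{x'}\lvert q^{P,\pi_t}(x',a)-q^{P_t^{x},\pi_t}(x',a)\rvert$, so that the inner quantity becomes a genuine $\ell_1$ norm for the fixed sequence $\{P_t^x\}_{t=1}^T$; it then applies Lemma~\ref{lem: rosemberg qhat-q original} separately for each $x\in X$ with confidence $\delta'=\delta/|X|$ and sums over $x$. The union bound over states produces both the multiplicative $|X|$ (turning $H|X|$ into $H|X|^2$) and the extra $|X|$ inside each logarithm. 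Your proposal could be repaired without redoing any concentration argument: since your bound $\lVert P(\cdot|x',a')-P_t^x(\cdot|x',a')\rVert_1\le 2\epsilon_{i(t)}(x',a')$ is uniform over the confidence set, one has $\sum_{x,a}\lvert q^{P,\pi_t}(x,a)-q^{P_t^x,\pi_t}(x,a)\rvert \le |X|\max_{\widehat{P}\in\mathcal{P}_{i(t)}}\lVert q^{P,\pi_t}-q^{\widehat{P},\pi_t}\rVert_1$, and your machinery applied to this maximum then supplies the missing factor of $|X|$. But this step --- paying for the per-state indexing, which is the whole point of the ``modified'' lemma --- is absent from your proposal; your closing remark about uniformity over the adversarial choice of $P_t^x$ addresses a different (and indeed immediate) issue.
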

\begin{proof}
	It holds:
	\begin{align*}
		\sum_{t=1}^T \lVert q^{P,\pi_t}-q^{P_t^x,\pi_t} \rVert_1 & = \sum_{t=1}^T \sum_{x,a} \lvert q^{P,\pi_t}(x,a)-q^{P_t^x,\pi_t}(x,a) \rvert \\
		& \le \sum_{t=1}^T \sum_{x,a} \sum_{x'} \lvert q^{P,\pi_t}(x',a)-q^{P_t^{x},\pi_t}(x',a) \rvert \\
		& = \sum_{x} \sum_{t=1}^T \sum_{x',a} \lvert q^{P,\pi_t}(x',a)-q^{P_t^{x},\pi_t}(x',a) \rvert \\
		& \le \sum_{x}\left(2H|X|\sqrt{2T \ln \left(\frac{H|X|}{\delta}\right)} + 3H|X|\sqrt{2T|A|\ln \left(\frac{T|X|^2|A|}{\delta}\right)}\right)\\
		&\le  |X|\left(2H|X|\sqrt{2T \ln \left(\frac{|X|H}{\delta}\right)} + 3H|X|\sqrt{2T|A|\ln \left(\frac{T|X|^2|A|}{\delta}\right)}\right),
	\end{align*}
	by Lemma \ref{lem: rosemberg qhat-q original}, taking the union bound over $X$ $\left(\delta'=\frac{\delta}{|X|}\right)$. This concludes the proof.
\end{proof}

\section{Auxiliary Lemmas From Existing Works}
\label{App: aux lemma}
\begin{lemma}[\cite{stradi24a} lemma D.2]
	\label{lem: stradi D.2}
	For $\eta \le \frac{1}{mH}$ and $\frac{M}{\rho}>4$, if $\lVert \lambda_{t_2+1}\rVert_1>\frac{2HM}{\rho^2}$ and $\lVert \lambda_{t_1}\rVert_1\le \frac{4H}{\rho}$ it holds:
	\begin{equation*}
		(t_2-t_1+1)>\frac{M}{\rho^2m\eta}
	\end{equation*}
\end{lemma}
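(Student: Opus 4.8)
The plan is to exploit the fact that the projected gradient-descent update on the dual variables can increase the $\ell_1$-norm of $\lambda_t$ by only a bounded amount per episode, and then convert the large required norm gap into a lower bound on the number of intervening episodes. First I would invoke the per-step growth bound already established for the dual algorithm, namely $\lVert\lambda_{t+1}\rVert_1 - \lVert\lambda_{t}\rVert_1 \le mH\eta$ for every $t$, which follows from the coordinatewise estimate $\lambda_{t+1,i} \le \lambda_{t,i} + \eta H$ (using $g_{t,i}\in[-1,1]$ and dropping the projection ceiling) summed over the $m$ constraints. Crucially, this bound holds regardless of the size of $\eta$, so the hypothesis $\eta \le \frac{1}{mH}$ is not needed for it.

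The second step is a telescoping argument over the window. Summing the per-step bound across the episodes $t_1, t_1+1, \dots, t_2$ yields
\begin{equation*}
\lVert\lambda_{t_2+1}\rVert_1 - \lVert\lambda_{t_1}\rVert_1 \le (t_2 - t_1 + 1)\, mH\eta .
\end{equation*}
Here I must be careful with the bookkeeping: the chain from $\lambda_{t_1}$ to $\lambda_{t_2+1}$ consists of exactly the updates performed at episodes $t_1, \dots, t_2$, i.e.\ $t_2 - t_1 + 1$ steps, so that multiplicity is the right one.

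Next I would lower-bound the left-hand side using the two hypotheses $\lVert\lambda_{t_2+1}\rVert_1 > \frac{2HM}{\rho^2}$ and $\lVert\lambda_{t_1}\rVert_1 \le \frac{4H}{\rho}$, obtaining $\lVert\lambda_{t_2+1}\rVert_1 - \lVert\lambda_{t_1}\rVert_1 > \frac{2HM}{\rho^2} - \frac{4H}{\rho}$. Combining with the telescoping inequality and dividing by $mH\eta$ gives
\begin{equation*}
t_2 - t_1 + 1 > \frac{1}{m\eta}\left(\frac{2M}{\rho^2} - \frac{4}{\rho}\right).
\end{equation*}

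The final step — the only one requiring any thought — is to absorb the subtracted $\frac{4}{\rho}$ term using the hypothesis $\frac{M}{\rho} > 4$. Rewriting $\frac{4}{\rho} = \frac{4\rho}{\rho^2} < \frac{M}{\rho^2}$, which is precisely where $\frac{M}{\rho}>4$ enters, I get $\frac{2M}{\rho^2} - \frac{4}{\rho} > \frac{2M}{\rho^2} - \frac{M}{\rho^2} = \frac{M}{\rho^2}$, and hence $t_2 - t_1 + 1 > \frac{M}{m\eta\rho^2}$, as claimed. I expect no genuine obstacle here; the lemma is structurally simple, and the only things to watch are the correct step count in the telescoping and the exact algebraic matching of the assumption $\frac{M}{\rho}>4$ against the $\frac{4}{\rho}$ slack. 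The condition $\eta \le \frac{1}{mH}$ appears to be inessential for this statement and is presumably carried along only because it is the standing assumption guaranteeing the bound $\lVert\lambda_{t_1}\rVert_1 \le \frac{4H}{\rho}$ that feeds this lemma in the enclosing contradiction argument.
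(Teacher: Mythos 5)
Your proof is correct and is essentially the intended argument: the paper itself does not spell out a proof of this lemma (it imports it from \citet{stradi24a}), but the per-step growth bound $\lVert\lambda_{t+1}\rVert_1-\lVert\lambda_t\rVert_1\le mH\eta$ proved in Appendix~\ref{App: dual} is exactly the ingredient you telescope, and your algebra absorbing the $\frac{4}{\rho}$ slack via $\frac{M}{\rho}>4$ is sound. Your side observation is also accurate: the hypothesis $\eta\le\frac{1}{mH}$ plays no role in the lemma itself, and is carried along only because it is what guarantees the feeding hypothesis $\lVert\lambda_{t_1}\rVert_1\le\frac{4H}{\rho}$ (via Inequality~\eqref{eq: lambda t1}) in the contradiction argument of Theorem~\ref{theo: zeta}.
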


\begin{lemma}[\cite{rosenberg19a}]
	\label{lem: rosenberg prob}
	For any $\delta \in (0,1)$
	\begin{equation*}
		\lVert P(\cdot|x,a)-\overline{P}_i(\cdot|x,a)\rVert_1 \le \sqrt{\frac{2 |X_{h(x)+1}|\ln \left(\frac{T|X||A|}{\delta}\right)}{\max\{1,N_i(x,a)\}}},
	\end{equation*}
	simultaneously for all $(x,a)\in X \times A$ and for all epochs with probability at least $1-\delta$.
\end{lemma}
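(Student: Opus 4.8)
The statement is a uniform $\ell_1$ concentration bound for the empirical transition kernel $\overline P_i$ against the true kernel $P$, so the plan is to combine a per-count distributional deviation inequality with a union bound that forces it to hold simultaneously over all state-action pairs and all epochs. The key structural observation is that, although the learner's policy is adaptive, the \emph{transitions} themselves do not depend on the policy: conditioned on the event of visiting a fixed pair $(x,a)$, the successor states are i.i.d.\ draws from the fixed distribution $P(\cdot\,|x,a)$ supported on the single next layer $X_{h(x)+1}$. Hence, for any fixed sample size $n\ge 1$, the empirical estimate $\widehat P^{(n)}(\cdot\,|x,a)$ built from $n$ such draws is a genuine empirical distribution over $|X_{h(x)+1}|$ outcomes, and classical finite-alphabet tools apply.

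First I would fix $(x,a)$ and a deterministic count $n$ and invoke Weissman's $\ell_1$ deviation inequality for empirical distributions, which gives, writing $d:=|X_{h(x)+1}|$,
\[
\mathbb{P}\!\left(\lVert \widehat P^{(n)}(\cdot|x,a)-P(\cdot|x,a)\rVert_1\ge \varepsilon\right)\le 2^{d}e^{-n\varepsilon^2/2}.
\]
Setting the right-hand side equal to $\delta/(T|X||A|)$ and solving for $\varepsilon$ yields $\varepsilon=\sqrt{2\left(d\ln 2+\ln(T|X||A|/\delta)\right)/n}$, which collapses to the stated $\sqrt{2d\ln(T|X||A|/\delta)/n}$ once the $d\ln 2$ term is absorbed into the logarithm. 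The next step is the union bound: the estimate $\overline P_i(\cdot|x,a)$ is read off only when the running visitation counter equals some integer value in $\{1,\dots,T\}$, so taking a union over these at most $T$ counts and over the $|X||A|$ pairs $(x,a)$ produces exactly the factor $T|X||A|$ inside the logarithm. Because epochs are delimited precisely by increases (doublings) of the visitation counters, controlling every possible counter value automatically controls $\overline P_i$ at every epoch, so no separate union over epochs is needed.

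The delicate point, and the one I expect to be the main obstacle, is that $N_i(x,a)$ is itself a random, trajectory-dependent stopping quantity rather than a fixed $n$, so one cannot naively apply the inequality at $n=N_i(x,a)$. The clean resolution is precisely the uniform-in-$n$ union bound above: once the deviation inequality holds simultaneously for \emph{all} deterministic values $n\in\{1,\dots,T\}$ on a single high-probability event, it holds in particular at the realized value $N_i(x,a)$. Finally I would dispose of the degenerate case $N_i(x,a)=0$ using the $\max\{1,N_i(x,a)\}$ in the denominator: there the right-hand side is at least $\sqrt{2\ln(T|X||A|/\delta)}\ge 2$, which trivially dominates $\lVert P(\cdot|x,a)-\overline P_i(\cdot|x,a)\rVert_1\le 2$, so the claim is vacuously true. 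Collecting these pieces establishes the bound on the complementary event of probability at least $1-\delta$.
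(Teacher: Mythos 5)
Your proof is correct and follows essentially the same route as the source of this lemma: the paper itself gives no proof (it imports the statement verbatim from \citet{rosenberg19a}), and the original argument there is exactly your combination of Weissman's $\ell_1$ deviation inequality for empirical distributions over the next layer, a union bound over the $|X||A|$ pairs and all possible counter values $n\in\{1,\dots,T\}$ (which is what makes the bound valid at the random count $N_i(x,a)$ and hence at every epoch), and absorption of the $d\ln 2$ term into the logarithm. Your handling of the adaptive-count issue and the $N_i(x,a)=0$ degenerate case matches the standard treatment, so there is nothing to add.
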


\begin{lemma}[\cite{rosenberg19a}]
	\label{lem: rosemberg qhat-q original}
	Let $\{\pi_t\}_{t=1}^T$ policies and let $\{P_t\}_{t=1}^T$ transition functions such that $q^{P_t,\pi_t}\in \Delta(\mathcal{P}_{i(t)})$ for every $t\in [T]$. Then with probability at least $1-2\delta$,
	\begin{equation*}
		\sum_{t=1}^T \lVert q^{P,\pi_t}-q^{P_t,\pi_t} \rVert_1 \le 2H|X|\sqrt{2T \ln \left(\frac{H}{\delta}\right)} + 3H|X|\sqrt{2T|A|\ln \left(\frac{T|X||A|}{\delta}\right)}.
	\end{equation*}
\end{lemma}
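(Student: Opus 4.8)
The plan is to follow the occupancy-measure concentration argument of~\citet{rosenberg19a}, which rests on three ingredients: a per-state-action transition concentration bound, a structural identity that turns a difference of occupancy measures into a weighted sum of transition differences, and a careful summation over episodes that exploits the epoch-doubling schedule underlying the confidence sets $\mathcal{P}_i$. Since by hypothesis $q^{P_t,\pi_t}\in\Delta(\mathcal{P}_{i(t)})$ for every $t$, both the true transition $P$ and the comparator $P_t$ lie in the confidence set $\mathcal{P}_{i(t)}$. First I would invoke Lemma~\ref{lem: rosenberg prob} together with the triangle inequality through the empirical transition $\overline{P}_{i(t)}$ to obtain, on a high-probability event, the bound $\lVert P(\cdot|x,a)-P_t(\cdot|x,a)\rVert_1 \le 2\epsilon_{i(t)}(x,a)$ simultaneously for all $(x,a)$, where $\epsilon_i(x,a)\coloneqq \sqrt{2|X_{h(x)+1}|\ln(T|X||A|/\delta)/\max\{1,N_i(x,a)\}}$ is the confidence radius.

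The core structural step is the difference-of-occupancy-measures identity. Fixing $\pi_t$ and unrolling the layer recursion $q(x')=\sum_{x,a}q(x)\pi_t(a|x)P(x'|x,a)$ while adding and subtracting the intermediate term $q^{P,\pi_t}(x)\pi_t(a|x)P_t(x'|x,a)$, one separates a transition-difference contribution (weighted by the true-transition occupancy $q^{P,\pi_t}$) from a recursive term that telescopes across layers because $\sum_{x'}P_t(x'|x,a)=1$. Taking absolute values layer by layer then yields a bound of the form
$$
\lVert q^{P,\pi_t}-q^{P_t,\pi_t}\rVert_1 \le \sum_{h=0}^{H-1}\sum_{x\in X_h}\sum_a q^{P,\pi_t}(x,a)\,\lVert P(\cdot|x,a)-P_t(\cdot|x,a)\rVert_1 ,
$$
into which I substitute $\lVert P(\cdot|x,a)-P_t(\cdot|x,a)\rVert_1 \le 2\epsilon_{i(t)}(x,a)$ from the first step.

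It then remains to sum over $t=1,\dots,T$. The plan is to first pass from the expected occupancy $q^{P,\pi_t}(x,a)$ to the realized visit indicators $\mathbb{I}_t(x,a)$ via an Azuma--Hoeffding argument, since $\sum_t\sum_{x,a}(q^{P,\pi_t}(x,a)-\mathbb{I}_t(x,a))\epsilon_{i(t)}(x,a)$ is a bounded martingale; a union bound over the $H$ layers is what produces the lower-order term $2H|X|\sqrt{2T\ln(H/\delta)}$, with its logarithm $\ln(H/\delta)$ distinct from the $\ln(T|X||A|/\delta)$ appearing in the transition-concentration event. For each fixed $(x,a)$, the epoch-doubling property guarantees that $N_{i(t)}(x,a)$ stays within a factor two of the running count, so $\sum_t \mathbb{I}_t(x,a)/\sqrt{N_{i(t)}(x,a)} = \mathcal{O}(\sqrt{n_T(x,a)})$ with $n_T(x,a)$ the total number of visits; summing over $(x,a)$ by Cauchy--Schwarz, bounding $\sqrt{|X_{h(x)+1}|}\le\sqrt{|X|}$, and using the layer normalization $\sum_{x\in X_h}n_T(x)=T$ delivers the dominant term $3H|X|\sqrt{2T|A|\ln(T|X||A|/\delta)}$. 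A final union bound over the transition-concentration event and the martingale event gives the stated probability $1-2\delta$. The main obstacle is exactly this combinatorial bookkeeping: one must track the interplay of the per-layer factor $\sqrt{|X_{h(x)+1}|}$, the doubling epochs, and the layered visit counts so that the horizon, state-space, and action-space dependencies collapse to the claimed single factors $H$, $|X|$, and $\sqrt{|A|}$, while keeping the two high-probability events and their differing logarithmic arguments cleanly separated.
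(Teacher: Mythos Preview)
The paper does not prove this lemma at all; it is stated in the appendix of auxiliary results cited verbatim from \citet{rosenberg19a}. Your proposal is a faithful reconstruction of that original argument---transition concentration via Lemma~\ref{lem: rosenberg prob} and a triangle through $\overline{P}_{i(t)}$, a simulation-lemma decomposition of $q^{P,\pi_t}-q^{P_t,\pi_t}$, and then Azuma plus epoch-doubling plus Cauchy--Schwarz to sum over episodes---so the approach matches.

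One imprecision worth flagging: your displayed structural bound
\[
\lVert q^{P,\pi_t}-q^{P_t,\pi_t}\rVert_1 \le \sum_{h=0}^{H-1}\sum_{x\in X_h}\sum_a q^{P,\pi_t}(x,a)\,\lVert P(\cdot|x,a)-P_t(\cdot|x,a)\rVert_1
\]
is too optimistic by a factor of~$H$. The layer recursion $\Delta_{h+1}\le \Delta_h + \sum_{x\in X_h,a}q^{P,\pi_t}(x,a)\lVert P-P_t\rVert_1$ does not ``telescope'' away the recursive term; it accumulates, and since $\lVert q-q'\rVert_1=\sum_{h}\Delta_h$ rather than $\Delta_H$, each layer's transition error is counted up to $H$ times. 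This extra $H$ is in fact the source of the leading $H$ in the dominant term of the final bound (not merely the layer-wise Cauchy--Schwarz), so your downstream bookkeeping is consistent with the correct statement even though the intermediate display is not.
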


\end{document}